\documentclass[letterpaper]{article} 
\usepackage{aaai2026}  
\usepackage{times}  
\usepackage{helvet}  
\usepackage{courier}  
\usepackage[hyphens]{url}  
\usepackage{graphicx} 
\usepackage{natbib}  
\usepackage{caption} 
\usepackage{algorithm}
\usepackage{algorithmic}

\usepackage{newfloat}
\usepackage{listings}
\DeclareCaptionStyle{ruled}{labelfont=normalfont,labelsep=colon,strut=off} 
\floatstyle{ruled}
\newfloat{listing}{tb}{lst}{}
\floatname{listing}{Listing}
\usepackage{subcaption}
\usepackage{booktabs} 

\usepackage{latexsym}
\usepackage{amssymb} 
\usepackage{amsmath}
\usepackage{mathtools}       
\usepackage{amsthm}          
\usepackage{bm}              

\providecommand{\argmax}{\operatornamewithlimits{argmax}} 
\DeclareMathOperator{\Tr}{Tr}     
\DeclareMathOperator{\Cov}{Cov}   
\providecommand{\R}{\mathbb{R}} 
\providecommand{\E}{\mathbb{E}} 
\providecommand{\T}{\ensuremath{^{\mathrm{T}}}} 
\providecommand{\rmd}{\mathrm{d}} 

\renewcommand{\geq}{\geqslant} 

\newtheorem{theorem}{Theorem}
\newtheorem{lemma}{Lemma}
\newtheorem{proposition}{Proposition}

\newtheorem{assumption}{Assumption}

\usepackage{comment}

\nocopyright
\title{Sample-Efficient Hypergradient Estimation\\for Decentralized Bi-Level Reinforcement Learning}
\author {
    Mikoto Kudo\textsuperscript{\rm 1,2},
    Takumi Tanabe\textsuperscript{\rm 3},
    Akifumi Wachi\textsuperscript{\rm 3},
    Youhei Akimoto\textsuperscript{\rm 1,2}
}
\affiliations {
    \textsuperscript{\rm 1}University of Tsukuba,\\
    \textsuperscript{\rm 2}RIKEN AIP,\\ 
    \textsuperscript{\rm 3}LY Corporation\\
    mikoto@bbo.cs.tsukuba.ac.jp,
    \{akifumi.wachi, takumi.tanabe\}@lycorp.co.jp, 
    akimoto@cs.tsukuba.ac.jp
}

\begin{document}

\maketitle

\begin{abstract}
Many strategic decision-making problems, such as environment design for warehouse robots, can be naturally formulated as bi-level reinforcement learning (RL), where a leader agent optimizes its objective while a follower solves a Markov decision process (MDP) conditioned on the leader's decisions.
In many situations, a fundamental challenge arises when the leader cannot intervene in the follower's optimization process; it can only observe the optimization outcome. 
We address this decentralized setting by deriving the hypergradient of the leader's objective, i.e., the gradient of the leader's strategy that accounts for changes in the follower's optimal policy.
Unlike prior hypergradient-based methods that require extensive data for repeated state visits or rely on gradient estimators whose complexity can increase substantially with the high-dimensional leader's decision space, we leverage the Boltzmann covariance trick to derive an alternative hypergradient formulation. 
This enables efficient hypergradient estimation solely from interaction samples, even when the leader's decision space is high-dimensional.
Additionally, to our knowledge, this is the first method that enables hypergradient-based optimization for 2-player Markov games in decentralized settings. 
Experiments highlight the impact of hypergradient updates and demonstrate our method's effectiveness in both discrete and continuous state tasks.
\end{abstract}

\begin{links}
    \link{Code}{https://github.com/akimotolab/BC-HG}
\end{links}


\section{Introduction}

Bi-level reinforcement learning (RL) is a hierarchical framework capable of representing complex objective structures.
In bi-level RL, the lower-level (i.e., \emph{follower}) typically involves policy optimization within a \emph{Configurable MDP}~\citep{metelli18a}, which is an MDP parameterized by external variables $\theta \in \Theta$.
Meanwhile, the objective function of the upper-level (i.e., \emph{leader}) depends on the lower-level optimal policy, creating a nested dependency structure as follows:
\begin{align*}
    &\max_{\theta \in \Theta, g^{\dagger} \in \mathcal{G}}\ J_L(\theta, g^{\dagger})&\text{(Leader)}\\
    &\mathrm{s.t.}\ g^{\dagger} \in \mathcal{G}^\dag(\theta) := \argmax_{g \in \mathcal{G}} J_F(\theta, g),&\text{(Follower)}
\end{align*}
where $\Theta$ is the set of possible parameters in the upper-level, and $\mathcal{G}$ is the set of all possible policies in the lower-level.
The set of lower-level optimal policies, $\mathcal{G}^\dag(\theta)$, depends on the upper-level variable and is called the \emph{best response} to $\theta$.

Due to its versatility in modeling hierarchical dependencies, bi-level RL has been applied to many domains, including reinforcement learning from human feedback (RLHF)~\citep{Chakraborty2023-gw,Shen2024-ma,Yang2025-lr}, reward shaping~\citep{Shen2024-ma,Yang2025-lr}, safe RL~\citep{Zhi2025-s}, model-based RL~\citep{Rajeswaran2020-gv}, and advertising planning~\citep{Muneeb2019-no}. 
Yet an important challenge remains under-explored: decentralized learning, where the leader cannot intervene in the follower's optimization process and must treat the follower algorithm as fixed.
This occurs when followers rely on pre-defined standard algorithms that are impractical or undesirable to modify. 
For example, in environment design for autonomous warehouse robots, the designer optimizes environment parameters while leaving each robot's built-in adaptation algorithm (e.g., LQR) unchanged.
The technical challenge is then to estimate leader updates that account for best-response shifts without controlling follower optimization.

To address this challenge, we propose a method for decentralized bi-level RL, where the leader is agnostic to the follower's algorithm and cannot control follower updates.
We study two settings: 1) configurable MDPs, where the leader influences the follower through environment/configuration parameters; and 2) Markov games (MGs), where both leader and follower have policies and influence each other through policy interaction.
In both settings, the upper-level objective is the leader's discounted cumulative reward, while the lower-level objective is the follower's entropy-regularized discounted cumulative reward.
For each setting, we derive the hypergradient of the upper-level objective, which can be estimated from online interaction samples consisting of states, follower and leader actions, and the leader's immediate rewards.
By updating the leader's variables or policy using the estimated hypergradient, our method enables steepest ascent updates that ``anticipate'' changes in the follower's best response.
We also assume access to the follower's actions (during exploration and exploitation) and learning outcomes.
This information-access assumption makes online hypergradient estimation tractable while preserving the decentralized, non-interventionist setting.

Two existing methods applicable to our configurable MDP setting, HPGD~\citep{Thoma2024-em} and SoBiRL~\citep{Yang2025-lr}, require multiple visits to the same state within a sample batch to estimate the hypergradient.
However, in continuous state spaces or in large discrete state spaces relative to batch size, this requirement is impractical without generating additional trajectories or arbitrary state resets, severely limiting real-world applicability.
Our derived hypergradient circumvents this issue via the ``Boltzmann covariance trick,'' enhancing scalability by enabling estimation solely from interaction samples, even in continuous-state tasks with high-dimensional configuration spaces.
Furthermore, we extend this result to 2-player MGs.
To our knowledge, this is the first work to derive the hypergradient for 2-player MGs under entropy regularization.

We empirically evaluate our proposed approach on Configurable MDPs and MGs with both discrete and continuous state spaces.
In Configurable MDP settings, our method demonstrates robustness against weak entropy regularization and scalability to high-dimensional leader parameters. 
Prior methods like HPGD often degrade or become trapped in local minima due to estimation errors in these scenarios.
Furthermore, in MG settings, our method identifies policies that successfully induce favorable follower behaviors, which the baselines fail to discover.
These results validate that our method enables effective hypergradient-based optimization solely from interaction samples.


\section{Preliminaries}

\subsubsection{Entropy Regularization}

Throughout this paper, we assume that a follower solves a Markov decision process (MDP) with entropy regularization. It is formulated as follows. Let $(\mathcal{S}, \mathcal{B}, p, \rho_0, r_F, \gamma_F)$ be the MDP, where  $\mathcal{S}$ is the state space; $\mathcal{B}$ the action space of the follower; $p: \mathcal{S}\times\mathcal{B}\times\mathcal{S}\to[0,1]$ the state transition probability; $\rho_0: \mathcal{S}\to[0,1]$ the initial state distribution; $r_F:\mathcal{S}\times\mathcal{B}\to\R$ the reward function of the follower; and $\gamma_F\in[0,1)$ the discount factor for the follower. 
The objective of the follower is to find a stochastic policy $g\in\mathcal{G}: \mathcal{S} \times\mathcal{B}\to[0,1]$ that maximizes the expected cumulative reward under entropy regularization:
\begin{equation}
    \max_{g \in \mathcal{G}} J(g) := 
    \E_{\tau}\left[ \sum_{t=0}^\infty \gamma_F^t \left(r_F(s_t, b_t) + \beta H(g;s_t)\right)\right],
    \label{eq:lower}
\end{equation}%
where $\mathcal{G}$ is the set of all possible policies on $\mathcal{S} \times \mathcal{B}$; $\beta > 0$ is the constant determining the strength of the entropy regularization; $H(g; s) := -\E_{b \sim g(\cdot \mid s)}[\log g(b \mid s)]$ is the entropy of $g$ given $s \in \mathcal{S}$; and the expectation is taken for a trajectory $\tau := (s_0, b_0, s_1, b_1, \dots)$ generated by $g$, i.e., $p(\tau\mid g) = \rho_0(s_0) \prod_{t=0}^{\infty} p(s_{t+1} \mid s_{t}, b_{t}) g(b_{t}\mid s_{t})$.

The main technical reason to assume entropy regularization is to guarantee the existence and uniqueness of the optimal policy as a stochastic policy. Entropy regularization is often assumed in inverse RL algorithms as a natural approach to model the stochasticity in the expert policy.
Under entropy regularization, the value and action value functions (also called the soft value and soft action value functions) are defined as 
\begin{equation*}
    V_F^{g}(s):=\E_\tau\left[\sum_{t=0}^\infty\gamma_F^t \left(r_F(s_t,b_t) + \beta H(g;s_t)\right)~\middle|~s_0=s\right]
\end{equation*}%
and $Q_F^{g}(s,b) := r_F(s, b) + \gamma_F\E_{s' \sim p(\cdot \mid s, b)}[ V_F^{g}(s') ]$.
When the policy model is sufficiently expressive, the optimal policy is uniquely determined by the optimal value functions
$V_F^{\star}(s) = \max_{g\in \mathcal{G}} V_F^{g}(s)$ and $Q_F^{\star}(s,b) = \max_{g\in \mathcal{G}} Q_F^{g}(s, b)$ for each $s \in \mathcal{S}$ as a Boltzmann distribution $g^\star(b \mid s) = \exp \left(\beta^{-1}\left(Q_F^\star(s, b) - V_F^\star(s)\right)\right)$.
The uniqueness of the optimal policy is essential for bi-level RL problems.

\subsubsection{Configurable Markov Decision Process}

A configurable Markov decision process (configurable MDP) generalizes the standard MDP by introducing a configurable parameter that parameterizes the state transition probability and the reward function.
A configurable MDP $\mathcal{M}_{\theta}$, parameterized by ${\theta} \in \Theta$, is defined by the tuple ($\mathcal{S}$, $\mathcal{B}$, $p^{\theta}$, $\rho_0^{\theta}$, $r_F^{\theta}$, $\gamma_F$), where, differently from the standard MDP, $p^{\theta}$, $\rho_0^{\theta}$, and $r_F^{\theta}$ are parameterized by ${\theta} \in \Theta$.
Given $\mathcal{M}_{\theta}$ for a fixed ${\theta} \in \Theta$, the follower maximizes the expected cumulative reward under entropy regularization; that is, it solves \eqref{eq:lower} with the parameterized objective $J_{\theta}$, where $p$, $\rho_0$, and $r_F$ are replaced by their parameterized counterparts $p^{\theta}$, $\rho_0^{\theta}$, and $r_F^{\theta}$.

The leader's objective is to maximize its utility given the follower's best response.
This is formulated as a bi-level reinforcement learning problem:
\begin{equation}
    \max_{{\theta} \in \Theta} J_L({\theta},g^{\theta\dagger}), \quad \text{where} \quad g^{\theta\dagger} := \argmax_{g\in\mathcal{G}} J_{\theta}(g) ,\label{eq:upper}
\end{equation}
where $J_L$ denotes the leader's utility under configuration ${\theta}$, and $g^{\theta\dagger}$ is the follower's best response to ${\theta}$.
Due to entropy regularization in the lower-level problem, $g^{\theta\dagger}$ is uniquely determined for every $\theta\in\Theta$ and corresponds to the $\theta$-conditioned optimal Boltzmann distribution:
\begin{equation*}
    g^{\theta\dagger}(b\mid s)=\exp\left(\dfrac{Q_F^{\theta\dagger}(s,b)-V_F^{\theta\dagger}(s)}{\beta}\right),
\end{equation*}
where $Q_F^{\theta\dagger}$ and $V_F^{\theta\dagger}$ are the optimal value functions of $\mathcal{M}_{\theta}$.
Consequently, this problem is well-defined due to the existence and uniqueness of the best response $g^{{\theta} \dagger}$.
In the subsequent sections, we assume that the follower’s policy always coincides with the optimal policy specified above.
This assumption is not overly restrictive, as non-asymptotic convergence to an $\epsilon$-optimal policy for the follower has been established by \citet{Thoma2024-em} for standard entropy-regularized RL algorithms, such as Soft Value Iteration, Q-learning, and Natural Policy Gradient.

In this paper, we focus on the scenario where the leader's utility is defined by the discounted cumulative reward, which depends on the follower's policy, and a regularization term that depends only on the configuration:
\begin{equation}
    J_L({\theta}, g^{\theta\dagger}) := \E_\tau\left[ \sum_{t=0}^\infty \gamma_L^t r_L^{\theta}(s_t, b_t)\right] + \Phi_L({\theta}), \label{eq:leader-utility}
\end{equation}
where  $\gamma_L$ is the leader's discount factor, $r_L^{\theta}$ is the leader's reward function, and $\Phi_L$ defines the regularization for configuration ${\theta}$.
Here, $\Phi_L$ is a task-specific regularizer for the leader (e.g., an $\ell_2$ penalty on $\theta$), and is distinct from the entropy regularization used in the follower objective.


\subsubsection{2-Player Markov Game}

A 2-player MG can be viewed as an extension of configurable MDPs, where the leader acts as an agent with a policy parameterized by $\theta \in \Theta$, and the environment is influenced indirectly through the leader's actions.
It is defined by the tuple ($\mathcal{S}$, $(\mathcal{A},\mathcal{B})$, $p$, $\rho_0$, $(r_L,r_F)$, $(\gamma_L,\gamma_F)$), where $\mathcal{S}$ is the state space; $\mathcal{A}$ and $\mathcal{B}$ the action spaces of the leader and the follower, respectively; $p: \mathcal{S}\times\mathcal{A}\times\mathcal{B}\times\mathcal{S}\to[0,1]$ the state transition probability; $\rho_0: \mathcal{S}\to[0,1]$ the initial state distribution; $r_L$ and $r_F:\mathcal{S}\times\mathcal{A}\times\mathcal{B}\to\mathbb{R}$ the reward functions of the leader and the follower; and $\gamma_L, \gamma_F \in [0, 1)$ the discount factors for the leader and the follower.
The leader's policy and the follower's policy are denoted as $f_{\theta}: \mathcal{S}\times\mathcal{A}\to[0,1]$ and $g:\mathcal{S} \times \mathcal{A}\times\mathcal{B}\to[0,1]$, respectively.\footnote{In this paper, we consider the case where the follower observes the leader's action before making a decision. However, it is straightforward to extend our results to cases where the follower does not observe the leader's action, i.e., $g:\mathcal{S} \times\mathcal{B}\to[0,1]$.}
Let the leader's policies be parameterized by ${\theta} \in \Theta$.
Similar to the configurable MDP case, the follower maximizes the expected cumulative reward under entropy regularization, i.e., it solves \eqref{eq:lower}.
The difference lies in that $r_F^{\theta}(s, b)$ is replaced by $r_F(s, a, b)$ and $p^{\theta}(s' \mid s, b)$ is replaced by $p(s' \mid s, a, b) f_{{\theta}}(a' \mid s')$.
That is, instead of directly controlling the follower's reward and state transition, these are indirectly affected by the leader's action.
The leader's optimization problem is identical to \eqref{eq:upper}, where the expected cumulative reward $J_L$ is defined as:
\begin{equation}
    J_L({\theta}, g^{\theta\dagger}) := \E_\tau\left[ \sum_{t=0}^\infty \gamma_L^t r_L(s_t, a_t, b_t)\right] + \Phi_L({\theta}). \label{eq:leader-utility-MG}
\end{equation}%
Analogous to the configurable MDP case, we assume that the follower's policy is optimal.

We define the value and action-value functions for the follower $g$ under the leader $f_{\theta}$ as:
\begin{multline*}
        V_F^{{\theta} {g}}(s, a):=\\ \E_\tau\left[\sum_{t=0}^\infty\gamma_F^t \left(r_F(s_t,a_t,b_t) + \beta H(g;s_t, a_t)\right)\middle|\begin{array}{l}s_0=s\\a_0=a\end{array}\right]
\end{multline*}%
and $Q_F^{{\theta} {g}}(s, a, b) := r_F(s, a, b) + \gamma_F\E_{s'}\E_{a'}\big[ V_F^{\theta {g}}(s', a') \big]$, where the expectation $\E_\tau$ is taken over a trajectory generated by $f_{\theta}$ and $g$, i.e., $p(\tau\mid \theta, g) = \rho_0(s_0) \prod_{t=0}^{\infty} p(s_{t+1} \mid s_{t}, a_{t}, b_{t}) g(b_{t}\mid s_{t}, a_{t}) f_{{\theta}}(a_{t}\mid s_{t})$, $\E_{s'}$ is taken over next state $s' \sim p(\cdot \mid s, a, b)$, and $\E_{a'}$ is taken over next leader action $a' \sim f_\theta(\cdot \mid s')$.
In the definition of $V_F^{\theta g}$, the leader's action $a$ is given as an input because it is observed before the follower's decision.

The leader's action-value functions are defined as:
\begin{equation*}
    Q_L^{{\theta} {g}}(s,a,b) :=\E_\tau\left[\sum_{t=0}^\infty\gamma_L^t r_L(s_t,a_t,b_t)~\middle|~\begin{array}{l}s_0=s\\a_0=a\\b_0=b\end{array}\right].
\end{equation*}
Note that the leader does not employ entropy regularization; hence, the value functions for the leader are the standard ones.
For conciseness, the leader's action-value function under the leader policy $f_{{\theta}}$ and the corresponding best response $g^{\theta\dagger}$ is denoted as
$Q_L^{{\theta} \dagger} := Q_L^{{\theta} {g^{\theta\dagger}}}$.
In addition, the follower's optimal value and action-value functions under $f_{{\theta}}$, i.e., the value functions of $f_{{\theta}}$ and the optimal policy $g^{\theta\dagger}$, are denoted as
$V_F^{{\theta} \dagger} :=  V_F^{{\theta} {g^{\theta\dagger}}}$ and $Q_F^{{\theta} \dagger} := Q_F^{{\theta} {g^{\theta\dagger}}}$.



\section{Related Works}


\paragraph{Bi-Level RL with Cumulative Upper-Level Objectives in Configurable MDPs}
Several studies address bi-level RL with cumulative reward objectives at the upper level, such as \citet{Thoma2024-em} and \citet{Yang2025-lr}. 
Both derive hypergradients assuming access to the follower's entropy-regularized $\epsilon$-optimal best response, similar to our setting. 
Consequently, their methods are applicable to decentralized follower scenarios.

However, these approaches share a common limitation: hypergradient estimation requires multiple trajectories originating from the same initial state with diverse initial follower actions.
Such trajectories are generally inaccessible unless the state space is discrete and small enough to allow repeated visits to identical states.
\citet{Thoma2024-em} circumvent this issue by assuming access to an oracle capable of generating trajectories from arbitrary initial states and actions.  
Although this challenge can be mitigated without such an oracle, either by constraining the dimensionality of upper-level variables as in \citet{Thoma2024-em} or by truncating the hypergradient to its first-step component as in \citet{Yang2025-lr}, these simplifications and approximations degrade performance in practical settings as seen later.

In contrast, we derive an alternative hypergradient formulation that can be estimated solely from experienced trajectories.
Our method eliminates the need for additional trajectories, enabling scalability to large state spaces and high-dimensional leader parameters.

\paragraph{Bi-Level RL in Markov Games}
In 2-player MG settings with a decentralized follower, three main approaches have been proposed: two-timescale gradient updates~\citep{Rajeswaran2020-gv,Vu2022-jq}, total derivative gradient approaches~\citep{Yang2023-tf}, and operator-based methods~\citep{Zhang2020-hw}.
Two-timescale gradient updates~\citep{Rajeswaran2020-gv,Vu2022-jq} employ the first-order gradient for the leader's updates and two different update frequencies to maintain the follower's optimality.
However, this approach may fail to find the optimal leader policy, particularly when the leader's and follower's objectives conflict, as demonstrated in Section~\ref{sec:experiment_MG_discrete}.
\citet{Yang2023-tf} propose a total-derivative gradient method, ST-MADDPG.
However, they assume a deterministic follower policy unconditioned on the leader’s action. This, combined with the lack of mathematical detail regarding policy updates, makes adapting their method to our setting nontrivial.
Moreover, this method requires computing the inverse Hessian product with respect to policy parameters, often incurring substantial computational costs and resulting in unstable updates.
Bi-AC~\citep{Zhang2020-hw}, on the other hand, is applicable with minor adjustments. 
This method builds on the Stackelberg stage-game operator, similar to Nash-Q~\citep{Hu2003-vz}.
However, this operator is not guaranteed to converge to the optimal leader policy or even to improve the leader's performance.
Further discussion is provided in Section~\ref{sec:experiment_MG_discrete}.

Our proposed method leverages the hypergradient to account for changes in the follower's best response during the leader's update, similar to total derivative gradient approaches~\citep{Yang2023-tf}, but without the computationally expensive inverse Hessian product.




\section{Proposed Approach}

A key difference from prior studies~\citep{Thoma2024-em,Yang2025-lr} is that we leverage the \emph{Boltzmann covariance trick} to resolve their limitation of requiring expectations over follower actions from the same state for hypergradient estimation.
This constraint necessitates elaborate trajectory collection procedures, which restricts its applicability.
Our method avoids this computation and enables hypergradient estimation from limited interaction history. 
To our knowledge, this is the first work deriving the hypergradient for 2-player MGs under entropy regularization.

\subsection{Hypergradient for Configurable MDPs}

The hypergradient of $J_L$ is the hypergradient of the cumulative reward (first term in \eqref{eq:leader-utility}) plus the gradient of the regularization term (second term), $\nabla \Phi_L({\theta})$, which is assumed to be analytically differentiable. Consequently, we omit the regularization term in the following derivation for simplicity.

To derive the hypergradient, we define the leader's value function under the follower's action as 
\begin{equation*}
    Q_L^{{\theta} {g}}(s, b) :=\E_\tau\left[\sum_{t=0}^\infty\gamma_L^t r_L(s_t,b_t)~\middle|\begin{array}{l}s_0=s\\b_0=b\end{array}\right].
\end{equation*}
Unlike the standard action-value function, the inputs are the state $s$ and the follower's action $b$, rather than the leader's action. 
Let $Q_L^{{\theta} \dagger} := Q_L^{{\theta} {g^{\theta\dagger}}}$ be the leader's value function under the best response $g^{\theta\dagger}$. 
Then, we define the \emph{Benefit} of the leader when the follower takes an action $b$ at state $s$ as
\begin{equation*}
B_L^{{\theta}\dagger}(s, b) := Q_L^{{\theta}\dagger}(s,b)-\E_{b\sim g^{{\theta}\dagger}(\cdot\mid s)}\left[Q_L^{{\theta}\dagger}(s,b)\right].
\end{equation*}
A positive Benefit indicates that the follower's action is relatively beneficial to the leader. 
Thus, the leader aims to encourage the follower to select actions with a positive Benefit more frequently.

Using the Benefit, we can compute the hypergradient as follows.
Its proof is provided in Appendix~\ref{apdx:proof_hg-configurable}.
\begin{theorem}
\label{thm:hg-configurable}
Suppose the gradients $\nabla_{\theta} r_L^{\theta}$, $\nabla_{\theta} r_F^{\theta}$, $\nabla_{\theta} \log \rho_0^{\theta}$, and $\nabla_{\theta} \log p^{\theta}$ are computable everywhere.
The hypergradient of \eqref{eq:leader-utility} is given by
\begin{multline}
    \nabla_{{\theta}}J_L({\theta}, g^{\theta\dagger})
    =\E_\tau\Bigg[\sum_{t=0}^\infty \gamma_L^t \bigg(\nabla_{\theta} r_L^{\theta}(s_t, b_t)\\
    + V_L^{{\theta}\dagger}(s_t) \nabla_{\theta}\log
    p^{\theta} (s_{t} \mid s_{t-1}, b_{t-1})\\
    + \frac{B_L^{{\theta}\dagger}(s_t, b_t)}{\beta} \nabla_{\theta} Q_F^{{\theta} \dagger}(s_t, b_t) \bigg) \Bigg],
\label{eq:hg_configurable}
\end{multline}
where $p^{\theta}(s_0\mid s_{-1},b_{-1})$ refers to $\rho_0^{\theta}(s_0)$, and
\begin{multline}
\nabla_{{\theta}}Q_F^{{\theta}\dag}(s,b) = \E_\tau\Bigg[\sum_{t=0}^\infty\gamma_F^t\nabla_{{\theta}}r_F^{{\theta}}(s_t,b_t)\\
+ \gamma_F^{t+1}V_F^{{\theta}\dag}(s_{t+1})\nabla_{{\theta}}\log p^{{\theta}}(s_{t+1}\mid s_t,b_t) \Bigg|\begin{array}{l}s_0=s\\b_0=b\end{array}\Bigg].\label{eq:nablaQF}
\end{multline}
\end{theorem}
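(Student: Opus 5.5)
We plan to differentiate the trajectory distribution directly with the likelihood-ratio (score-function) method. Write $J_L(\theta,g^{\theta\dagger}) = \E_{\tau\sim p(\cdot\mid\theta)}[\sum_t \gamma_L^t r_L^\theta(s_t,b_t)]$, where
\[
p(\tau\mid\theta)=\rho_0^\theta(s_0)\prod_t g^{\theta\dagger}(b_t\mid s_t)\,p^\theta(s_{t+1}\mid s_t,b_t).
\]
Differentiating gives three contributions. The first comes from $\nabla_\theta r_L^\theta$. The second comes from $\nabla_\theta\log\rho_0^\theta$ and $\nabla_\theta\log p^\theta$. The third comes from $\nabla_\theta\log g^{\theta\dagger}$.

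For the score terms we use the standard causality argument. A score at time $t$ can only correlate with rewards at times $\ge t$ (for transitions) or with the rewards following the follower's choice of $b_t$. The transition score $\nabla_\theta\log p^\theta(s_t\mid s_{t-1},b_{t-1})$ is therefore multiplied by the reward-to-go from $s_t$. Conditioning on $s_t$ turns that reward-to-go into $\gamma_L^t V_L^{\theta\dagger}(s_t)$, where $V_L^{\theta\dagger}(s)=\E_{b\sim g^{\theta\dagger}}[Q_L^{\theta\dagger}(s,b)]$. This yields the second line of \eqref{eq:hg_configurable}, with the $t=0$ term handled by the convention $p^\theta(s_0\mid s_{-1},b_{-1})=\rho_0^\theta(s_0)$. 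Likewise, the follower score at time $t$ is multiplied by $\gamma_L^t Q_L^{\theta\dagger}(s_t,b_t)$. Since $\E_{b\sim g}[\nabla\log g(b\mid s)]=0$, we may subtract the baseline $V_L^{\theta\dagger}(s_t)$, which replaces $Q_L^{\theta\dagger}$ by the Benefit $B_L^{\theta\dagger}$.

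Next we use the Boltzmann form $\log g^{\theta\dagger}(b\mid s)=\beta^{-1}(Q_F^{\theta\dagger}(s,b)-V_F^{\theta\dagger}(s))$, which gives
\[
\nabla_\theta\log g^{\theta\dagger}(b\mid s)=\beta^{-1}\bigl(\nabla_\theta Q_F^{\theta\dagger}(s,b)-\nabla_\theta V_F^{\theta\dagger}(s)\bigr).
\]
The term $\nabla_\theta V_F^{\theta\dagger}(s_t)$ depends only on $s_t$, and $\E_{b_t\sim g^{\theta\dagger}}[B_L^{\theta\dagger}(s_t,b_t)]=0$, so it vanishes. This is the Boltzmann covariance trick: it expresses a covariance under $g^{\theta\dagger}(\cdot\mid s)$ through a single sampled action, and it gives the third line of \eqref{eq:hg_configurable}.

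To obtain \eqref{eq:nablaQF}, we differentiate the soft Bellman equations
\[
Q_F^{\theta\dagger}=r_F^\theta+\gamma_F\E_{s'\sim p^\theta}\bigl[V_F^{\theta\dagger}(s')\bigr],\qquad V_F^{\theta\dagger}(s)=\beta\log\sum_b\exp\bigl(Q_F^{\theta\dagger}(s,b)/\beta\bigr)
\]
(with an integral in place of the sum for continuous actions). The log-sum-exp gives $\nabla V_F^{\theta\dagger}(s)=\E_{b\sim g^{\theta\dagger}}[\nabla Q_F^{\theta\dagger}(s,b)]$; this is the envelope property of the entropy-regularized optimum. Then
\[
\nabla Q_F^{\theta\dagger}(s,b)=\nabla r_F^\theta(s,b)+\gamma_F\E_{s'}\Bigl[V_F^{\theta\dagger}(s')\nabla\log p^\theta(s'\mid s,b)+\E_{b'\sim g^{\theta\dagger}}\bigl[\nabla Q_F^{\theta\dagger}(s',b')\bigr]\Bigr].
\]
Unrolling this recursion gives \eqref{eq:nablaQF}. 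Convergence follows because the recursion is a $\gamma_F$-contraction, assuming bounded rewards and scores.

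We expect the main obstacle to be rigor rather than algebra. Exchanging differentiation with the infinite sums and expectations, and justifying the differentiability of $Q_F^{\theta\dagger}$ in $\theta$, requires boundedness or uniform integrability assumptions. We would handle this via an implicit-function/contraction argument on the soft Bellman operator, which is a $\gamma_F$-contraction that is smooth in $\theta$. A second point needing care is the bookkeeping in the causality step, since both score types appear along a single trajectory.
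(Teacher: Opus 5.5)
Your proof is correct, but it is organized differently from the paper's proof of this theorem.

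\textbf{The paper's route.} The paper does not derive the base hypergradient. It imports from Theorem~2 and Proposition~1 of Thoma et al.\ a formula whose third term is $\beta^{-1}Q_L^{\theta\dagger}(s_t,b_t)\bigl(\nabla_\theta Q_F^{\theta\dagger}(s_t,b_t)-\nabla_\theta V_F^{\theta\dagger}(s_t)\bigr)$. It then differentiates the log-sum-exp Bellman equation to get $\nabla_\theta V_F^{\theta\dagger}(s)=\E_{b\sim g^{\theta\dagger}}[\nabla_\theta Q_F^{\theta\dagger}(s,b)]$. Finally, the covariance identity moves the centering from $\nabla_\theta Q_F^{\theta\dagger}$ onto $Q_L^{\theta\dagger}$. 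The formula for $\nabla_\theta Q_F^{\theta\dagger}$ is likewise cited from Thoma et al.

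\textbf{Your route.} You derive everything from the trajectory likelihood ratio. The $V_L^{\theta\dagger}\nabla_\theta\log p^\theta$ term comes from causality of the transition scores. The centering is done on the score side, by baseline subtraction using $\E_b[\nabla_\theta\log g^{\theta\dagger}]=0$; under the Boltzmann form this is the envelope identity in disguise. The $\nabla_\theta V_F^{\theta\dagger}$ term is then killed by $\E_b[B_L^{\theta\dagger}]=0$. You use the envelope identity explicitly only in the $\nabla_\theta Q_F^{\theta\dagger}$ recursion.

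This is essentially the route the paper itself takes for the Markov-game theorem. There, the first lemma applies the policy gradient to the joint policy $f_\theta g^{\theta\dagger}$ and centers the follower score, and the second lemma differentiates and unrolls the soft Bellman equation. You have transplanted that argument to the configurable MDP, where the extra transition-score terms appear.

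\textbf{What each buys.} The paper's version is short and makes the equivalence with HPGD's gradient target immediate, so it inherits Thoma et al.'s regularity assumptions and convergence guarantees. Yours is self-contained and reads the Benefit transparently as a baseline on the follower's score. The price is that the regularity you flag is now yours to prove:
\begin{itemize}
\item differentiability of $g^{\theta\dagger}$ in $\theta$;
\item exchanging $\nabla_\theta$ with the infinite-horizon expectation, e.g.\ by truncating at horizon $T$ and taking a limit;
\item well-posedness of the $\nabla_\theta Q_F^{\theta\dagger}$ recursion when rewards are unbounded, as in the LQR experiments.
\end{itemize}
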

The first two terms of \eqref{eq:hg_configurable} are interpreted as the direct effect of the differentiation of the cumulative reward resulting from the differentiation of the reward function and the transition probability. 
The third term is interpreted as the indirect effect due to the change in the follower's best response. 

\citet{Thoma2024-em} derive a hypergradient for the same bi-level RL problem, albeit with a different formulation.
Importantly, since our hypergradient formulation is mathematically equivalent to that of \citet{Thoma2024-em}, the non-asymptotic convergence guarantees established in their work apply directly to our method.
Our contribution is thus complementary: we retain the same gradient target but introduce a more efficient and oracle-free estimation mechanism, making the approach applicable in practical decentralized settings.

More specifically, the key distinction lies in the form of the third term of the hypergradient, leading to different implementations of hypergradient estimation.
The third term of the hypergradient in~\citep{Thoma2024-em} is derived as 
\begin{equation}
    \frac{1}{\beta} \E_\tau\left[\sum_{t=0}^\infty \gamma_L^t Q_L^{{\theta} \dagger}(s_t, b_t) \nabla_{\theta} A_F^{{\theta} \dagger}(s_t, b_t)  \right], \label{eq:third_term_thoma_grad}
\end{equation}
where $A_F^{{\theta} \dagger}(s, b) := Q_F^{{\theta} \dagger}(s, b) - V_F^{{\theta} \dagger}(s)$. 
This requires estimating the gradients $\nabla_{\theta} Q_F^{{\theta} \dagger}(s, b)$ derived in \eqref{eq:nablaQF} and $\nabla_{\theta} V_F^{{\theta} \dagger}(s) = \E_{b \sim g^{{\theta}\dagger}(\cdot\mid s)}\left[\nabla_{\theta} Q_F^{{\theta} \dagger}(s, b)\right]$ (derived in the proof). 
However, there is a difficulty in estimating these two gradients from samples. 
These gradients are estimated by Monte Carlo using trajectories of the follower's policy. The estimation of $\nabla_{\theta} Q_F^{{\theta} \dagger}(s, b)$ requires trajectories starting at state $s$ and follower action $b$. On the other hand, the estimation of $\nabla_{\theta} V_F^{{\theta} \dagger}(s)$ requires other trajectories starting at state $s$. 
That is, we need multiple trajectories starting at the same state with different follower actions to estimate $\nabla_{\theta} A_F^{{\theta} \dagger}(s, b)$. 
This assumption is challenging to satisfy when the state-action space is either discrete but large or continuous.
To tackle this challenge, \citet{Thoma2024-em} rely on an oracle that generates trajectories starting at an arbitrary state-action pair.
However, the availability of such an oracle is also difficult to guarantee, especially when the leader does not have full control of the follower.

With our hypergradient formula \eqref{eq:hg_configurable}, we do not need to estimate $\nabla_{\theta} V_F^{{\theta} \dagger}(s)$, and thus do not need multiple trajectories. 
Instead, we require estimating the Benefit $B_L^{{\theta} \dagger}(s, b)$, which we argue is easier to estimate, as it does not require multiple trajectories or an oracle.
The equivalence between \eqref{eq:third_term_thoma_grad} and the third term in \eqref{eq:hg_configurable} is described by the Boltzmann Covariance trick, that is:
\begin{equation*}
\begin{split}
    &\E_{b\sim g^{\theta\dagger}(\cdot\mid s)}\left[Q_L^{\theta\dag}(s,b)\nabla_{\theta} A_F^{{\theta} \dagger}(s, b)\right]\nonumber\\
    &=\E_{b}\left[Q_L^{\theta\dag}(s,b)\cdot\left(\nabla_{\theta} Q_F^{{\theta} \dagger}(s, b) - \E_{b}\left[\nabla_{\theta} Q_F^{{\theta} \dagger}(s, b)\right]\right)\right]\\
    &=\mathrm{Cov}_{b}\left[Q_L^{\theta\dag}(s,b), \nabla_{\theta}Q_F^{\theta\dag}(s,b)~\middle|~ s\right]\\
    &=\E_{b}\left[\left(Q_L^{\theta\dag}(s,b) - \E_{b}\left[Q_L^{{\theta} \dagger}(s, b)\right]\right)\cdot \nabla_{\theta} Q_F^{{\theta} \dagger}(s, b)\right]\\
    &=\E_{b\sim g^{\theta\dagger}(\cdot\mid s)}\left[B_L^{\theta\dag}(s,b)\nabla_{\theta} Q_F^{{\theta} \dagger}(s, b)\right],
\end{split}    
\end{equation*}
where $\mathrm{Cov}_{b}[\cdot,\cdot\mid s]$ is a covariance with respect to $b\sim g^{\theta\dagger}(\cdot\mid s)$.
Because this transformation is an algebraic identity, replacing one form with the other preserves the expectation and therefore, preserves unbiasedness of the corresponding Monte Carlo estimator.
We refer to this manipulation as the Boltzmann Covariance trick\footnote{This name is inspired by the Boltzmann Covariance Theorem~\citep{Movellan1998-xi}, which relies on a similar covariance transformation principle.
While our hypergradient can be derived directly using this theorem, we instead present a derivation based on its underlying covariance transformation principle, which we call the Boltzmann covariance trick in this paper.}.
Additionally, it leads to the interpretation of the third term in \eqref{eq:hg_configurable} as enhancing the probability of the follower's actions that are favorable to the leader.

\subsection{Hypergradient for 2-Player Markov Game}

We extend Theorem~\ref{thm:hg-configurable} to 2-player MGs. 
Analogous to the policy gradient theorem for single-agent MDPs, we introduce the discounted state visitation distribution $d_{\gamma}^{\theta g}(s)$ (defined in Appendix~\ref{apdx:discounted_cumulative_state_visitation}).
For conciseness, we denote $d_\gamma^{\theta g^{\theta\dagger}}$ as $d_\gamma^{\theta \dag}$. 
The policy gradient theorem for single-agent MDPs~\citep{Sutton1999-su,Sutton2018-jf} can be adapted to the leader's objective under a fixed follower as:
\begin{equation*}
    \nabla_{\theta} J_L({\theta}, g) = \frac{1}{1 - \gamma_L} \E\left[ Q_L^{\theta g}(s, a, b) \nabla_{\theta} \log f_{\theta}(a\mid s)\right],
\end{equation*}
where the expectation $\E$ is taken over $(s,a,b) \sim g(b\mid s,a)f_{{\theta}}(a\mid s) d_{\gamma_L}^{\theta g}(s)$.

The following theorem extends the policy gradient theorem to 2-player MGs, providing the hypergradient for a stochastic leader policy. Note that, in general, the optimal leader policy in 2-player MGs may be stochastic.
In this context, the \emph{Benefit} of the leader $f_{{\theta}}$, given that the follower takes action $b$ at state $s$ under the leader's action $a$, is defined as:
\begin{equation*}
    B_L^{{\theta}\dagger}(s, a, b) := Q_L^{{\theta}\dagger}(s,a,b)-\E_{b\sim g^{{\theta}\dagger}(\cdot\mid s,a)}\left[Q_L^{{\theta}\dagger}(s,a,b)\right].
\end{equation*}

\begin{theorem}[Hypergradient for MGs]\label{thm:hg-mg-stochastic}
The hypergradient of \eqref{eq:leader-utility-MG} with respect to ${\theta}$ is given by
\begin{multline}
    \nabla_{{\theta}}J_L({\theta}, g^{\theta\dagger})
    =\dfrac{1}{1-\gamma_L}\E\bigg[Q_L^{{\theta}\dagger}(s,a,b)\nabla_{{\theta}}\log f_{{\theta}}(a\mid s)
   \\+\dfrac{B_L^{{\theta}\dagger}(s,a,b)}{\beta}\nabla_{\theta}Q_F^{\theta\dagger}(s,a,b)\bigg],\label{eq:hg_MG}
\end{multline}
where the expectation $\E$ is taken for $(s,a,b) \sim g^{{\theta}\dagger}(b\mid s,a)f_{{\theta}}(a\mid s) d_{\gamma_L}^{\theta\dagger}(s)$, and
\begin{align}
    &\nabla_{\theta}Q_F^{\theta\dagger}(s,a,b) \label{eq:nablaQF_MG}\\ &=\E_{\tau}\Bigg[\sum_{t=0}^\infty \gamma_F^t V_F^{{\theta}\dag}(s_t,a_t)\nabla_{{\theta}}\log f_{{\theta}}(a_t\mid s_t)\Bigg|\begin{array}{l}s_0=s\\a_0=a\\b_0=b\end{array}\Bigg].\notag
\end{align}
\end{theorem}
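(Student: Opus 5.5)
The plan is to reduce the Markov game to a configurable MDP on an augmented state, invoke Theorem~\ref{thm:hg-configurable}, and then translate each term back into the statement's form.

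\textbf{Reduction.} Take the augmented state $x=(s,a)$ with the follower's action $b$. The initial distribution is $\rho_0^\theta(s,a)=\rho_0(s)f_\theta(a\mid s)$. The transition is $p^\theta(s',a'\mid s,a,b)=p(s'\mid s,a,b)f_\theta(a'\mid s')$, and the rewards are $r_F(s,a,b)$ and $r_L(s,a,b)$. In this configurable MDP the follower's soft value is $V_F^{\theta\dagger}(s,a)$, exactly as defined in the paper, and $g^{\theta\dagger}(b\mid s,a)$ is its optimal Boltzmann policy. The rewards do not depend on $\theta$, so the $\nabla_\theta r$ terms vanish. Because $p$ and $\rho_0$ do not depend on $\theta$, we get $\nabla_\theta\log p^\theta(s_t,a_t\mid s_{t-1},a_{t-1},b_{t-1})=\nabla_\theta\log f_\theta(a_t\mid s_t)$, and the same holds at $t=0$.

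\textbf{Follower gradient.} Substituting into \eqref{eq:nablaQF} gives
$\nabla_\theta Q_F^{\theta\dagger}(s,a,b)=\E_\tau[\sum_{t\ge0}\gamma_F^{t+1}V_F^{\theta\dagger}(s_{t+1},a_{t+1})\nabla_\theta\log f_\theta(a_{t+1}\mid s_{t+1})\mid s_0,a_0,b_0]$.
This sum starts at $t+1=1$, whereas \eqref{eq:nablaQF_MG} also contains a $t=0$ term. I will have to settle which indexing is correct. The conditioning in $Q_F$ fixes $a_0=a$, so $Q_F(s,a,b)$ does not depend on $f_\theta(a\mid s)$. I expect the $t=0$ term to be either a convention, or harmless because $\E_a[V_F\nabla\log f_\theta]$ enters only after the covariance with the Benefit is taken. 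Under that covariance, any function of $(s,a)$ alone added to $\nabla_\theta Q_F$ cancels, since $\E_{b\sim g^{\theta\dagger}(\cdot\mid s,a)}[B_L^{\theta\dagger}(s,a,b)]=0$. I will use this identity to justify reindexing the sum in either direction.

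\textbf{Leader hypergradient.} Substituting into \eqref{eq:hg_configurable} gives
$\E_\tau[\sum_t\gamma_L^t(V_L^{\theta\dagger}(s_t,a_t)\nabla_\theta\log f_\theta(a_t\mid s_t)+\beta^{-1}B_L^{\theta\dagger}(s_t,a_t,b_t)\nabla_\theta Q_F^{\theta\dagger}(s_t,a_t,b_t))]$.
Here $V_L^{\theta\dagger}(s,a)=\E_{b\sim g^{\theta\dagger}(\cdot\mid s,a)}[Q_L^{\theta\dagger}(s,a,b)]$ is the leader's value at the augmented state. Two steps finish the argument. First, by the tower property over $b_t$, $V_L(s_t,a_t)$ can be replaced by $Q_L^{\theta\dagger}(s_t,a_t,b_t)$. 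Second, the discounted sum is rewritten as $\frac{1}{1-\gamma_L}$ times an expectation over $d_{\gamma_L}^{\theta\dagger}(s)f_\theta(a\mid s)g^{\theta\dagger}(b\mid s,a)$, using the definition of the visitation distribution. The result is \eqref{eq:hg_MG}.

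\textbf{Main obstacle.} The delicate points are the regularity conditions and the indexing check above. The assumptions of Theorem~\ref{thm:hg-configurable} hold provided $\nabla_\theta\log f_\theta$ exists. The reduction is legitimate only if the MG follower problem coincides with the augmented configurable-MDP problem. This holds because the follower observes $a$ and its entropy term $H(g;s,a)$ matches the augmented-state entropy. If the reduction raises subtleties, the fallback is to redo the appendix proof of Theorem~\ref{thm:hg-configurable} directly: differentiate the soft Bellman equation for $Q_F^{\theta\dagger}$ and the Bellman equation for $Q_L^{\theta\dagger}$, write $\nabla\log g^{\theta\dagger}=\beta^{-1}(\nabla Q_F-\nabla V_F)$, and apply the Boltzmann covariance trick.
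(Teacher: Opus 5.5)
Your proposal is correct, but it takes a different route from the paper. The paper does not reduce to Theorem~\ref{thm:hg-configurable}. Instead it argues directly, in four steps:
\begin{enumerate}
\item It applies the ordinary policy gradient theorem to the joint policy $f_\theta(a\mid s)g^{\theta\dagger}(b\mid s,a)$.
\item It rewrites the $\nabla_\theta\log g^{\theta\dagger}$ part as $\mathrm{Cov}_b[Q_L^{\theta\dagger},\nabla_\theta\log g^{\theta\dagger}\mid s,a]$.
\item It converts this using $\nabla_\theta\log g^{\theta\dagger}=\beta^{-1}(\nabla_\theta Q_F^{\theta\dagger}-\E_b[\nabla_\theta Q_F^{\theta\dagger}])$ and the covariance trick.
\item It differentiates the soft Bellman equation for $Q_F^{\theta\dagger}$ and solves the resulting Bellman expectation equation.
\end{enumerate}

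Your augmented-state reduction is legitimate. With $x=(s,a)$, $\rho_0^\theta=\rho_0 f_\theta$, $p^\theta=p\,f_\theta$ and $\theta$-free rewards, the follower's problem coincides with the MG one. So do $V_F^{\theta\dagger}$, $Q_F^{\theta\dagger}$, $Q_L^{\theta\dagger}$ and $B_L^{\theta\dagger}$, and the tower step $V_L^{\theta\dagger}(s_t,a_t)\to Q_L^{\theta\dagger}(s_t,a_t,b_t)$ is valid. Your route is shorter and shows that Theorem~\ref{thm:hg-mg-stochastic} is literally a special case of the configurable-MDP result.

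What it costs:
\begin{itemize}
\item It inherits whatever Theorem~\ref{thm:hg-configurable} relies on through the cited results of Thoma et al., now on the state space $\mathcal{S}\times\mathcal{A}$, which is continuous whenever $\mathcal{A}$ is.
\item It needs $f_\theta$ to have a density with differentiable log, so that $\nabla_\theta\log p^\theta$ exists.
\end{itemize}
The paper's derivation is self-contained. It is also the template the paper reuses for deterministic leader policies in the appendix, where this particular reduction is unavailable.

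On the indexing, commit to your computation rather than leaving it as ``convention or harmless.'' Because $Q_F^{\theta\dagger}(s,a,b)$ conditions on $a_0=a$, $\theta$ first enters through $a_1$. The correct expression for $\nabla_\theta Q_F^{\theta\dagger}(s,a,b)$ is therefore the $t\ge1$ sum you obtained.

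The paper's own derivation arrives at the same quantity. It writes it as the $t\ge0$ sum minus $V_F^{\theta\dagger}(s,a)\nabla_\theta\log f_\theta(a\mid s)$, then drops that correction for exactly your reason: $\E_{b\sim g^{\theta\dagger}(\cdot\mid s,a)}[B_L^{\theta\dagger}(s,a,b)]=0$.

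So \eqref{eq:nablaQF_MG} is not a literal identity for $\nabla_\theta Q_F^{\theta\dagger}$; it is off by that $(s,a)$-dependent term. It is only a valid substitute for $\nabla_\theta Q_F^{\theta\dagger}$ inside \eqref{eq:hg_MG}, which is precisely what your centered-Benefit argument establishes.
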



The first term of \eqref{eq:hg_MG} corresponds to the policy gradient theorem~\citep{Sutton1999-su,Sutton2018-jf}, which reflects the improvement of the leader's utility by changing its own policy.
The second term reflects the improvement in the leader's utility resulting from the change in the follower's best response due to the leader's policy update. 

\subsection{Implementation}

We propose Actor-Critic algorithms for both configurable MDP and MG settings, termed \emph{Boltzmann Covariance HyperGradient (BC-HG)}. 
These algorithms comprise standard critic updates to estimate the current leader's Q-function using trajectories sampled under the best response, and actor updates utilizing a hypergradient estimated via the Benefit, based on \eqref{eq:hg_configurable} and \eqref{eq:hg_MG}. 
While our algorithm shares similarities with that of \citet{Thoma2024-em}, it distinguishes itself by leveraging the Benefit. 
As discussed previously, this distinction is crucial for practical tractability. 
Algorithm~\ref{algo:proposal} provides an overview of the proposed method.

We proceed under the following assumption, which allows us to focus on hypergradient estimation without the computational burden of estimating the follower’s policy and value function:
\begin{assumption}[White-box follower]\label{thm:assume_wb_follower}
    The follower’s policy and the value functions resulting from its learning process are accessible to the leader.
\end{assumption}
Although our overall framework is designed for a decentralized learning setting, where the leader cannot directly intervene in the follower's updates, this constraint does not necessarily imply a black-box follower setting. 
Assumption~\ref{thm:assume_wb_follower} is often justifiable when the follower's best response is determined analytically with a task-specific method; accessing the best response is natural, but we do not want to change the follower's optimization process. 
This setup holds regardless of whether the follower is centralized or decentralized. 
This scenario is specifically addressed in our experiments (Sections~\ref{sec:experiment_CMDP_continuous} and \ref{sec:experiment_MG_continuous}). 
Furthermore, even if the follower is not strictly white-box, its policy and values can often be accurately estimated from interaction samples, mitigating the practical difference in many real-world applications.

We decompose the hypergradient into two components: the partial derivative $\partial_{\theta}J_L$ and the guiding term $\beta^{-1}\E[B_L^{\theta\dagger}\nabla_{\theta}Q_F^{\theta\dagger}]$.
The partial derivative $\partial_{\theta}J_L$ corresponds to the first two terms of the hypergradient in configurable MDPs \eqref{eq:hg_configurable}, and to the first term in MGs \eqref{eq:hg_MG}.
The leader's value function $V_L^{\theta\dagger}$\footnote{
In Line~\ref{algo:line_L_value} of Algorithm~\ref{algo:proposal}, we slightly abuse notation by defining $V_L(\bar{s})$ as the discounted cumulative expected leader reward conditioned on the initial state-action $\bar{s}=(s,a)$ in MG settings.
} can be computed via the Bellman equation using the estimated critic $Q_L^{\theta\dagger}$ and the best response $g^{\theta\dagger}$.
The guiding term, corresponding to the final term of each hypergradient, is estimated by averaging the gradient of the follower's Q-function, $\nabla_{\theta}Q_F^{\theta\dagger}(s,b)$ (or $\nabla_{\theta}Q_F^{\theta\dagger}(s,a,b)$), weighted by the Benefit $B_L^{\theta\dagger}(s,b)$ (or $B_L^{\theta\dagger}(s,a,b)$), over the sampled transitions. 
The gradient $\nabla_{\theta}Q_F^{\theta\dagger}$, as shown in \eqref{eq:nablaQF} (or \eqref{eq:nablaQF_MG}), is estimated as the discounted cumulative gradient along a trajectory segment following each transition $(s,b)$ (or $(s,a,b)$) sampled for the outer expectation in \eqref{eq:hg_configurable} (or \eqref{eq:hg_MG}).

\begin{algorithm}[t]
\caption{The BC-HG Algorithm}
\label{algo:proposal}
\begin{algorithmic}[1]\small
\STATE{Initialize leader parameters $\theta^0$ and critic $Q_L^0$}
\FOR{$i=0$ to $N-1$}
\STATE{(Follower computes best response $g^{\theta^i\dagger}$ for $\mathcal{M}_{\theta^i}$)}
\STATE{Sample a batch of trajectories $B\gets\{\tau_j\}$ of length $T$}
\STATE{Get follower's best response $g^{\theta^i\dagger}$ and value function $V_F^{\theta^i\dagger}$}
\STATE{$Q_L^{i+1}\gets \texttt{CriticUpdate}(Q_L^{i},B)$ (Algorithm~\ref{algo:critic_update_sarsa},\ref{algo:critic_update_td})}
\STATE\COMMENT{$\bar{s}$ denotes state $s$ for configurable MDPs}
\STATE\COMMENT{$\bar{s}$ denotes state-action pair $(s,a)$ for MGs}
\FOR{$(\bar{s},b)\in\tau\in B$}
\STATE{$V_L(\bar{s})\gets\E_{b\sim g^{\theta^i\dagger}(\cdot\mid \bar{s})}\left[Q_L^{i+1}(\bar{s},b)\right]$}\label{algo:line_L_value}
\STATE{$B_L(\bar{s},b)\gets Q_L^{i+1}(\bar{s},b)-V_L(\bar{s})$}
\STATE{$B_{\bar{s}b}\gets\left\{\tau^{k:T}\mid (\bar{s}_k, b_k) = (\bar{s},b) \text{ for } (\bar{s}_k,b_k)\in\tau\in B\right\}$}%
\STATE{$\widehat{\nabla_{\theta}Q_F}(\bar{s},b)\gets\texttt{FollowerQGrad}(B_{\bar{s}b},\theta^i,V_F^i)$ (Algorithm~\ref{algo:estimate_f_q_grad_cmdp},\ref{algo:estimate_f_q_grad_mg})}
\ENDFOR
\STATE{$\widehat{\partial_{\theta}J_L}\gets\texttt{PartialDerivative}(B,\theta^i,V_L,Q_L^{i+1})$ (Algorithm~\ref{algo:estimate_partial_grad_cmdp},\ref{algo:estimate_partial_grad_mg})}
\STATE{$\widehat{\nabla_{\theta}J_L}\gets \widehat{\partial_{\theta}J_L}$\\ \hfill + $\frac{1}{\beta|B|}\displaystyle\sum_{\tau\in B}\sum_{(\bar{s}_t,b_t)\in\tau}\gamma_L^tB_L(\bar{s}_t,b_t)\widehat{\nabla_{\theta}Q_F}(\bar{s}_t,b_t)$}%
\STATE{$\theta^{i+1}\gets \theta^i+\alpha\widehat{\nabla_{\theta}J_L}$}
\ENDFOR
\end{algorithmic}
\end{algorithm}


\section{Experiments}

We demonstrate the advantages of our proposed BC-HG over baseline methods designed for configurable MDPs and 2-player MGs in both discrete and continuous state settings.

\subsection{Configurable MDPs (Discrete)}\label{sec:experiment_CMDP_discrete}

\paragraph{Four-Rooms Task}
This task is adapted from \citet{Thoma2024-em}.
The follower can move up, down, right, or left (4 actions) within the level (104 states) visualized in Figure~\ref{fig:four_rooms_env}. The follower starts at the cell denoted $S$ and receives a positive reward upon reaching the goal cell $G$. 
In contrast, the leader receives an immediate reward of $+1$ when the follower visits the target cell, which is denoted by ``1'' and highlighted in green.
To guide the follower, the leader can place a negative incentive (penalty) in each cell, incurring a cost proportional to the total negative incentive placed when the follower reaches the goal. 
See Appendix~\ref{apdx:fourrooms} for details. 

\paragraph{Baselines}
We compare the proposed approach against the following baselines: \textsc{Naive-PGD}, which employs first-order policy gradient updates~\citep{Rajeswaran2020-gv,Vu2022-jq} ignoring changes in the follower's best response (i.e., using $\partial_{\theta} J_L(\theta, g)\mid_{g = g^{\theta\dagger}}$); \textsc{SoBiRL}, a hypergradient approach by \citet{Yang2025-lr}, originally designed for reward shaping from human feedback; and \textsc{HPGD}, another hypergradient approach by \citet{Thoma2024-em}. 
We evaluate three variations of \textsc{HPGD}: \textsc{HPGD (oracle)}, which relies on an oracle for hypergradient estimation (as presented in the original paper); \textsc{HPGD (MC)}, which uses Monte-Carlo estimation with a batch of trajectories collected during interaction (as implemented by the authors); and \textsc{HPGD (SA)}, which utilizes SARSA-type estimation similar to our proposed approach. 
Note that \textsc{HPGD (oracle)} requires oracle access, a condition not assumed by the other methods. 
In this experiment, additional trajectories of size $10^4$ per outer iteration are generated by the oracle.
See Appendix~\ref{apdx:baseline} for details.

\paragraph{Settings}
For all approaches, the follower's best response is computed via soft value iteration with entropy regularization coefficients $\beta\in\{1\times10^{-3}, 3\times10^{-3}, 5\times10^{-3}\}$.
For the main results in Figure~\ref{fig:fourrooms}, we use a batch size of $100$, which is small relative to the state-space size of 104.
For each approach, we perform a grid search over hyperparameters and select the best combination based on average performance across 10 random seeds.
Results with different batch sizes (200, 400, 1000) are reported in Appendix~\ref{apdx:fourrooms}.

\begin{figure}[t]
    \centering%
    \includegraphics[width=0.36\linewidth]{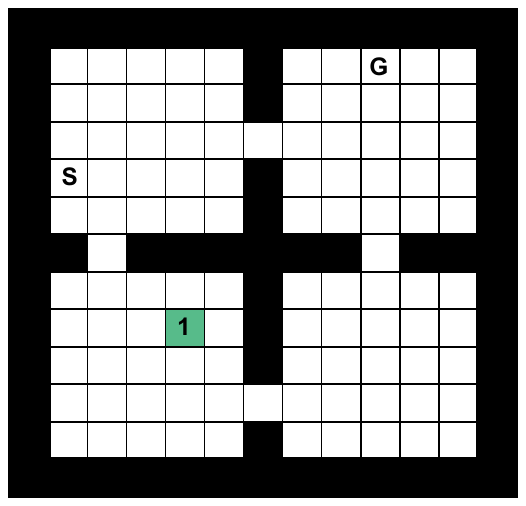}%
    \caption{Four-Rooms Environment}%
    \label{fig:four_rooms_env}%
\end{figure}%

\paragraph{Results}

\begin{figure*}[t]
    \centering
    \begin{subfigure}{0.33\hsize}
        \includegraphics[width=0.95\hsize,clip,trim=5 10 5 5]{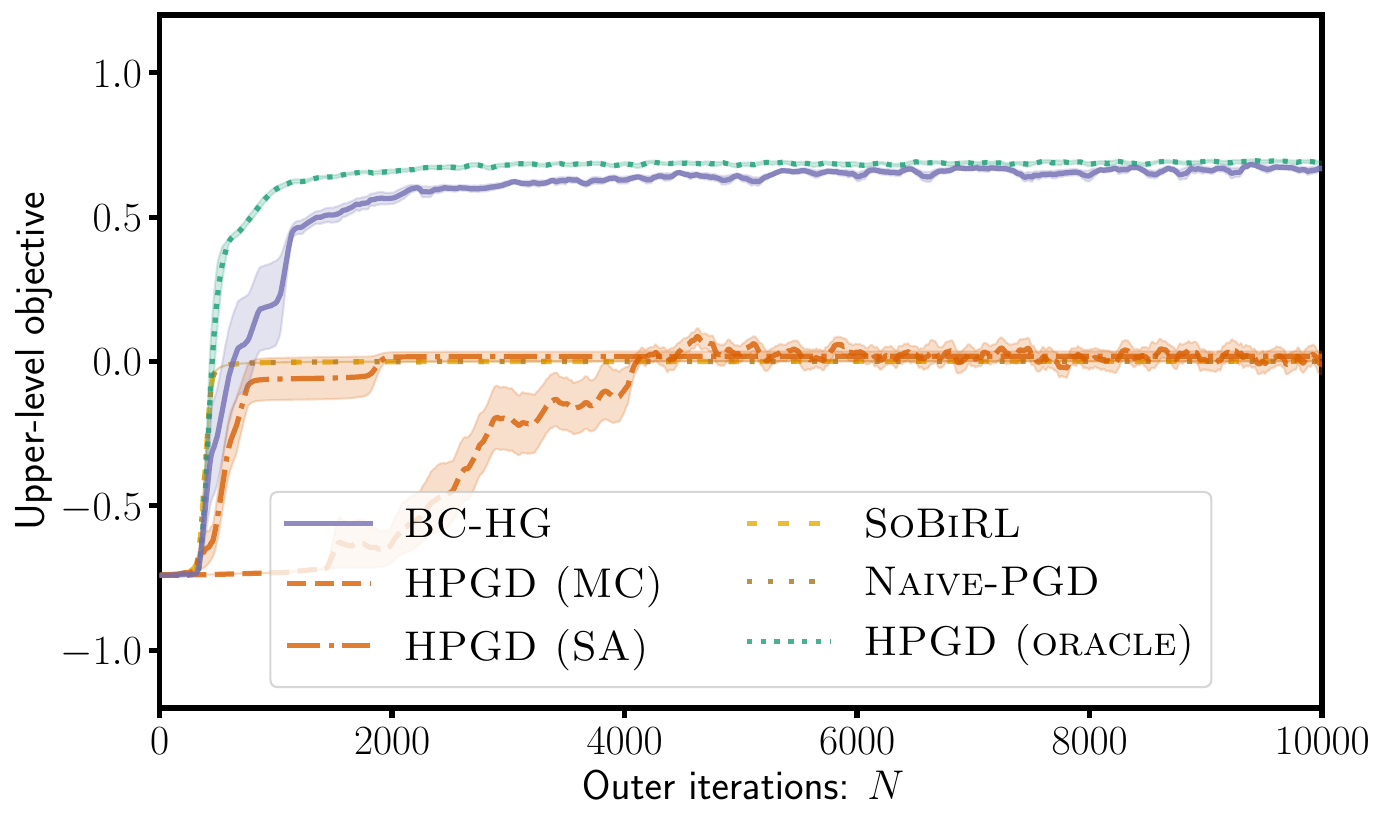}%
        \caption{$\beta=1\times10^{-3}, \text{BatchSize}=100$}%
        \label{fig:four_rooms_UL_rewards_grid_search_reg_lambda_5.0_beta_0_001_steps_100}%
    \end{subfigure}%
    \begin{subfigure}{0.33\hsize}
        \includegraphics[width=0.95\hsize,clip,trim=5 10 5 5]{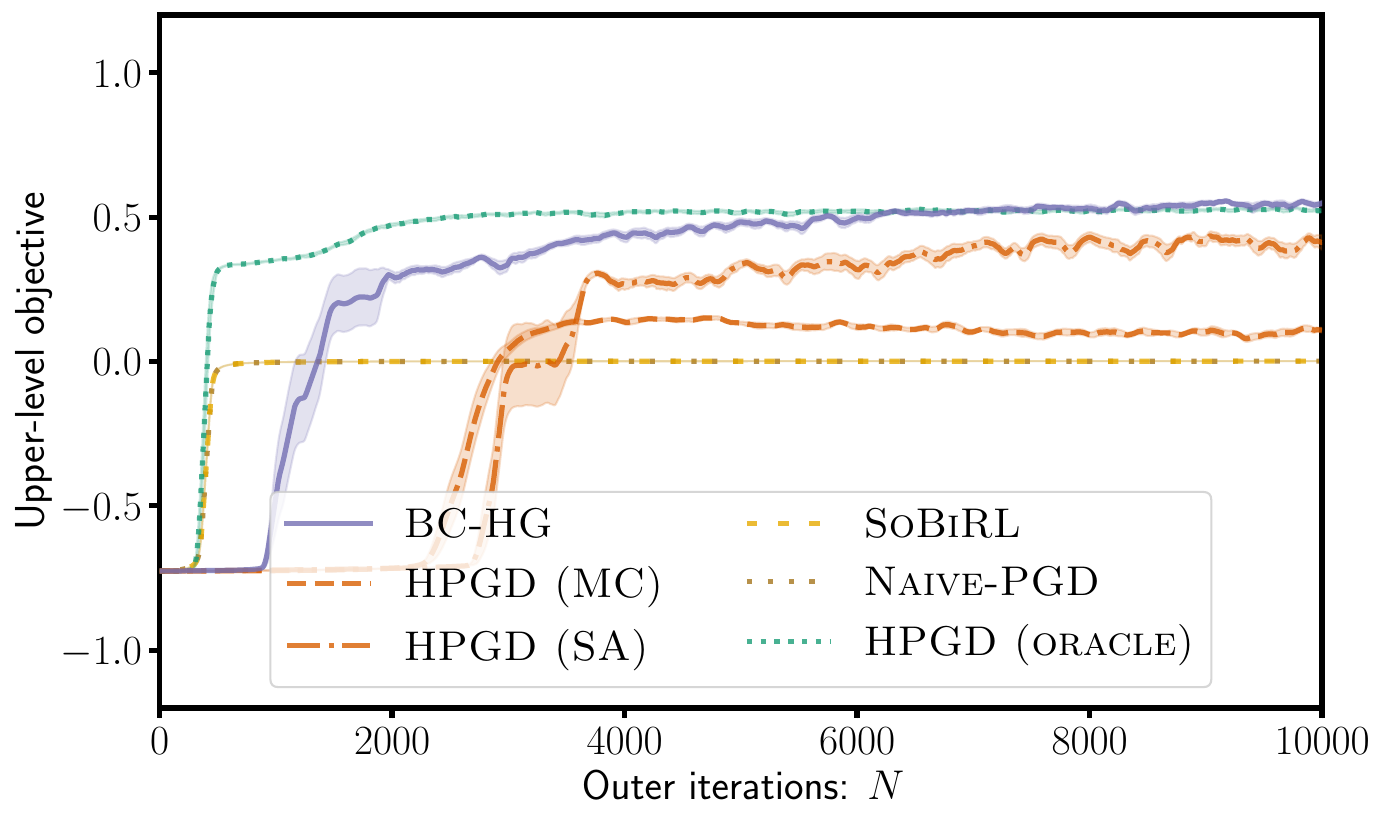}%
        \caption{$\beta=3\times10^{-3}, \text{BatchSize}=100$}%
        \label{fig:four_rooms_UL_rewards_grid_search_reg_lambda_5.0_beta_0_003_steps_100}%
    \end{subfigure}%
    \begin{subfigure}{0.33\hsize}
        \includegraphics[width=0.95\hsize,clip,trim=5 10 5 5]{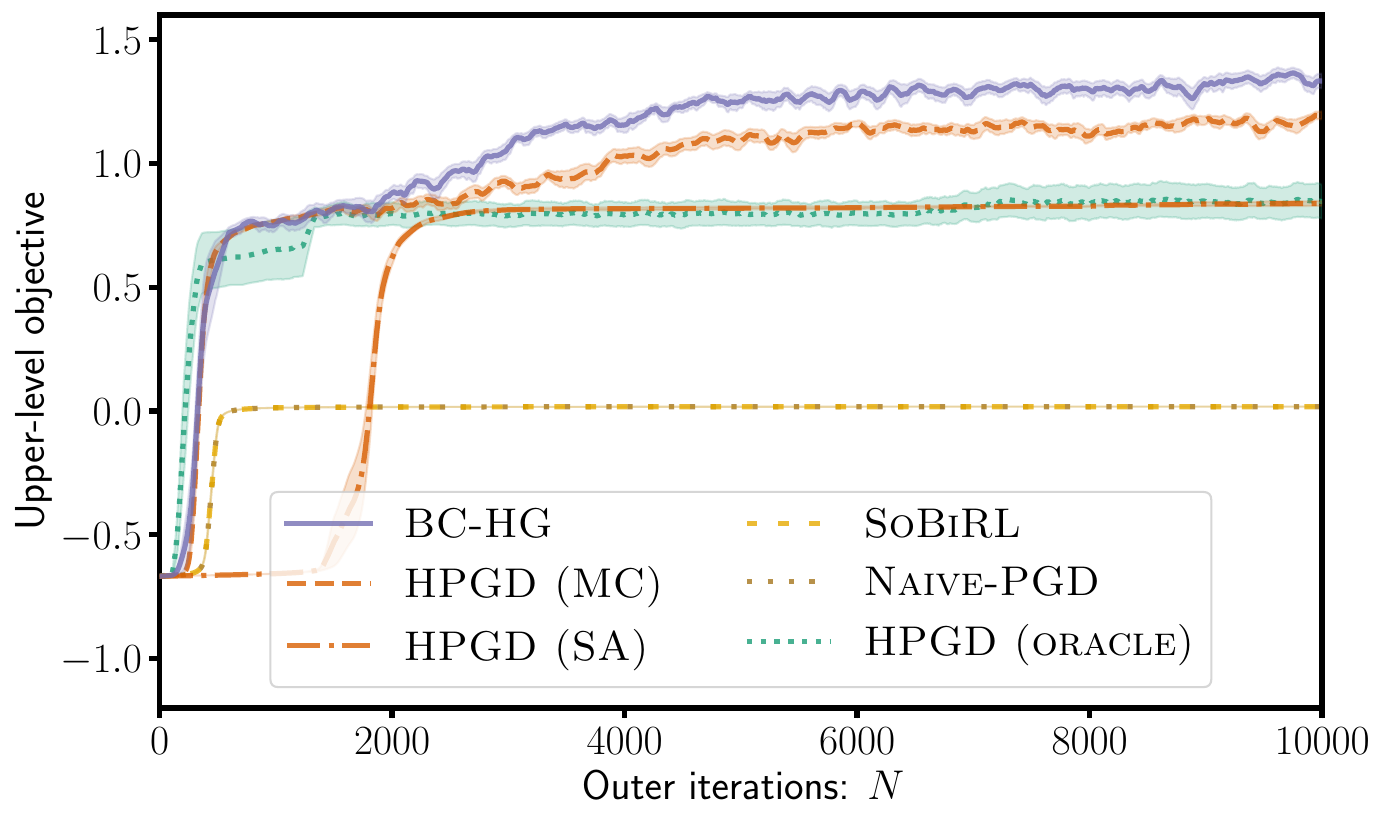}%
        \caption{$\beta=5\times10^{-3}, \text{BatchSize}=100$}%
        \label{fig:four_rooms_UL_rewards_grid_search_reg_lambda_5.0_beta_0_005_steps_100}%
    \end{subfigure}%
    \caption{Results on Four-Rooms Task ($\text{BatchSize}=100$).}\label{fig:fourrooms}
\end{figure*}

The results are presented in Figures~\ref{fig:four_rooms_UL_rewards_grid_search_reg_lambda_5.0_beta_0_005_steps_100}, \ref{fig:four_rooms_UL_rewards_grid_search_reg_lambda_5.0_beta_0_003_steps_100}, and \ref{fig:four_rooms_UL_rewards_grid_search_reg_lambda_5.0_beta_0_001_steps_100} for varying entropy regularization parameters $\beta$.
The proposed approach, \textsc{BC-HG}, consistently outperforms the baselines. 
Generally, the performance of \textsc{HPGD} variants (excluding the oracle version) degrades as entropy regularization weakens (i.e., smaller $\beta$). 
With a smaller $\beta$, selecting different actions at the same state becomes less likely. 
Consequently, the core assumption of \textsc{HPGD} variants, that trajectories starting at the same state with different actions are collected during interaction, becomes harder to satisfy. 
Results with different batch sizes are shown in Appendix~\ref{apdx:fourrooms}.
The superiority of \textsc{BC-HG} over \textsc{HPGD (SA)} underscores the impact of the Boltzmann Covariance trick, as the only difference lies in the guiding term's form in the hypergradients.
\textsc{SoBiRL}, \textsc{Naive-PGD}, and \textsc{HPGD} variants with $\beta = 1 \times 10^{-3}$ become stuck in a local minimum where no negative incentive is placed, resulting in zero penalty and zero positive reward for the leader.
This occurs because the third term of the hypergradient in Theorem~\ref{thm:hg-configurable}, capturing the shift in the best response, is poorly estimated, emphasizing the penalty effect in the first term.

\subsection{Configurable MDPs (Continuous)}\label{sec:experiment_CMDP_continuous}

\paragraph{Task Description and Settings}
Inspired by \citet{AFRAM2014343}, we designed an \emph{$n$-zone building thermal control task} where the follower controls heating, ventilation, and air conditioning (HVAC) units to minimize temperature deviations in $n$ zones using a linear quadratic regulator (LQR). 
The leader aims to enhance temperature stability and energy efficiency by adjusting building parameters, such as insulation and airflow.

This task is formulated as a bi-level extension of infinite-horizon discounted LQR with entropy regularization. 
The state $s\in\R^n$ is temperature deviations, with transitions defined as:
$s_{t+1} = A_{\theta} s_{t} + B b_{t} + w_{t}$, where $w_t\sim \mathcal{N}(0,W)$,
$A_{\theta} \in \R^{n\times n}$, $B \in \R^{n \times m}$, and $W \in \R^{n \times n}$.
$m$ is the number of HVAC units.
The follower's reward function is: $r_F(s, b) := - s^{\top} \bar{Q} s - b^{\top} \bar{R} b$, where $\bar{Q} \in \R^{n\times n}$ is positive semi-definite and $\bar{R} \in \R^{m\times m}$ is positive definite.
The leader's parameter $\theta\in \R^{2n}$ represents the insulation level of $n$ zones and the airflow level of $n$ inter-zone ventilations.

The number of zones and HVAC units is set to $n=4$ and $m=2$, respectively.
The follower's best response is computed via Riccati iteration for all approaches (see Appendix~\ref{apdx:lqr} for details).
We conduct a grid search over hyperparameters for each approach using 10 random seeds.
Further details are provided in Appendix~\ref{apdx:building_thermal_control}.

\paragraph{Baselines}
We consider the following baselines: \textsc{Naive-PGD}, \textsc{HPGD (MC)}, and \textsc{HPGD (TD)}.
In continuous state settings, \citet{Thoma2024-em} implement \textsc{HPGD} by estimating the follower's value function gradient using a function approximator rather than additional trajectories, estimating the gradient as a vector-valued function in the cumulative form of \eqref{eq:nablaQF}.
This implementation does not rely on an oracle.
However, the learning cost of the vector-valued function increases with higher-dimensional leader parameters.
\textsc{HPGD} estimates the leader's Q-function via Monte-Carlo estimation in \textsc{HPGD (MC)} and via the TD-method with neural networks in \textsc{HPGD (TD)}.
Note that \textsc{SoBiRL} is inapplicable here because the leader parameterizes the transitions.

\paragraph{Results}
Figure~\ref{fig:env_3_2_SR_1_0_EC_0_5_IC_0_1_AC_0_1_return_UL_no_outliers} displays the results.
\textsc{BC-HG} achieves the highest convergence in the fewest iterations.
\textsc{Naive-PGD} shows more gradual improvement, suggesting that while partial derivative information is useful in this non-competitive task, it is insufficient without hypergradient information.
\textsc{HPGD} exhibits slower improvement or even degradation with certain hyperparameters.
These results imply that the estimation accuracy of the follower's value gradient was insufficient for the 8-dimensional leader parameter.
Given that the learning cost of the value gradient increases with the dimensionality of the leader parameter, \textsc{BC-HG} is more scalable. 
See Appendix~\ref{apdx:building_thermal_control} for more results.

\begin{figure}[t]
    \centering
    \includegraphics[width=0.7\hsize,clip,trim=5 5 5 0]{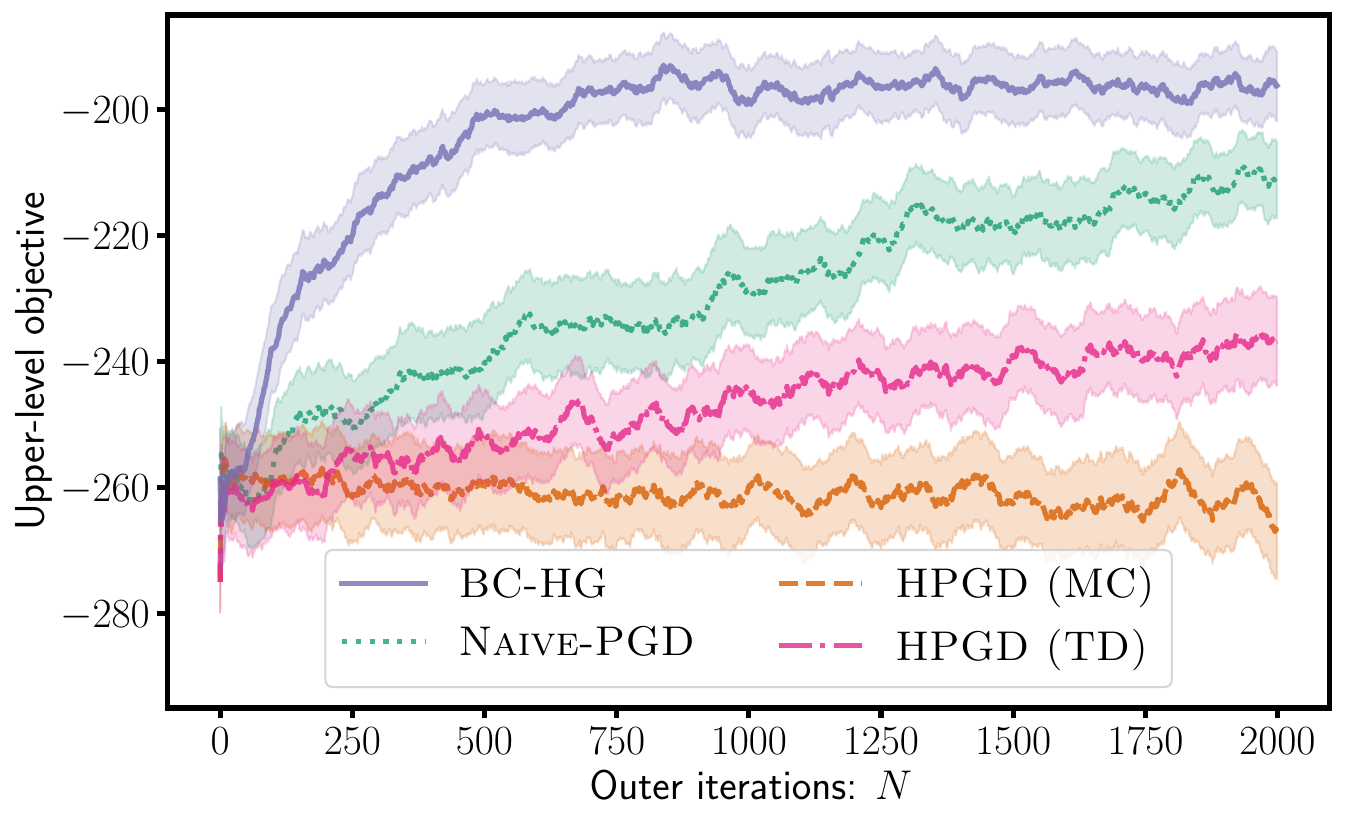}%
    \caption{Results on Building Thermal Control.}%
    \label{fig:env_3_2_SR_1_0_EC_0_5_IC_0_1_AC_0_1_return_UL_no_outliers}%
\end{figure}

\subsection{2-Player Markov Games (Discrete)}\label{sec:experiment_MG_discrete}

\paragraph{Task Description and Setting}
This task involves finite state and action spaces with deterministic transitions.
The state space is $\mathcal{S} = \{S, A, B\}$, and action spaces are $\mathcal{A} = \{0, 1\}$ (leader) and $\mathcal{B} = \{s, a, b\}$ (follower).
Deterministic transition dynamics and reward functions are illustrated in Figure~\ref{fig:DiscreteToy4_1b-v1}.
Leader actions at states $S$ and $B$ do not influence rewards or transitions; thus, we focus on the leader's action selection probability at state $A$.
If $f_{\theta}(0\mid A) \approx 1$, the follower prefers actions $a$, $b$, and $s$ at states $S$, $A$, and $B$, respectively. This results in the cycle $S \xrightarrow{*|a} A \xrightarrow{0|b} B \xrightarrow{*|s} S$, yielding rewards of $1, 2, 0$ for the follower and $1, 0, 0$ for the leader.
Conversely, if $f_{\theta}(1\mid A) \approx 1$, the follower prefers actions $b$, $b$, and $s$ at states $S$, $A$, and $B$, respectively. This leads to the cycle $S \xrightarrow{*|b} B \xrightarrow{*|s} S$, yielding rewards of $1, 0$ for the follower and $0, 0$ for the leader.
Since the former scenario yields a higher return for the follower, the follower prefers action $a$ at $S$ even when $f_{\theta}(0 \mid A) = p \in (0, 1)$, despite the risk of a negative reward $-1$ with probability $1-p$.
From the leader's perspective, the $S \leftrightarrow A$ cycle yields the highest cumulative reward.
Therefore, the optimal leader strategy is to minimize $p$ while maintaining a high probability of the follower selecting action $a$ at $S$. 

For all approaches, the follower's best response is computed via soft Q-iteration with entropy regularization coefficient $\beta=5\times10^{-2}$.
We conduct a grid search over hyperparameters (number of critic/actor updates per outer iteration, on-policy vs. off-policy sampling) and select the best combination based on average performance across 10 random seeds.
All learning rates are fixed.
Performance is evaluated by averaging the cumulative leader reward across 10 rollouts.
See Appendix~\ref{apdx:toymarkovgame} for further details.

\paragraph{Baselines}


We consider two baselines.
The first is \textsc{Naive-PGD}, and the second is a variant of Bi-AC~\cite{Zhang2020-hw}, originally proposed for centralized learning, adapted to our decentralized setting.
The core idea of Bi-AC is to update the leader and follower Q-tables, $Q_L$ and $Q_F$, as follows:
\begin{multline*}
Q_i'(s, a, b) \gets (1 - \alpha)  Q_i(s, a, b)\\+ \alpha (r_i(s, a, b) + \gamma_i Q_i(s', a^*, b^*(a^*)))~(i\in\{L,F\})
\end{multline*}
where
$b^(a') := \argmax_{b'} Q_F(s', a', b')$,
and
$a^ := \argmax_{a'} Q_L(s', a', b^*(a'))$.
That is, for each $s'$, the Q-tables are updated using the Stackelberg equilibrium of the bimatrix game defined by the pair of Q-value matrices
$(Q_L(s', \cdot, \cdot), Q_F(s', \cdot, \cdot))$.
In the modified version of \textsc{Bi-AC} used in this study, we replace the critic update rule with the above update and substitute $Q_F$ with the follower's optimal Q-function $Q_F^{\theta\dag}$.
The actor is updated using only the direct gradient term, in the same manner as \textsc{Naive-PGD}.

\paragraph{Results}
\begin{figure}[t]
    \centering
    \begin{subfigure}{0.9\hsize}%
        \centering
        \includegraphics[width=\hsize,clip,trim=10 10 10 10]{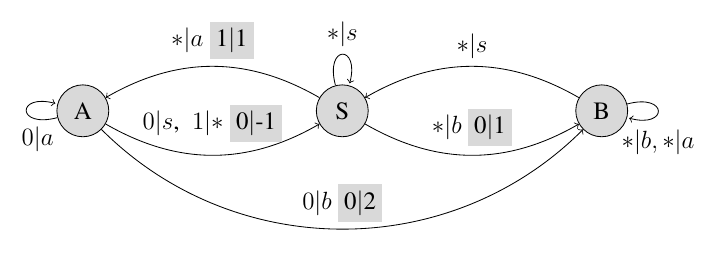}%
        \caption{State Transition Diagram. Nodes represent states and edges represent transitions. Labels on edges indicate 
        $\mathcal{A}\mid\mathcal{B}$
        and 
        \colorbox{gray!20}{$r_L \mid r_F$}.         
        The absence of shaded labels implies zero rewards. ``$*$'' represents a wildcard.}%
        \label{fig:DiscreteToy4_1b-v1}%
    \end{subfigure}%
    \\%
    \begin{subfigure}{\hsize}%
        \centering%
        \includegraphics[width=0.7\hsize,clip,trim=5 5 5 0]{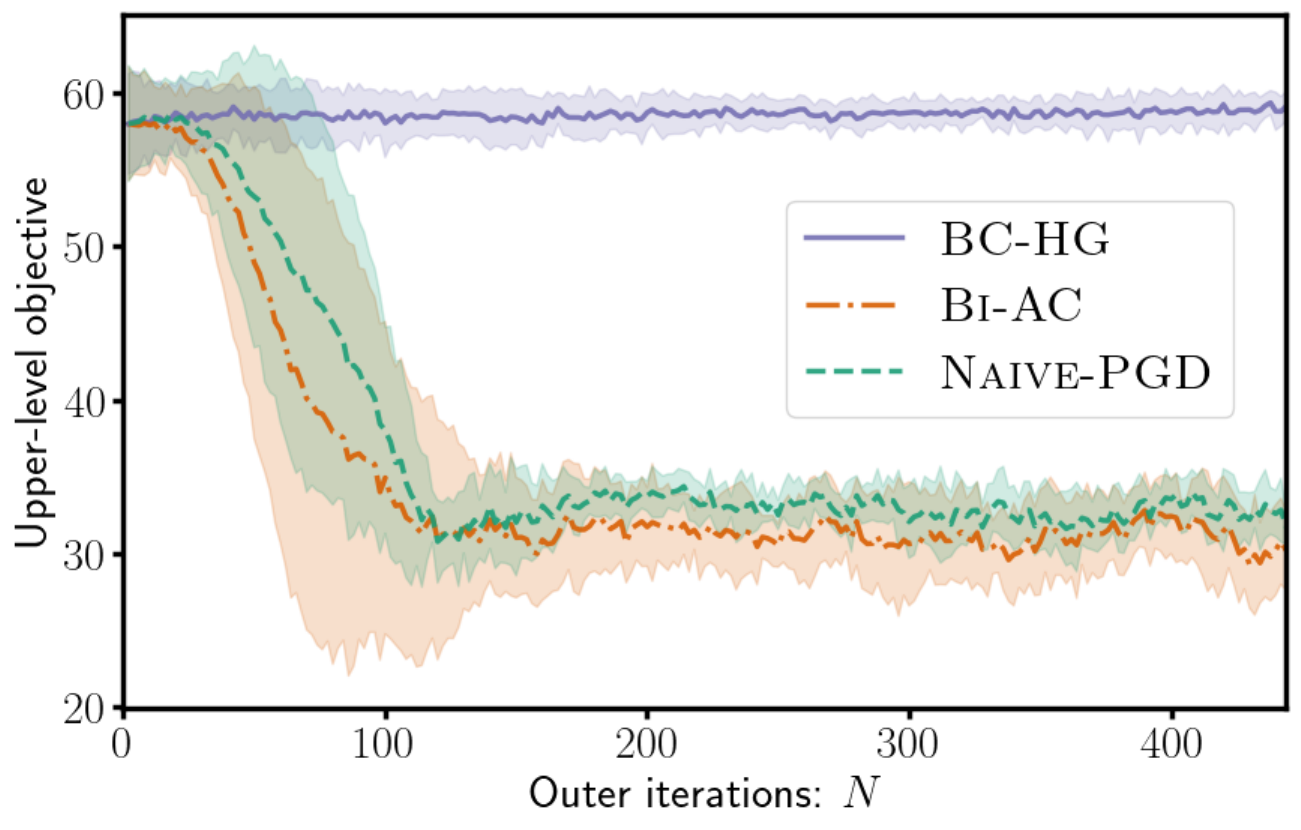}%
        \caption{Result}%
        \label{fig:dicrete_toy_Evaluation_AverageTargetReturn_AveStd}%
    \end{subfigure}%
    \caption{Task Description and Results on ToyMarkovGame.}%
    \label{fig:toymarkovgame}
\end{figure}

Figure~\ref{fig:dicrete_toy_Evaluation_AverageTargetReturn_AveStd} shows the results. 
\textsc{BC-HG} clearly outperforms the baselines. 
\textsc{BC-HG} maintains $f_{\theta}(0 \mid A) \approx 0.53$, with the corresponding follower best response $g^{\theta\dagger}(a \mid S, *) \approx 1$. Consequently, the $S \to A$ loop and $S \to A \to B$ loop occur with probabilities around $0.47$ and $0.53$, respectively.
In contrast, baseline approaches maintain $f_{\theta}(0 \mid A) \approx 0.4$ and $g^{\theta\dagger}(\cdot \mid S, *) \approx 0, 0.47, 0.53$ for $s, a, b$, respectively. As $f_{\theta}(0 \mid A)$ is too small, the $S \leftrightarrow B$ loop occurs, resulting in a loss for the leader.
This trend persists across a wide range of hyperparameters, as discussed in Appendix~\ref{apdx:toymarkovgame}. The policy gradient with Bi-AC-type Q-update exhibits a similar trend.

\subsection{2-Player Markov Games (Continuous)}\label{sec:experiment_MG_continuous}

\paragraph{Bi-Level LQR in 2-Player Markov Games}
We consider the extension of LQR tasks to MGs.
Introducing the leader's action, the state transition is defined as:
$s_{t+1} = A s_{t} + B b_{t} + C a_{t}$,
where $A \in \R^{n\times n}$, $B \in \R^{n \times m}$, and $C \in \R^{n \times k}$ are constant. The follower's reward function is $r_F(s, a, b) = - s^{\top} \bar{Q} s - b^{\top} \bar{R} b$. 
The leader's policy $f_{\theta}(a \mid s)$ is modeled by a Gaussian distribution $\mathcal{N}(K_{\theta} s, W)$, where $K_{{\theta}} \in \R^{k \times n}$ is the leader's policy parameter and $W \in \R^{k \times k}$ is the fixed covariance matrix.
In this experiment, the leader's reward function is:
\begin{equation*}
    r_L(s, a, b) := \exp\Big( -\frac{1}{2} (s - s^\star)^\top \Sigma (s - s^\star) \Big),
\end{equation*}
where $\Sigma \in \R^{n \times n}$ defines the reward sharpness. The leader aims to guide the follower to visit $s^\star \in \mathcal{S}$.  
The follower's best response is derived by solving the Riccati equation.
Further details are provided in Appendix~\ref{apdx:bilevel_lqr_task_mg}.

\paragraph{Results}
Figure~\ref{fig:lqr} presents the results for the bi-level LQR task.
\textsc{BC-HG} outperforms the baseline. 
A possible reason for the high variance in the upper-level performance of the proposed approach is that leader policies achieving higher cumulative rewards tend to exhibit higher variance in cumulative reward, as demonstrated in Figure~\ref{fig:lqr_hist} (Appendix~\ref{apdx:bilevel_lqr_task_mg}). 

\begin{figure}[t]
    \centering
    \includegraphics[width=0.7\hsize,clip,trim=10 5 10 5]{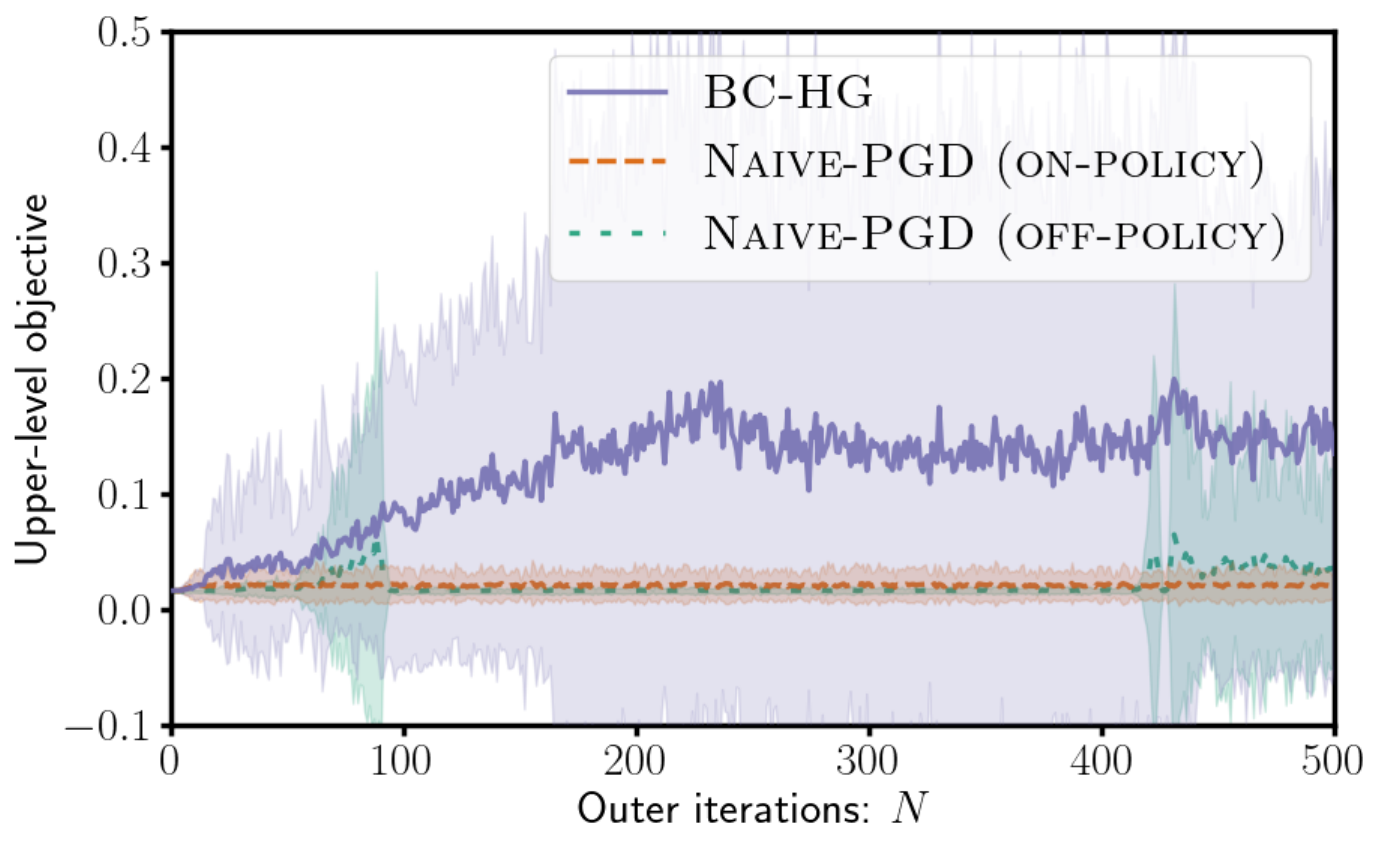}%
    \caption{Results on the Bi-Level LQR Task.}%
    \label{fig:lqr}%
\end{figure}



\section{Conclusion}

We developed hypergradient-based bi-level RL algorithms for configurable MDPs and MGs. 
By leveraging the Boltzmann covariance trick, we circumvented the need for oracle-based trajectory generation while simultaneously enabling efficient and unbiased hypergradient estimation. 
In numerical experiments, our approach demonstrated scalability to large state spaces and high-dimensional parameterizations in configurable MDPs. We also validated the effectiveness of our method in Markov game settings with both discrete and continuous state spaces. 
Future work includes addressing more realistic decentralized follower scenarios where the follower is either suboptimal or black-box.





\appendix

\onecolumn

\section{Algorithms}

\subsection{Modules within Proposed Algorithm~\ref{algo:proposal}}

Note that $\bar{s}$ denotes state $s$ in configurable MDP settings and denotes a pair of state $s$ and leader action $a$ in 2-player MGs.

\begin{algorithm}[H]
\caption{$\texttt{CriticUpdate}(Q_L^i,B)$ (with SARSA)}
\label{algo:critic_update_sarsa}
\begin{algorithmic}[1]
\REQUIRE{Current critic $Q_L^i$, sample batch $B$, learning rate $\alpha_{SA}$}
\FOR{$(\bar{s},b,r_L,s',b')\in B$}
\STATE{$Q_L^i(\bar{s},b)\gets Q_L^i(\bar{s},b)+\alpha_{SA}\left(r_L+\gamma_LQ_L^i(\bar{s}',b')-Q_L^i(\bar{s},b)\right)$}
\ENDFOR
\STATE{$Q_L^{i+1}\gets Q_L^i$}
\ENSURE{$Q_L^{i+1}$}
\end{algorithmic}
\end{algorithm}

\begin{algorithm}[H]
\caption{$\texttt{CriticUpdate}(Q_{\omega_L^{i}},B)$ (with TD Method)}
\label{algo:critic_update_td}
\begin{algorithmic}[1]
\REQUIRE{Current critic $Q_{\omega_L^{i}}$, sample batch $B$, learning rate $\alpha_{TD}$}
\FOR{$(\bar{s}_j,b_j,r_L^{(j)},\bar{s}_j',b_j')\in B$}
\STATE{Compute target Q-value $y_j\gets r_L^{(j)}+\gamma_LQ_{\omega_L^i}(\bar{s}_j',b_j')$}
\ENDFOR
\STATE{$\omega_L^{i+1} \gets \omega_L^{i}-\alpha_{TD}\nabla_{\omega_L}\dfrac{1}{|B|}\sum_{(\bar{s}_j,b_j)\in B}\left(y_j-Q_{\omega_L^i}(\bar{s}_j,b_j)\right)^2$}
\ENSURE{$Q_{\omega_L^{i+1}}$}
\end{algorithmic}
\end{algorithm}


\begin{algorithm}[H]
\caption{$\texttt{PartialDerivative}(B,\theta^i,V_L,\cdot)$ in Configurable MDP Settings}
\label{algo:estimate_partial_grad_cmdp}
\begin{algorithmic}[1]\small
\REQUIRE{Trajectory batch $B$, current leader parameter $\theta^i$, leader value function $V_L$}
\STATE{$\widehat{\partial_{\theta}J_L}\gets\frac{1}{|B|}\sum_{\tau\in B}\sum_{(s_t,b_t)\in\tau}\gamma_L^{t}\left(\nabla_{\theta}r_L^{\theta^i}(s_t,b_t)+V_L(s_t) \nabla_{\theta}\log p^{\theta^i} (s_{t} \mid s_{t-1}, b_{t-1})\right)$}
\ENSURE{$\widehat{\partial_{\theta}J_L}$}
\end{algorithmic}
\end{algorithm}

\begin{algorithm}[H]
\caption{$\texttt{PartialDerivative}(B,\theta^i,\cdot,Q_L)$ in Markov Game Settings}
\label{algo:estimate_partial_grad_mg}
\begin{algorithmic}[1]\small
\REQUIRE{Trajectory batch $B$, current leader parameter $\theta^i$, leader Q-function $Q_L$}
\STATE{$\widehat{\partial_{\theta}J_L}\gets\dfrac{1}{|B|}\sum_{(s,a,b)\in B}Q_L(s,a,b)\nabla_{\theta}\log f_{\theta^i}(a\mid s)$}\label{algo:partial_deriv_MG}
\ENSURE{$\widehat{\partial_{\theta}J_L}$}
\end{algorithmic}
\end{algorithm}

\begin{algorithm}[H]
\caption{$\texttt{FollowerQGrad}(B_{sb},\theta^i,V_F)$ in Configurable MDP Settings}
\label{algo:estimate_f_q_grad_cmdp}
\begin{algorithmic}[1]\small
\REQUIRE{Trajectory segments $B_{sb}$, current leader parameter $\theta^i$, follower value $V_F$}
\STATE{$\widehat{\nabla_{\theta}Q_F}(s,b)\gets\frac{1}{|B_{sb}|}\sum_{\tau^{k:T}\in B_{sb}}\sum_{(s_t,b_t)\in\tau^{k:T}}\gamma_F^{t-k}\nabla_{\theta}r_F^{\theta^i}(s_t,b_t)+ \gamma_F^{t-k+1}V_F(s_{t+1})\nabla_{{\theta}}\log p^{{\theta^i}}(s_{t+1}\mid s_t,b_t)$}
\ENSURE{$\widehat{\nabla_{\theta}Q_F}(s,b)$}
\end{algorithmic}
\end{algorithm}

\begin{algorithm}[H]
\caption{$\texttt{FollowerQGrad}(B_{sb},\theta^i,V_F)$ in Markov Game Settings}
\label{algo:estimate_f_q_grad_mg}
\begin{algorithmic}[1]\small
\REQUIRE{Trajectory segments $B_{sb}$, current leader parameter $\theta^i$, follower value $V_F$}
\STATE{$\widehat{\nabla_{\theta}Q_F}(s,a,b)\gets\dfrac{1}{|B_{sab}|}\sum_{\tau^{k:T}\in B_{sab}}\sum_{(s_t,a_t,b_t)\in\tau^{k:T}}\gamma_F^{t-k} V_F(s_t,a_t)\nabla_{\theta}\log f_{\theta^i}(a_t\mid s_t)$}
\ENSURE{$\widehat{\nabla_{\theta}Q_F}(s,a,b)$}
\end{algorithmic}
\end{algorithm}

\subsection{Algorithms for Four-Rooms Task Experiments}

\begin{algorithm}[H]
\caption{Algorithms for Four-Rooms Task Experiments}
\label{algo:fourrooms}
\begin{algorithmic}[1]
\STATE{Initialize the leader's parameter $\theta^0$ randomly}
\STATE{\textcolor{red}{(\textsc{HPGD (SA)}} / \textcolor{blue}{\textsc{BC-HG})} Initialize the leader's critic $Q_L(s,b)$ randomly for all $(s,b)\in\mathcal{S}\times\mathcal{B}$}
\FOR{$i=0$ to $N-1$}
\STATE{Compute the follower's best response $g^{\theta^i\dagger}$ for $\mathcal{M}_{\theta^i}$ via soft-value iteration}
\STATE{Sample a batch of trajectories $B\gets\{\tau_j:=(s_t,b_t)_{t=0}^{T-1}\sim\texttt{SampleTrajectory}(g^{\theta^i\dagger},\texttt{initial\_state}=\texttt{S},\texttt{episode\_length}=T)\}_{j=1}^M$}
\IF{\texttt{use\_oracle}}
\STATE{$\bar{B}\gets\{\bar{\tau}_j:=(s_t,b_t)_{t=0}^{T-1}\sim\texttt{SampleTrajectory}(g^{\theta^i\dagger},\texttt{initial\_state}=\mathrm{Uniform},\texttt{episode\_length}=T)\}_{j=1}^{\bar{M}}$}\label{algo:fourroom_oracle_rollouts}
\ELSE
\STATE{$\bar{B}\gets B$}
\ENDIF
\STATE\COMMENT{Estimate the upper-level partial derivative}
\FOR{$s$ in $\mathcal{S}$}
\STATE{Extract trajectory segments $B_{s}\gets\left\{\tau_j^{k:T}\mid\exists j,\exists k,s=s_k\in\tau_j\in B,\tau_j\succeq_{\mathrm{suffix}}\tau_j^{k:T}\right\}$}
\STATE{$\widehat{\partial_{\theta}V_L}(s)\gets\frac{1}{|B_{s}|}\sum_{\tau^{k:T}\in B_{s}}\sum_{(s_t,b_t)\in\tau^{k:T}}\gamma_L^{t-k}\nabla_{\theta}r_L^{\theta^i}(s_t,b_t)$}
\ENDFOR
\STATE{$\widehat{\partial_{\theta}J_L}\gets \frac{1}
{|B|}\sum_{\tau\in B}\widehat{\partial_{\theta}V_L}(s_0)$}\label{algo:partial_deriv}
\STATE\COMMENT{Estimate the lower-level advantage gradient}
\FOR{$(s,a)$ in $\mathcal{S}\times\mathcal{A}$}
\STATE{Extract trajectory segments $\bar{B}_{sb}\gets\left\{\bar{\tau}_j^{k:T}\mid\exists j,\exists k,(s,b)=(s_k, b_k)\in\bar{\tau}_j\in\bar{B},\bar{\tau}_j\succeq_{\mathrm{suffix}}\bar{\tau}_j^{k:T}\right\}$}%
\STATE{$\widehat{\partial_{\theta}Q_F}(s,b)\gets\frac{1}{|\bar{B}_{sb}|}\sum_{\bar{\tau}^{k:T}\in \bar{B}_{sb}}\sum_{(s_t,b_t)\in\bar{\tau}^{k:T}}\gamma_F^{t-k}\nabla_{\theta}r_F^{\theta^i}(s_t,b_t)$}\label{algo:fourrooms_f_q_grad}
\STATE{\textcolor{red}{(HPGD) Extract trajectory segments $\bar{B}_{s}\gets\left\{\bar{\tau}_j^{k:T}\mid\exists j,\exists k, s=s_k\in\bar{\tau}_j\in\bar{B},\bar{\tau}_j\succeq_{\mathrm{suffix}}\bar{\tau}_j^{k:T}\right\}$}}
\STATE{\textcolor{red}{(HPGD) $\widehat{\partial_{\theta}V_F}(s)\gets\frac{1}{|\bar{B}_{s}|}\sum_{\bar{\tau}^{k:T}\in \bar{B}_{s}}\sum_{(s_t,b_t)\in\bar{\tau}^{k:T}}\gamma_F^{t-k}\nabla_{\theta}r_F^{\theta^i}(s_t,b_t)$}}\label{algo:fourrooms_f_v_grad}
\ENDFOR
\STATE\COMMENT{Estimate the Benefit}
\STATE{\textcolor{red}{(\textsc{HPGD (MC)}} / \textcolor{magenta}{\textsc{SoBiRL})} $Q_L,V_L\gets \texttt{EstimateValueByMonteCarlo}(\bar{B}_{sb},\bar{B}_{s},\theta^i)$ (Algorithm~\ref{algo:benefit_estimation_montecarlo})}
\STATE{\textcolor{red}{(\textsc{HPGD (SA)}} / \textcolor{blue}{\textsc{BC-HG})} $Q_L,V_L \gets \texttt{EstimateValueBySARSA}(B,Q_L,\theta^i,g^{\theta^i\dagger})$ (Algorithm~\ref{algo:benefit_estimation_sarsa})}
\STATE{$B_L(s,b)\gets Q_L(s,b)-V_L(s)$ for all $(s,b)\in\tau\in B$}
\STATE\COMMENT{Estimate the hypergradient}
\STATE{\textcolor{brown}{(\textsc{Naive-PGD}) $\widehat{\nabla_{\theta}J_L}\gets\widehat{\partial_{\theta}J_L}$}}
\STATE{\textcolor{blue}{(\textsc{BC-HG}) $\widehat{\nabla_{\theta}J_L}\gets\widehat{\partial_{\theta}J_L}+\frac{1}{\beta}\frac{1}{|B|}\sum_{\tau\in B}\sum_{(s_t,b_t)\in\tau}\gamma_L^tB_1(s_t,b_t)\widehat{\partial_{\theta}Q_F}(s_t,b_t)$}}
\STATE{\textcolor{red}{(\textsc{HPGD}) $\widehat{\nabla_{\theta}J_L}\gets\widehat{\partial_{\theta}J_L}+\frac{1}{\beta}\frac{1}{|B|}\sum_{\tau\in B}\sum_{(s_t,b_t)\in\tau}\gamma_L^tB_1(s_t,b_t)\left(\widehat{\partial_{\theta}Q_F}(s,a)-\widehat{\partial_{\theta}V_F}(s)\right)$}}
\STATE{\textcolor{magenta}{(\textsc{SoBiRL}) $\widehat{\nabla_{\theta}J_L}\gets\widehat{\partial_{\theta}J_L}+\frac{1}{\beta}\frac{1}{|B|}\sum_{\tau\in B}\sum_{(s_t,b_t)\in\tau}Q_L(s_0,b_0)\left(\nabla_{\theta}r_F^{\theta^i}(s_t,b_t)-\E_{b\sim g^{\theta\dagger}(\cdot\mid s_t)}\left[\nabla_{\theta}r_F^{\theta^i}(s_t,b)\right]\right)$}}
\STATE{$\theta^{i+1}\gets \theta^i+\alpha\widehat{\nabla_{\theta}J_L}$}
\ENDFOR
\end{algorithmic}
\end{algorithm}

\begin{algorithm}[H]
\caption{$\texttt{EstimateValueByMonteCarlo}(\bar{B}_{sb},\bar{B}_{s},\theta^i)$}
\label{algo:benefit_estimation_montecarlo}
\begin{algorithmic}[1]
\REQUIRE{trajectory segments $\bar{B}_{sb}$ and $\bar{B}_{s}$, current leader's parameter $\theta^i$}
\FOR{$(s,a)$ in $\mathcal{S}\times\mathcal{A}$}
\STATE{$Q_L(s,b)\gets\frac{1}{|\bar{B}_{sb}|}\sum_{\bar{\tau}^{k:T}\in \bar{B}_{sb}}\sum_{(s_t,b_t)\in\bar{\tau}^{k:T}}\gamma_L^{t-k} r_L^{\theta^i}(s_t,b_t)$}
\STATE{$V_L(s)\gets\frac{1}{|\bar{B}_{s}|}\sum_{\bar{\tau}^{k:T}\in \bar{B}_{s}}\sum_{(s_t,b_t)\in\bar{\tau}^{k:T}}\gamma_L^{t-k}r_L^{\theta^i}(s_t,b_t)$}
\ENDFOR
\ENSURE{$Q_L,V_L$}
\end{algorithmic}
\end{algorithm}

\begin{algorithm}[H]
\caption{$\texttt{EstimateValueBySARSA}(B,Q_L,\theta^i, g^{\theta\dagger})$}
\label{algo:benefit_estimation_sarsa}
\begin{algorithmic}[1]
\REQUIRE{transition sample batch $B$, current critic $Q_L$, current leader's parameter $\theta^i$, follower's best response $g^{\theta\dagger}$}
\FOR{$(s,b,s',b')\in B$}
\STATE{$Q_L(s,b)\gets Q_L(s,b)+\alpha_{SA}\left(r_L^{\theta^i}(s,b)+\gamma_LQ_L(s',b')-Q_L(s,b)\right)$}
\ENDFOR
\FOR{$s\in B$}
\STATE{$V_L(s)\gets\sum_{b\in\mathcal{B}}g^{\theta\dagger}(b\mid s)Q_L(s,b)$}
\ENDFOR
\ENSURE{$Q_L,V_L$}
\end{algorithmic}
\end{algorithm}


\section{Proofs}

\subsection{Preliminaries}\label{apdx:discounted_cumulative_state_visitation}

We introduce the discounted state visitation distribution $d_{\gamma}^{\theta g}(s)$.
For conciseness, we define the next-state transition probability under policies $f_\theta$ and $g$ as:
\begin{equation}
    p^{\theta g}(s' \mid s) := \int_{a}\int_{b} p(s'\mid s, a, b) g(b \mid s, a) f_{\theta}(a \mid s) \rmd a \rmd b .
\end{equation}
Given the initial state-action tuple $(\bar{s}, \bar{a}, \bar{b}) \in \mathcal{S}\times\mathcal{A}\times\mathcal{B}$, the state visitation probability $\rho_t^{\theta g}(s\mid \bar{s},\bar{a},\bar{b})$ at time step $t\geq 1$ is recursively defined as:
\begin{align}
    \rho_1^{\theta g}(s\mid \bar{s},\bar{a},\bar{b}) &:= p(s\mid \bar{s},\bar{a},\bar{b}),\\
    \rho_t^{\theta g}(s\mid \bar{s},\bar{a},\bar{b}) &:= \int_{\dot{s}} p^{\theta g}(s\mid \dot{s}) \rho_{t-1}^{\theta g}(\dot{s}\mid \bar{s},\bar{a},\bar{b}) \rmd \dot{s} .
\end{align}
The conditional $\gamma$-discounted state visitation distribution is then defined as: 
\begin{equation}
    d_\gamma^{\theta g}(s\mid\bar{s}, \bar{a}, \bar{b}) := (1 - \gamma) \sum_{t=1}^{\infty} \gamma^{t} \rho_t^{\theta g}(s_t\mid\bar{s}, \bar{a}, \bar{b}).
\end{equation}
The corresponding unconditional version is: 
\begin{equation}
    d_\gamma^{\theta g}(s) := \int_{\bar{s}}\int_{\bar{a}}\int_{\bar{b}} d_\gamma^{\theta g}(s\mid\bar{s}, \bar{a}, \bar{b}) 
    g(\bar{b}\mid\bar{s},\bar{a})f_{\theta}(\bar{a}\mid\bar{s})\rho_0(\bar{s})  \rmd \bar{s} \rmd \bar{a} \rmd \bar{b}.
\end{equation}
For succinctness, we use the notation $d_\gamma^{\theta \dag}$ to denote $d_\gamma^{\theta g^{\theta\dagger}}$.

We define $\E_{d_{\gamma}^{\theta g}}^{f_{\theta}g}[\cdot]$ as the expectation over the state-action tuple $(s,a,b)\sim g(b\mid s,a)f_{\theta}(a\mid s)d_{\gamma}^{\theta g}(s)$. Separately, we define $\E_\tau^{f_{\theta}g}[\cdot]$ as the expectation over the trajectory $\tau:=(s_0,a_0,b_0,s_1,a_1,b_1,\dots)$ generated under $f_{\theta}$ and $g$, whose probability is given by:
\begin{equation}
    p_\tau^{{\theta} {g}}(\tau) := \rho_0(s_0) \prod_{t=0}^{\infty} p(s_{t+1} \mid s_{t}, a_{t}, b_{t}) g(b_{t}\mid s_{t}, a_{t}) f_{{\theta}}(a_{t}\mid s_{t}).
\end{equation}
It is a known result in discounted MDPs that these two expectations are related for any bounded function $h:\mathcal{S}\times\mathcal{A}\times\mathcal{B}\to\R$:
\begin{equation}
    \E_{d_{\gamma}^{\theta g}}^{f_{\theta}g}\left[h(s,a,b)\right]=(1-\gamma)\E_\tau^{f_{\theta}g}\left[\sum_{t=0}^\infty\gamma^t h(s_t,a_t,b_t)\right].
\end{equation}

We also use $\mathbb{E}_\tau^{\theta g}[\cdot]$ to denote the corresponding trajectory expectation in the configurable MDPs setting, where $\theta$ represents the leader's configurable parameters of the environment.

\subsection{Proof of Theorem~\ref{thm:hg-configurable}}\label{apdx:proof_hg-configurable}

\begin{proof}
By Theorem 2 and Proposition 1 in \citet{Thoma2024-em}, we have 
\begin{subequations}
\begin{align}
    \MoveEqLeft[0]\nabla_{{\theta}}J_L({\theta}, g^{\theta\dagger})\nonumber\\
    &=\E_\tau^{\theta g^{\theta\dagger}}\bigg[\sum_{t=0}^\infty \gamma_L^t \bigg(\nabla_{\theta} r_L^{\theta}(s_t, b_t) 
    + V_L^{{\theta}\dagger}(s_t) \nabla_{\theta}\log
    p^{\theta} (s_{t} \mid s_{t-1}, b_{t-1}) + \frac{1}{\beta} Q_L^{{\theta}\dagger}(s_t, b_t)\nabla_{\theta} \left(Q_F^{{\theta} \dagger}(s_t, b_t) - V_F^{{\theta} \dagger}(s_t)\right)\bigg) \bigg]\\
    &=\E_\tau^{\theta g^{\theta\dagger}}\bigg[\sum_{t=0}^\infty \gamma_L^t \bigg(\nabla_{\theta} r_L^{\theta}(s_t, b_t) 
    + V_L^{{\theta}\dagger}(s_t) \nabla_{\theta}\log
    p^{\theta} (s_{t} \mid s_{t-1}, b_{t-1}) + \frac{1}{\beta} Q_L^{{\theta}\dagger}(s_t, b_t) \left(\nabla_{\theta}Q_F^{{\theta} \dagger}(s_t, b_t) - \nabla_{\theta}V_F^{{\theta} \dagger}(s_t)\right)\bigg) \bigg]\qquad \label{eq:thoma_hg}
\end{align}
\end{subequations}
where $p^{\theta}(s_0\mid s_{-1},b_{-1})$ refers to $\rho_0^{\theta}(s_0)$.

Differentiating both terms of the optimal Bellman equation
\begin{align}
    V_F^{\theta\dag}(s)=\beta\log\int_{\mathcal{B}}\exp\left(\beta^{-1}Q_F^{\theta\dag}(s,b)\right)\rmd b,
\end{align}
we obtain
\begin{subequations}
\begin{align}
    \nabla_{\theta}V_F^{\theta\dagger}(s)&=\beta\dfrac{\nabla_{\theta}\int_{\mathcal{B}}\exp\left(\beta^{-1}Q_F^{\theta\dagger}(s,b)\right)\rmd b}{\int_{\mathcal{B}}\exp\left(\beta^{-1}Q_F^{\theta\dagger}(s,b)\right)\rmd b}\\
    &=\beta\int_{\mathcal{B}}\dfrac{\nabla_{\theta}\exp\left(\beta^{-1}Q_F^{\theta\dagger}(s,a,b)\right)}{\int_{\mathcal{B}}\exp\left(\beta^{-1}Q_F^{\theta\dagger}(s,b)\right)\rmd b}\rmd b\\
    &=\beta\int_{\mathcal{B}}\dfrac{\exp\left(\beta^{-1}Q_F^{\theta\dagger}(s,b)\right)}{\int_{\mathcal{B}}\exp\left(\beta^{-1}Q_F^{\theta\dagger}(s,b)\right)\rmd b}\nabla_{\theta}\beta^{-1}Q_F^{\theta\dagger}(s,b)\rmd b\\
    &=\int_{\mathcal{B}}g^{\theta \dagger}(b\mid s)\nabla_{\theta}Q_F^{\theta\dagger}(s,b)\rmd b\\
    &=\E_{b\sim g^{\theta \dagger}(\cdot\mid s)}\left[\nabla_{\theta}Q_F^{\theta\dagger}(s,b)\right].
\end{align}
\end{subequations}
Then, using the Boltzmann covariance trick, the second term in \eqref{eq:thoma_hg} is transformed as
\begin{subequations}
\begin{align}
    &\E_\tau^{\theta g^{\theta\dagger}}\left[\sum_{t=0}^\infty \gamma_L^t \frac{1}{\beta} Q_L^{{\theta}\dagger}(s_t, b_t) \left(\nabla_{\theta}Q_F^{{\theta} \dagger}(s_t, b_t) - \nabla_{\theta}V_F^{{\theta} \dagger}(s_t)\right) \right]\nonumber\\
    &=\E_\tau^{\theta g^{\theta\dagger}}\left[\sum_{t=0}^\infty \gamma_L^t \frac{1}{\beta} 
    \cdot\E_{b_t\sim g^{\theta\dagger}(\cdot\mid s_t)}\left[Q_L^{{\theta}\dagger}(s_t, b_t) \left(\nabla_{\theta}Q_F^{{\theta} \dagger}(s_t, b_t) - \E_{b_t\sim g^{\theta\dagger}(\cdot\mid s_t)}\left[\nabla_{\theta}Q_F^{{\theta} \dagger}(s_t,b_t)\right]\right)\right] \right]\\
    &=\E_\tau^{\theta g^{\theta\dagger}}\left[\sum_{t=0}^\infty \gamma_L^t \frac{1}{\beta} 
    \cdot\mathrm{Cov}_{b_t}\left[Q_L^{{\theta}\dagger}(s_t, b_t), \nabla_{\theta}Q_F^{{\theta} \dagger}(s_t, b_t)~\middle|~ s_t\right] \right]\\
    &=\E_\tau^{\theta g^{\theta\dagger}}\left[\sum_{t=0}^\infty \gamma_L^t \frac{1}{\beta} 
    \cdot\E_{b_t\sim g^{\theta\dagger}(\cdot\mid s_t)}\left[\left(Q_L^{{\theta}\dagger}(s_t, b_t) - \E_{b_t\sim g^{\theta\dagger}(\cdot\mid s_t)}\left[Q_L^{{\theta}\dagger}(s_t, b_t)\right]\right) \nabla_{\theta}Q_F^{{\theta} \dagger}(s_t, b_t) \right] \right]\\
    &=\E_\tau^{\theta g^{\theta\dagger}}\left[\sum_{t=0}^\infty \gamma_L^t \frac{1}{\beta} \left(Q_L^{{\theta}\dagger}(s_t, b_t) - \E_{b_t\sim g^{\theta\dagger}(\cdot\mid s_t)}\left[Q_L^{{\theta}\dagger}(s_t, b_t)\right]\right) \nabla_{\theta}Q_F^{{\theta} \dagger}(s_t, b_t)\right].
\end{align}
\end{subequations}

Therefore, we have
\begin{multline}
    \nabla_{{\theta}}J_L({\theta}, g^{\theta\dagger})
    =\E_\tau^{\theta g^{\theta\dagger}}\bigg[\sum_{t=0}^\infty \gamma_L^t \bigg(\nabla_{\theta} r_L^{\theta}(s_t, b_t) 
    + V_L^{{\theta}\dagger}(s_t) \nabla_{\theta}\log
    p^{\theta} (s_{t} \mid s_{t-1}, b_{t-1})\\
    + \frac{1}{\beta} \left(Q_L^{{\theta}\dagger}(s_t, b_t) - \E_{b_t\sim g^{\theta\dagger}(\cdot\mid s_t)}\left[Q_L^{{\theta}\dagger}(s_t, b_t)\right]\right) \nabla_{\theta} Q_F^{{\theta} \dagger}(s_t, b_t) \bigg) \bigg],
\end{multline}
Since it holds that
\begin{equation}
\nabla_{{\theta}}Q_F^{{\theta}\dag}(s,b) = \E_\tau^{\theta g^{\theta\dagger}}\bigg[\sum_{t=0}^\infty\gamma_F^t\nabla_{{\theta}}r_F^{{\theta}}(s_t,b_t) 
+ \gamma_F^{t+1}V_F^{{\theta}\dag}(s_{t+1})\nabla_{{\theta}}\log p^{{\theta}}(s_{t+1}\mid s_t,b_t) \bigg|s_0=s,b_0=b\bigg]
\end{equation}
by Theorem 2 in \citet{Thoma2024-em}, Theorem~\ref{thm:hg-configurable} is proved.
\end{proof}

\subsection{Proof of Theorem~\ref{thm:hg-mg-stochastic}}

First, we derive two lemmas.

\begin{lemma}
\begin{equation}
\nabla_{\theta} J_L(\theta, g^{\theta\dagger})=\dfrac{1}{1-\gamma_L}\E_{d_{\gamma_L}^{\theta\dagger}}^{f_{\theta}g^{\theta \dagger}}\left[Q_L^{\theta \dagger}(s,a,b)\nabla_{\theta}\log f_{\theta}(a\mid s)
        +\mathrm{Cov}_{b}\left[Q_L^{\theta \dagger}(s,a,b), \nabla_{\theta}\log g^{\theta \dagger}(b\mid s,a)~\middle|~s,a\right]\right].
\end{equation}
\end{lemma}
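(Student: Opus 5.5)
The plan is to differentiate the leader's objective through the trajectory distribution. We drop the regularizer $\Phi_L$, as the paper does. Both the leader policy $f_\theta$ and the best response $g^{\theta\dagger}$ now depend on $\theta$; the reward $r_L$, the transition $p$ and $\rho_0$ do not. We write
\begin{equation*}
J_L(\theta,g^{\theta\dagger})=\E_\tau^{f_\theta g^{\theta\dagger}}\Big[\sum_{t}\gamma_L^t r_L(s_t,a_t,b_t)\Big].
\end{equation*}
Applying the log-derivative trick to $p_\tau^{\theta g^{\theta\dagger}}(\tau)$ gives a score equal to $\sum_t\big(\nabla_\theta\log f_\theta(a_t\mid s_t)+\nabla_\theta\log g^{\theta\dagger}(b_t\mid s_t,a_t)\big)$.

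The first step is the standard causality argument from the policy gradient theorem. The score at time $t$ only affects rewards from time $t$ onward. This yields
\begin{equation*}
\E_\tau\Big[\sum_t\gamma_L^t Q_L^{\theta\dagger}(s_t,a_t,b_t)\big(\nabla_\theta\log f_\theta(a_t\mid s_t)+\nabla_\theta\log g^{\theta\dagger}(b_t\mid s_t,a_t)\big)\Big].
\end{equation*}
An equivalent route is to write the Bellman recursion for $V_L^{\theta\dagger}(s)=\E_{a\sim f_\theta,\,b\sim g^{\theta\dagger}}[Q_L^{\theta\dagger}(s,a,b)]$, differentiate it, and unroll the recursion. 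The unrolling produces the discounted visitation distribution. Using the identity relating $\E_{d_{\gamma}^{\theta g}}^{f_\theta g}$ to $(1-\gamma)\E_\tau^{f_\theta g}[\sum_t\gamma^t\cdot]$, the expression becomes $\frac{1}{1-\gamma_L}\E_{d_{\gamma_L}^{\theta\dagger}}^{f_\theta g^{\theta\dagger}}[\cdots]$. The first summand is already in the form stated in the lemma.

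The second step rewrites the follower term as a covariance. Condition on $(s,a)$. The score $\nabla_\theta\log g^{\theta\dagger}(b\mid s,a)$ has zero mean under $b\sim g^{\theta\dagger}(\cdot\mid s,a)$, because $\int\nabla_\theta g^{\theta\dagger}(b\mid s,a)\,\rmd b=\nabla_\theta 1=0$. Hence
\begin{equation*}
\E_b\big[Q_L^{\theta\dagger}\nabla_\theta\log g^{\theta\dagger}\mid s,a\big]=\mathrm{Cov}_b\big[Q_L^{\theta\dagger}(s,a,b),\nabla_\theta\log g^{\theta\dagger}(b\mid s,a)\mid s,a\big].
\end{equation*}
Substituting this inside the outer expectation over $(s,a)\sim f_\theta(a\mid s)\,d_{\gamma_L}^{\theta\dagger}(s)$ gives the claim. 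This step uses the tower property, since the outer weighting does not depend on $b$ beyond $g^{\theta\dagger}$.

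The main obstacle is justifying the differentiation. Exchanging $\nabla_\theta$ with the infinite sum and the integrals, and showing that $g^{\theta\dagger}$ is differentiable in $\theta$, both need justification. For the latter, I would argue as follows. $Q_F^{\theta\dagger}$ is the fixed point of the soft Bellman operator, which is a $\gamma_F$-contraction whose dependence on $\theta$ (through $f_\theta$) is smooth. An implicit-function or contraction argument then gives differentiability of $Q_F^{\theta\dagger}$ and $V_F^{\theta\dagger}$. Hence $\log g^{\theta\dagger}=\beta^{-1}(Q_F^{\theta\dagger}-V_F^{\theta\dagger})$ is differentiable. Bounded rewards together with the $\gamma_L$ discounting give dominated convergence for the interchanges. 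I would state these regularity conditions as standing assumptions rather than prove them in full.
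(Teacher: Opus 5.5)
Your proposal is correct and follows the paper's proof: treat $f_\theta(a\mid s)\,g^{\theta\dagger}(b\mid s,a)$ as a joint $\theta$-parameterized policy, apply the policy gradient theorem, and then use the zero conditional mean of the follower's score to rewrite $\E_b[Q_L^{\theta\dagger}\nabla_\theta\log g^{\theta\dagger}\mid s,a]$ as the covariance. The only difference is minor: you get the zero mean from the generic identity $\int\nabla_\theta g^{\theta\dagger}(b\mid s,a)\,\rmd b=0$, whereas the paper derives it from the Boltzmann form via $\nabla_\theta V_F^{\theta\dagger}(s,a)=\E_{b\sim g^{\theta\dagger}(\cdot\mid s,a)}[\nabla_\theta Q_F^{\theta\dagger}(s,a,b)]$, an identity it reuses in the next lemma; your regularity remarks are more careful than the paper, which takes these interchanges for granted.
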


\begin{proof}
Viewing $f_{\theta}(a| s)g^{\theta \dagger}(b| s,a)$ as a joint policy parameterized by $\theta$, the derivative of $J_L(\theta, g^{\theta\dagger})=\E_{\tau}\left[\sum_{t=0}^\infty \gamma_L^tr_L(s_t,a_t,b_t)\right]$ with respect to $\theta$ can be computed by using the policy gradient theorem for single-agent MDPs (without entropy regularization) as
\begin{subequations}
\begin{align}
    \nabla_{\theta}J_L(\theta, g^{\theta\dagger})
    &=\dfrac{1}{1-\gamma_L}\E_{d_{\gamma_L}^{\theta\dagger}}^{f_{\theta}g^{\theta \dagger}}\left[Q_L^{\theta \dagger}(s,a,b) \cdot \nabla_{\theta}\log\left(f_{\theta}(a\mid s)g^{\theta \dagger}(b\mid s,a)\right)\right]\\
    &=\dfrac{1}{1-\gamma_L}\E_{d_{\gamma_L}^{\theta\dagger}}^{f_{\theta}g^{\theta \dagger}}\left[Q_L^{\theta \dagger}(s,a,b)\nabla_{\theta}\log f_{\theta}(a\mid s)+Q_L^{\theta \dagger}(s,a,b)\nabla_{\theta}\log g^{\theta \dagger}(b\mid s,a)\right] .\label{eq:nabla_L_J_L_qr_bbf_first}
\end{align}
\end{subequations}
Differentiating both terms of the optimal Bellman equation
\begin{align}
    V_F^{f\dag}(s,a)=\beta\log\int_{\mathcal{B}}\exp\left(\beta^{-1}Q_F^{f\dag}(s,a,b)\right)\rmd b,
\end{align}
we obtain
\begin{subequations}
\begin{align}
    \nabla_{\theta}V_F^{\theta\dagger}(s,a)&=\beta\dfrac{\nabla_{\theta}\int_{\mathcal{B}}\exp\left(\beta^{-1}Q_F^{\theta\dagger}(s,a,b)\right)\rmd b}{\int_{\mathcal{B}}\exp\left(\beta^{-1}Q_F^{\theta\dagger}(s,a,b)\right)\rmd b}\\
    &=\beta\int_{\mathcal{B}}\dfrac{\nabla_{\theta}\exp\left(\beta^{-1}Q_F^{\theta\dagger}(s,a,b)\right)}{\int_{\mathcal{B}}\exp\left(\beta^{-1}Q_F^{\theta\dagger}(s,a,b)\right)\rmd b}\rmd b\\
    &=\beta\int_{\mathcal{B}}\dfrac{\exp\left(\beta^{-1}Q_F^{\theta\dagger}(s,a,b)\right)}{\int_{\mathcal{B}}\exp\left(\beta^{-1}Q_F^{\theta\dagger}(s,a,b)\right)\rmd b}\nabla_{\theta}\beta^{-1}Q_F^{\theta\dagger}(s,a,b)\rmd b\\
    &=\int_{\mathcal{B}}g^{\theta \dagger}(b\mid s,a)\nabla_{\theta}Q_F^{\theta\dagger}(s,a,b)\rmd b\\
    &=\E_{b\sim g^{\theta \dagger}(\cdot\mid s,a)}\left[\nabla_{\theta}Q_F^{\theta\dagger}(s,a,b)\right].
\end{align}
\end{subequations}
Noting that $g^{\theta \dagger}(b\mid s,a)=\exp\left(\beta^{-1}\left(Q_F^{\theta\dagger}(s,a,b)-V_F^{\theta\dagger}(s,a)\right)\right)$ and 
\begin{subequations}
\begin{align}
    \E_{b\sim g^{\theta \dagger}(\cdot\mid s)} \left[ \nabla_{\theta}\log g^{\theta \dagger}(b\mid s,a) \right]
    &=\beta^{-1} \E_{b\sim g^{\theta \dagger}(\cdot\mid s)} \left[\nabla_{\theta}Q_F^{\theta\dagger}(s,a,b)-\nabla_{\theta}V_F^{\theta\dagger}(s,a)\right]\\
    &=\beta^{-1} \E_{b\sim g^{\theta \dagger}(\cdot\mid s)} \left[\nabla_{\theta}Q_F^{\theta\dagger}(s,a,b)\right] - \nabla_{\theta}V_F^{\theta\dagger}(s,a)\\
    &=\beta^{-1} \left( \nabla_{\theta}V_F^{\theta\dagger}(s,b) - \nabla_{\theta}V_F^{\theta\dagger}(s,a) \right)\\
    & = 0,
\end{align}\label{eq:exp-nabla-log}
\end{subequations}
substituting $\nabla_{\theta}\log g^{\theta \dagger}(b\mid s,a) = \nabla_{\theta}\log g^{\theta \dagger}(b\mid s,a) - \E_{b\sim g^{\theta \dagger}(\cdot\mid s)} \left[ \nabla_{\theta}\log g^{\theta \dagger}(b\mid s,a) \right]$ into
\eqref{eq:nabla_L_J_L_qr_bbf_first} leads to
\begin{subequations}
\begin{align}
    \nabla_{\theta}J_L(\theta, g^{\theta\dagger})
    &\begin{multlined}[t]
        =\dfrac{1}{1-\gamma_L}\E_{d_{\gamma_L}^{\theta\dagger}}^{f_{\theta}g^{\theta \dagger}}\Big[Q_L^{\theta \dagger}(s,a,b)\nabla_{\theta}\log f_{\theta}(a\mid s)\\
        +Q_L^{\theta \dagger}(s,a,b)\left(\nabla_{\theta}\log g^{\theta \dagger}(b\mid s,a)-\E_{b\sim g^{\theta \dagger}(\cdot\mid s)}\left[\nabla_{\theta}\log g^{\theta \dagger}(b\mid s,a)\right]\right)\Big]
    \end{multlined}\\
    &\begin{multlined}[t]
        =\dfrac{1}{1-\gamma_L}\E_{d_{\gamma_L}^{\theta\dagger}}^{f_{\theta}g^{\theta \dagger}}\Big[Q_L^{\theta \dagger}(s,a,b)\nabla_{\theta}\log f_{\theta}(a\mid s)\\
        +\E_{b\sim g^{\theta \dagger}(\cdot\mid s,a)}\left[Q_L^{\theta \dagger}(s,a,b)\left(\nabla_{\theta}\log g^{\theta \dagger}(b\mid s,a)-\E_{b\sim g^{\theta \dagger}(\cdot\mid s,a)}\left[\nabla_{\theta}\log g^{\theta \dagger}(b\mid s,a)\right]\right)\right]\Big]
    \end{multlined}\\
    &\begin{multlined}[t]
        =\dfrac{1}{1-\gamma_L}\E_{d_{\gamma_L}^{\theta\dagger}}^{f_{\theta}g^{\theta \dagger}}\Big[Q_L^{\theta \dagger}(s,a,b)\nabla_{\theta}\log f_{\theta}(a\mid s)
        +\mathrm{Cov}_{b}\left[Q_L^{\theta \dagger}(s,a,b), \nabla_{\theta}\log g^{\theta \dagger}(b\mid s,a)~\middle|~s,a\right]\Big].
    \end{multlined}\label{eq:hypergrad_with_cov}
\end{align}
\end{subequations}
This completes the proof.
\end{proof}

\begin{lemma}
\begin{multline}
\mathrm{Cov}_{b}\left[Q_L^{\theta \dagger}(s,a,b), \nabla_{\theta}\log g^{\theta \dagger}(b\mid s,a)~\middle|~s,a\right]
\\
=\dfrac{1}{\beta}\E_{b\sim g^{\theta \dagger}(\cdot\mid s,a)}\left[\left(Q_L^{\theta \dagger}(s,a,b)
        -\E_{b\sim g^{\theta \dagger}(\cdot\mid s,a)}\left[Q_L^{\theta \dagger}(s,a,b)\right]\right)
        \E_\tau^{f_{\theta}g^{\theta\dagger}}\left[\sum_{t=0}^\infty\gamma_F^t V_F^{\theta\dag}(s_t,a_t)\nabla_{\theta}\log f_{\theta}(a_t\mid s_t)\middle| s,a,b\right]\right]
\end{multline}
\end{lemma}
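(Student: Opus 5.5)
The proof rests on three ingredients: the Boltzmann form of the best response, the covariance identity already used in the proof of Theorem~\ref{thm:hg-configurable}, and a policy-gradient-type expansion of $\nabla_\theta Q_F^{\theta\dagger}$.

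\textbf{Step 1 (reduce the score).} Since $g^{\theta\dagger}(b\mid s,a)=\exp(\beta^{-1}(Q_F^{\theta\dagger}(s,a,b)-V_F^{\theta\dagger}(s,a)))$, we have
\begin{equation*}
\nabla_\theta\log g^{\theta\dagger}(b\mid s,a)=\beta^{-1}\bigl(\nabla_\theta Q_F^{\theta\dagger}(s,a,b)-\nabla_\theta V_F^{\theta\dagger}(s,a)\bigr).
\end{equation*}
The term $\nabla_\theta V_F^{\theta\dagger}(s,a)$ does not depend on $b$, so it contributes nothing to a covariance over $b$. Hence
\begin{equation*}
\mathrm{Cov}_b[Q_L^{\theta\dagger},\nabla_\theta\log g^{\theta\dagger}\mid s,a]=\beta^{-1}\mathrm{Cov}_b[Q_L^{\theta\dagger},\nabla_\theta Q_F^{\theta\dagger}\mid s,a].
\end{equation*}

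\textbf{Step 2 (covariance trick).} Writing the covariance as $\E_b[(Q_L^{\theta\dagger}-\E_b Q_L^{\theta\dagger})\nabla_\theta Q_F^{\theta\dagger}]$ gives exactly the Benefit-weighted form in the statement. It then remains to show that $\nabla_\theta Q_F^{\theta\dagger}(s,a,b)$ equals the stated trajectory expectation.

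\textbf{Step 3 (gradient of the follower's soft Q-function).} Start from the Bellman relation
\begin{equation*}
Q_F^{\theta\dagger}(s,a,b)=r_F(s,a,b)+\gamma_F\E_{s'}\Bigl[\int f_\theta(a'\mid s')V_F^{\theta\dagger}(s',a')\,\rmd a'\Bigr].
\end{equation*}
Here $r_F$ and $p$ do not depend on $\theta$. Differentiating gives
\begin{equation*}
\nabla_\theta Q_F^{\theta\dagger}(s,a,b)=\gamma_F\E_{s'}\E_{a'\sim f_\theta}\bigl[V_F^{\theta\dagger}(s',a')\nabla_\theta\log f_\theta(a'\mid s')+\nabla_\theta V_F^{\theta\dagger}(s',a')\bigr].
\end{equation*}
By the log-sum-exp computation already carried out in the proof of the preceding lemma, $\nabla_\theta V_F^{\theta\dagger}(s',a')=\E_{b'\sim g^{\theta\dagger}}[\nabla_\theta Q_F^{\theta\dagger}(s',a',b')]$. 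Notably, the follower's policy derivative does not appear, which is the envelope effect of soft optimality. This yields a recursion of the form
\begin{equation*}
\nabla_\theta Q_F(s,a,b)=\gamma_F\E\bigl[V_F(s_1,a_1)\nabla_\theta\log f_\theta(a_1\mid s_1)+\nabla_\theta Q_F(s_1,a_1,b_1)\bigr],
\end{equation*}
with $(s_1,a_1,b_1)$ generated by $p$, $f_\theta$ and $g^{\theta\dagger}$.

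Unrolling this recursion and using $\gamma_F<1$ with boundedness to discard the remainder gives
\begin{equation*}
\nabla_\theta Q_F^{\theta\dagger}(s,a,b)=\E_\tau\Bigl[\sum_{t\ge1}\gamma_F^tV_F^{\theta\dagger}(s_t,a_t)\nabla_\theta\log f_\theta(a_t\mid s_t)\,\Bigm|\,s,a,b\Bigr].
\end{equation*}
The statement's sum starts at $t=0$. The $t=0$ term, $V_F^{\theta\dagger}(s,a)\nabla_\theta\log f_\theta(a\mid s)$, is constant in $b$ given $(s,a)$, so it is annihilated when multiplied by the centred Benefit and averaged over $b$. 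It can therefore be included freely. This resolves the apparent index mismatch and should be stated explicitly in the proof.

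\textbf{Main obstacle.} The delicate point is Step 3: justifying the interchange of differentiation with the infinite unrolling (dominated convergence, using bounded rewards, bounded entropy or finite action spaces, and bounded scores), together with the envelope-type cancellation. That cancellation relies on the identity $\nabla_\theta V_F=\E_{g}[\nabla_\theta Q_F]$, which is what removes the $\nabla_\theta g^{\theta\dagger}$ contributions. Once this step is settled, combining Steps 1 through 3 gives the lemma.
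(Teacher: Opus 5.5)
Your proposal is correct and follows the paper's proof step for step. You reduce the score through the Boltzmann form, apply the covariance trick to move the centring onto $Q_L^{\theta\dagger}$, and differentiate the soft Bellman equation to get a Bellman expectation equation for $\nabla_\theta Q_F^{\theta\dagger}$ whose solution is the $t\ge 1$ sum; you then absorb the $b$-independent $t=0$ term because the centred Benefit annihilates it, which is exactly the paper's ``$-0\cdot V_F^{\theta\dagger}(s,a)\nabla_\theta\log f_\theta(a\mid s)$'' step. The differences are only cosmetic: you drop $\nabla_\theta V_F^{\theta\dagger}(s,a)$ by shift-invariance of the covariance, where the paper substitutes $\E_b[\nabla_\theta Q_F^{\theta\dagger}]$, and you justify the unrolling by contraction and boundedness, where the paper cites uniqueness of the Bellman fixed point.
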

\begin{proof}
Note first that
\begin{subequations}
\begin{align}
 \nabla_{\theta}\log g^{\theta \dagger}(b\mid s,a)
 &=\dfrac{1}{\beta}\left(\nabla_{\theta}Q_F^{\theta\dagger}(s,a,b)-\nabla_{\theta}V_F^{\theta\dagger}(s,a)\right)\\
 &=\dfrac{1}{\beta}\left(\nabla_{\theta}Q_F^{\theta\dagger}(s,a,b)-\E_{b\sim g^{\theta \dagger}(\cdot\mid s,a)}\left[Q_F^{\theta\dagger}(s,a,b)\right]\right).
\end{align}
\end{subequations}
Then, we have
\begin{subequations}
\begin{align}
    \mathrm{Cov}_{b}\left[Q_L^{\theta \dagger}(s,a,b), \nabla_{\theta}\log g^{\theta \dagger}(b\mid s,a)~\middle|~s,a\right]
    &
        =\E_{b\sim g^{\theta \dagger}(\cdot\mid s,a)}\left[\dfrac{1}{\beta}Q_L^{\theta \dagger}(s,a,b)\left(\nabla_{\theta}Q_F^{\theta\dagger}(s,a,b)-\E_{b\sim g^{\theta \dagger}(\cdot\mid s,a)}\left[\nabla_{\theta}Q_F^{\theta\dagger}(s,a,b)\right]\right)\right]
   \\
    &
        =\E_{b\sim g^{\theta \dagger}(\cdot\mid s,a)}\left[\dfrac{1}{\beta}\left(Q_L^{\theta \dagger}(s,a,b) -\E_{b\sim g^{\theta \dagger}(\cdot\mid s,a)}\left[Q_L^{\theta \dagger}(s,a,b)\right]\right)\nabla_{\theta}Q_F^{\theta\dagger}(s,a,b)\right].\label{eq:nabla1_J1_nabla1_Q2}
\end{align}
\end{subequations}
Differentiating both sides of the Bellman optimal equation,
\begin{align}
    Q_F^{\theta\dagger}(s,a,b)=r_F(s,a,b)+\gamma_F\E_{s'}\left[\int_{\mathcal{A}}f_{\theta}(a'\mid s')\beta\log\int_{\mathcal{B}}\exp\left(\beta^{-1}Q_F^{\theta\dagger}(s',a',b')\right)\rmd b'\rmd a'\right],
\end{align}
with respect to $\theta$, we obtain
\begin{subequations}
\begin{align}
    \nabla_{\theta}Q_F^{\theta\dagger}(s,a,b)
    &\begin{multlined}[t]
        =\gamma_F\E_{s'}\Bigg[\int_{\mathcal{A}}f_{\theta}(a'\mid s')\Bigg(\nabla_{\theta}\log f_{\theta}(a'\mid s')\cdot \beta\log\int_{\mathcal{B}}\exp\left(\beta^{-1}Q_F^{\theta\dagger}(s',a',b')\right)\rmd b'\\   +\nabla_{\theta}\beta\log\int_{\mathcal{B}}\exp\left(\beta^{-1}Q_F^{\theta\dagger}(s',a',b')\right)\rmd b'\Bigg)\rmd a' \Bigg]
    \end{multlined}\\
    &=\gamma_F\E_{s'\sim p(\cdot\mid s,a,b)}\E_{a'\sim f(\cdot\mid s')}\left[\nabla_{\theta}\log f_{\theta}(a'\mid s')V_F^{\theta\dagger}(s',a')+\int_{\mathcal{B}}g^{\theta \dagger}(b'|s',a')\nabla_{\theta}Q_F^{\theta\dagger}(s',a',b')\rmd b'\right]\\
    &=\gamma_F\E_{s'\sim p(\cdot\mid s,a,b)}\E_{a'\sim f(\cdot\mid s')}\left[\nabla_{\theta}\log f_{\theta}(a'\mid s')V_F^{\theta\dagger}(s',a')\right]+\gamma_F\E_{s'}\E_{a'}\E_{b'\sim g^{\theta \dagger}(\cdot\mid s',a')}\left[\nabla_{\theta}Q_F^{\theta\dagger}(s',a',b')\right].\label{eq:nabla_L_Q_F_qr_bbf_bellman_eq}
\end{align}
\end{subequations}
The form of \eqref{eq:nabla_L_Q_F_qr_bbf_bellman_eq} corresponds to the Bellman expectation equation. Therefore, because of the existence and the uniqueness of the solution to the Bellman expectation equation, we have
\begin{subequations}
\begin{align}
    \nabla_{\theta}Q_F^{\theta\dagger}(s,a,b)
    &=\E_\tau^{f_{\theta}g^{\theta \dagger}}\left[\sum_{t=0}^\infty \gamma_F^t\cdot \gamma_F\E_{s'\sim p(\cdot\mid s_t,a_t,b_t),a'\sim f(\cdot\mid s')}\left[V_F^{\theta\dagger}(s',a')\nabla_{\theta}\log f_{\theta}(a'\mid s')\right]~\middle|~s,a,b\right]\\
    &\begin{multlined}[t]
        =\E_\tau^{f_{\theta}g^{\theta \dagger}}\left[\sum_{t=0}^\infty\gamma_F^t V_F^{\theta\dagger}(s_t,a_t)\nabla_{\theta}\log f_{\theta}(a_t\mid s_t)~\middle|~s,a,b\right] - V_F^{\theta\dagger}(s,a)\nabla_{\theta}\log f_{\theta}(a\mid s). \label{eq:nabla_L_V_F_pgt_qr_bbf}
    \end{multlined}
\end{align}
\end{subequations}
Substituting it into \eqref{eq:nabla1_J1_nabla1_Q2}, we have
\begin{subequations}
\begin{align}
    \MoveEqLeft[2]\mathrm{Cov}_{b}\left[Q_L^{\theta \dagger}(s,a,b), \nabla_{\theta}\log g^{\theta \dagger}(b\mid s,a)~\middle|~s,a\right]\nonumber\\
    &
        =\E_{b\sim g^{\theta \dagger}(\cdot\mid s,a)}\left[\dfrac{1}{\beta}\left(Q_L^{\theta \dagger}(s,a,b) -\E_{b\sim g^{\theta \dagger}(\cdot\mid s,a)}\left[Q_L^{\theta \dagger}(s,a,b)\right]\right)\nabla_{\theta}Q_F^{\theta\dagger}(s,a,b)\right]
   \\
    &\begin{multlined}[t]
        =\E_{b\sim g^{\theta \dagger}(\cdot\mid s,a)}\left[\dfrac{1}{\beta}\left(Q_L^{\theta \dagger}(s,a,b) -\E_{b\sim g^{\theta \dagger}(\cdot\mid s,a)}\left[Q_L^{\theta \dagger}(s,a,b)\right]\right) 
        \cdot \E_\tau^{f_{\theta}g^{\theta \dagger}}\left[\sum_{t=0}^\infty\gamma_F^t V_F^{\theta\dagger}(s_t,a_t)\nabla_{\theta}\log f_{\theta}(s_t\mid s_t)~\middle|~s,a,b\right]\right]\\
        - 0\cdot V_F^{\theta\dagger}(s,a)\nabla_{\theta}\log f_{\theta}(a\mid s)
    \end{multlined}\\
    &\begin{multlined}[t]
        =\dfrac{1}{\beta}\E_{b\sim g^{\theta \dagger}(\cdot\mid s,a)}\left[\left(Q_L^{\theta \dagger}(s,a,b)
        -\E_{b\sim g^{\theta \dagger}(\cdot\mid s,a)}\left[Q_L^{\theta \dagger}(s,a,b)\right]\right)
        \cdot \E_\tau^{f_{\theta}g^{\theta \dagger}}\left[\sum_{t=0}^\infty\gamma_F^t V_F^{\theta\dagger}(s_t,a_t)\nabla_{\theta}\log f_{\theta}(a_t\mid s_t)~\middle|~s,a,b\right]\right].\tag*{\qedhere}
    \end{multlined}
\end{align}
\end{subequations}
\end{proof}

Finally, Substituting it into \eqref{eq:hypergrad_with_cov}, we have
\begin{subequations}
\begin{align}
    \MoveEqLeft[0]\nabla_{\theta}J_L(\theta,g^{\theta\dagger})\nonumber\\
    &\begin{multlined}[t]
        =\dfrac{1}{1-\gamma_L}\E_{d_{\gamma_L}^{\theta\dagger}}^{f_{\theta}g^{\theta \dagger}}\Bigg[Q_L^{\theta \dagger}(s,a,b)\nabla_{\theta}\log f_{\theta}(a\mid s)\\
        +\dfrac{1}{\beta}\E_{b\sim g^{\theta \dagger}(\cdot\mid s,a)}\left[\left(Q_L^{\theta \dagger}(s,a,b)
        -\E_{b\sim g^{\theta \dagger}(\cdot\mid s,a)}\left[Q_L^{\theta \dagger}(s,a,b)\right]\right)
        \cdot\E_\tau^{f_{\theta}g^{\theta \dagger}}\left[\sum_{t=0}^\infty\gamma_F^t V_F^{\theta\dagger}(s_t,a_t)\nabla_{\theta}\log f_{\theta}(a_t\mid s_t)~\middle|~s,a,b\right]\right]\Bigg]
    \end{multlined}\\
    &\begin{multlined}[t]
        =\dfrac{1}{1-\gamma_L}\E_{d_{\gamma_L}^{\theta\dagger}}^{f_{\theta}g^{\theta \dagger}}\Bigg[Q_L^{\theta \dagger}(s,a,b)\nabla_{\theta}\log f_{\theta}(a\mid s)\\
        +\dfrac{1}{\beta}\left(Q_L^{\theta \dagger}(s,a,b)
        -\E_{b\sim g^{\theta \dagger}(\cdot\mid s,a)}\left[Q_L^{\theta \dagger}(s,a,b)\right]\right)
        \cdot\E_\tau^{f_{\theta}g^{\theta \dagger}}\left[\sum_{t=0}^\infty\gamma_F^t V_F^{\theta\dagger}(s_t,a_t)\nabla_{\theta}\log f_{\theta}(a_t\mid s_t)~\middle|~s,a,b\right]\Bigg].
    \end{multlined}
\end{align}
\end{subequations}


\section{Experiments}

\subsection{Baseline Approaches}\label{apdx:baseline}

The pseudo-code for the baseline algorithms used in the Four-Rooms Task experiments is presented in Algorithm~\ref{algo:fourrooms}.

\paragraph{\textsc{Naive-PGD}}
\textsc{Naive-PGD} refers to the standard policy gradient approach that does not account for changes in the follower’s best response.  
Specifically, it relies solely on the partial derivative $\partial_{\theta} J_L(\theta, g)\big|_{g = g^{\theta\dagger}}$,  
which is computed in line~\ref{algo:partial_deriv} of Algorithm~\ref{algo:fourrooms} for the Four-Rooms Task,  
and in line~\ref{algo:partial_deriv_MG} for the MGs.

\paragraph{\textsc{HPGD}~\citep{Thoma2024-em}}
\textsc{HPGD} employs a hypergradient that is equivalent to ours in principle but differs in the formulation of the final term:
\begin{align}
    \dfrac{1}{\beta}\mathbb{E}_{s,b}\left[Q_L^{\theta\dagger}(s,b)\left(\nabla_{\theta}Q_F^{\theta\dagger}(s,b) - \mathbb{E}_{b \sim g^{\theta\dagger}(\cdot \mid s)}\left[\nabla_{\theta}Q_F^{\theta\dagger}(s,b)\right]\right)\right], \label{eq:hpgd_guidance}
\end{align}
where the identity 
\begin{equation}
    \nabla_{\theta}V_F^{\theta\dagger}(s) = \mathbb{E}_{b \sim g^{\theta\dagger}(\cdot \mid s)}\left[\nabla_{\theta}Q_F^{\theta\dagger}(s,b)\right] \label{eq:bellman_eq_value_q_gradient}
\end{equation} holds.  
In the authors’ implementation, the term $Q_L^{\theta\dagger}(s,b)$ is replaced by $Q_L^{\theta\dagger}(s,b) - V_L^{\theta\dagger}(s)$, which is referred to as the ``Benefit'' in this paper.

In the discrete-state task, Four-Rooms, \textsc{HPGD (MC/SA)} estimates each of the two gradient components in \eqref{eq:hpgd_guidance} using the Monte Carlo method based on trajectory segments from the same sampled rollouts, as shown in lines~\ref{algo:fourrooms_f_q_grad} and~\ref{algo:fourrooms_f_v_grad} of Algorithm~\ref{algo:fourrooms}.
In contrast, \textsc{HPGD (oracle)} uses additional trajectories sampled in line~\ref{algo:fourroom_oracle_rollouts} for these Monte Carlo estimations.

In the continuous-state task, $n$-zone Building Thermal Control, \textsc{HPGD (MC/TD)} estimates the first gradient $\nabla_{\theta}Q_F^{\theta\dagger}(s,b)$ via Monte Carlo. 
In contrast, the second gradient is estimated using function approximation.
Considering \eqref{eq:bellman_eq_value_q_gradient} and 
\begin{equation}
\nabla_{{\theta}}Q_F^{{\theta}\dag}(s,b) = \E_\tau\bigg[\sum_{t=0}^\infty\gamma_F^t\nabla_{{\theta}}r_F^{{\theta}}(s_t,b_t) 
+ \gamma_F^{t+1}V_F^{{\theta}\dag}(s_{t+1})\nabla_{{\theta}}\log p^{{\theta}}(s_{t+1}\mid s_t,b_t) \bigg|s_0=s,b_0=b\bigg]
\end{equation}
from Theorem~\ref{thm:hg-configurable}, the second gradient can be regarded as a $|\theta|$-dimensional vector-valued function for state $s$ with vector-valued immediate rewards defined as: $\bm{r}(s_t,b_t,s_{t+1}):=\nabla_{{\theta}}r_F^{{\theta}}(s_t,b_t) 
+ \gamma_F V_F^{{\theta}\dag}(s_{t+1})\nabla_{{\theta}}\log p^{{\theta}}(s_{t+1}\mid s_t,b_t)$.
Hence, this can be estimated by function approximation.
In the experiment, we employ a neural network model and the TD method for the update strategy.

\textsc{HPGD (MC)} estimates the leader’s Q-function and value function via the Monte Carlo method (Algorithm~\ref{algo:benefit_estimation_montecarlo}).  
Meanwhile, \textsc{HPGD (SA)} and \textsc{HPGD (TD)} estimate the Q-function using SARSA and the TD method, respectively.
Then, the value function is computed by taking the expectation of the Q-values over the follower actions (Algorithm~\ref{algo:benefit_estimation_sarsa}).

\paragraph{\textsc{SoBiRL}~\citep{Yang2025-lr}}
\textsc{SoBiRL} employs a hypergradient where the final term is expressed as:
\begin{equation}
    \dfrac{1}{\beta} \mathbb{E}_{\tau} \left[ Q_L^{\theta\dagger}(s_0, b_0) \sum_{(s_t, b_t) \in \tau} \left( \nabla_{\theta} r_F^{\theta}(s_t, b_t) - \mathbb{E}_{b \sim g^{\theta\dagger}(\cdot \mid s_t)} \left[ \nabla_{\theta} r_F^{\theta}(s_t, b) \right] \right) \right],
\end{equation}
where the terms $\nabla_{\theta} r_F^{\theta^i}(s_t, b_t)$ and $\mathbb{E}_{b \sim g^{\theta^\dagger}(\cdot \mid s_t)} \left[ \nabla_{\theta} r_F^{\theta^i}(s_t, b) \right]$  
are intended to approximate the gradients of the Q-function and value function, respectively, via one-step truncation under the follower’s policy:
\begin{equation}
    \nabla_{\theta} r_F^{\theta}(s_t, b_t) \approx \nabla_{\theta} Q_F^{\theta\dagger}(s_t, b_t), \quad
    \mathbb{E}_{b \sim g^{\theta\dagger}(\cdot \mid s_t)} \left[ \nabla_{\theta} r_F^{\theta}(s_t, b) \right] \approx \nabla_{\theta} V_F^{\theta\dagger}(s_t).
\end{equation}
Note that this approximation is infeasible when the transition function is parameterized by the leader.
This is because estimating $\nabla_{\theta}V_F^{\theta\dagger}(s_t)$ requires computing the expectation $\E_b\left[\E_{s'\sim p^{\theta}(\cdot\mid s_t,b)}\left[ \nabla_{\theta}\log p^{\theta}(s'\mid s_t,b) \right]\right]$, which is infeasible when $p^{\theta}$ is unknown.

\paragraph{\textsc{Bi-AC}~\citep{Zhang2020-hw}}
Bi-level Actor-Critic (Bi-AC) and its predecessor, Bi-level Q-learning, were originally proposed by \citet{Zhang2020-hw} as value-based algorithms following the centralized-training-decentralized-execution paradigm.  
The update rule for Bi-level Q-learning is given by:
\begin{equation}
Q_i(s, a, b) \gets (1 - \alpha) Q_i(s, a, b) + \alpha \left( r_i(s, a, b) + \gamma_i Q_i(s', a', b') \right), \quad (i \in \{L, F\}),
\end{equation}
where the next actions are selected as:
\begin{equation}
a' \gets \arg\max_{a} Q_L(s', a, \arg\max_{b} Q_F(s', a, b)), \quad
b' \gets \arg\max_{b} Q_F(s', a', b).
\end{equation}
Here, $(a', b')$ can be interpreted as the Stackelberg equilibrium of the stage game defined by the utility matrices $(Q_L(s', \cdot, \cdot), Q_F(s', \cdot, \cdot))$.

Bi-AC extends Bi-level Q-learning by introducing a parametric actor for the follower.

We adapt Bi-AC to align with our setting, in which the leader employs a stochastic policy as its actor, and the follower’s actor is always set to the best response $g^{\theta\dagger}$ to the current leader policy.  
Accordingly, the leader’s critic $Q_{\omega_L}(s, a, b)$ is updated to minimize the discrepancy from the target $y_L(s, a, b)$, computed as:
\begin{align}
y_L(s, a, b) \gets r_L(s, a, b) + \gamma_L Q_{\omega_L}(s', a', b'),
\end{align}
where $(s, a, b, s')$ is a sampled transition, and the next actions are selected as:
\begin{equation}
a' \gets \arg\max_{a} Q_{\omega_L}(s', a, \arg\max_{b} g^{\theta\dagger}(b \mid s', a)), \quad
b' \gets \arg\max_{b} g^{\theta\dagger}(b \mid s', a').
\end{equation}

Finally, the leader’s actor is updated using the standard policy gradient with the partial derivative  
$\partial_{\theta} J_L(\theta, g)\big|_{g = g^{\theta\dagger}}$,  
in the same manner as in \textsc{Naive-PGD}.

\subsection{Four-Rooms Environment}\label{apdx:fourrooms}

\paragraph{Four-Rooms Task}
The follower's reward function is defined as $r_F^{\theta}(s,b):=\mathbb{I}_{\{s=G\}}+\tilde{r}^{\theta}(s,b)$, where $\tilde{r}^{\theta}:\mathcal{S}\times\mathcal{B}\to\R^-$ represents the penalty imposed on the follower.
This penalty function $\tilde{r}^{\theta}$ is parameterized via a softmax transformation of a vector $x\in\R^{|\mathcal{S}|+1}$. Here, the $i$-th entry of $x$ corresponds to the $i$-th cell in the state space $\mathcal{S}$, while the additional dimension $|\mathcal{S}|+1$ is reserved for allocating ineffective penalties.
Specifically, the penalty is defined as:
\begin{equation}
    \tilde{r}^{\theta}(s,b):=-0.2\times \mathrm{softmax}(s;x).
\end{equation}
This parameterization explicitly constrains the total penalty budget to $-0.2$, following \citet{Thoma2024-em}.
The leader's reward function is defined as:
\begin{equation}
    r_L^{\theta}(s_t, a_t) := \mathbb{I}_{\{s_t = s^{+1}\}} - \lambda \mathbb{I}_{\{s_t = G\}} \sum_{s,a} |\tilde{r}^{\theta}(s,a)|,
\end{equation}
where $s^{+1}$ denotes the target cell, $\lambda$ is a penalty coefficient, set to $\lambda=5.0$ in our experiments.

The transition dynamics are stochastic: the follower's action (up, down, right, or left) fails with a probability of $\frac{1}{3}$, in which case it is randomly replaced by one of the remaining actions.

This task is a modification of the problem proposed by \citet{Thoma2024-em}.
In their original task definition, one of two goal states is randomly selected as a context for each episode. In our experiment, we modify this to have a single goal state. This modification is twofold. First, contexts are not included in our problem formulation; although introducing a context is straightforward, we omit it for conciseness. Second, we observe that the presence of context (multiple goals) encourages the follower to take different actions at the same state, which is favorable for previous works and masks a limitation of those approaches: the requirement for trajectories with diverse actions at the same state. Indeed, the performance of previous approaches degrades when the context is removed. We adapt the task to highlight these limitations and avoid notational complications.

\paragraph{Settings}
For each approach, the leader’s learning rate $\alpha$ and the SARSA learning rate $\alpha_{SA}$ (Algorithm~\ref{algo:critic_update_sarsa}) were selected via a grid search from the sets $\{0.5, 0.1, 0.05, 0.01\}$ and $\{0.1, 0.3, 0.5, 0.7, 0.9, 1.0\}$, respectively.
Both the leader's and the follower's discount rates were set to $0.99$.
The estimated gradient $\widehat{\nabla_{\theta} J_L}$ was clipped to ensure a maximum norm of 1.0.
For \textsc{HPGD (oracle)}, additional trajectories of size $10^4$ were sampled from uniformly distributed initial states.  
These trajectories were utilized to estimate the leader’s Q and value functions via the Monte Carlo method, as well as to compute the gradients of the follower’s Q and value functions with respect to the leader’s parameters.



\newpage

\begin{figure}[H]
    \centering

    \begin{subfigure}[b]{0.33\textwidth}
        \includegraphics[width=\textwidth]{images/FourRooms/UL_rewards_grid_search_reg_lambda_5.0_beta_0_001_steps_100.pdf}
        \caption{$\beta=1\times10^{-3}, \text{BatchSize}=100$}
    \end{subfigure}%
    \hfill
    \begin{subfigure}[b]{0.33\textwidth}
        \includegraphics[width=\textwidth]{images/FourRooms/UL_rewards_grid_search_reg_lambda_5.0_beta_0_003_steps_100.pdf}
        \caption{$\beta=3\times10^{-3}, \text{BatchSize}=100$}
    \end{subfigure}%
    \hfill
    \begin{subfigure}[b]{0.33\textwidth}
        \includegraphics[width=\textwidth]{images/FourRooms/UL_rewards_grid_search_reg_lambda_5.0_beta_0_005_steps_100.pdf}
        \caption{$\beta=5\times10^{-3}, \text{BatchSize}=100$}
    \end{subfigure}%
    \\
    \begin{subfigure}[b]{0.33\textwidth}
        \includegraphics[width=\textwidth]{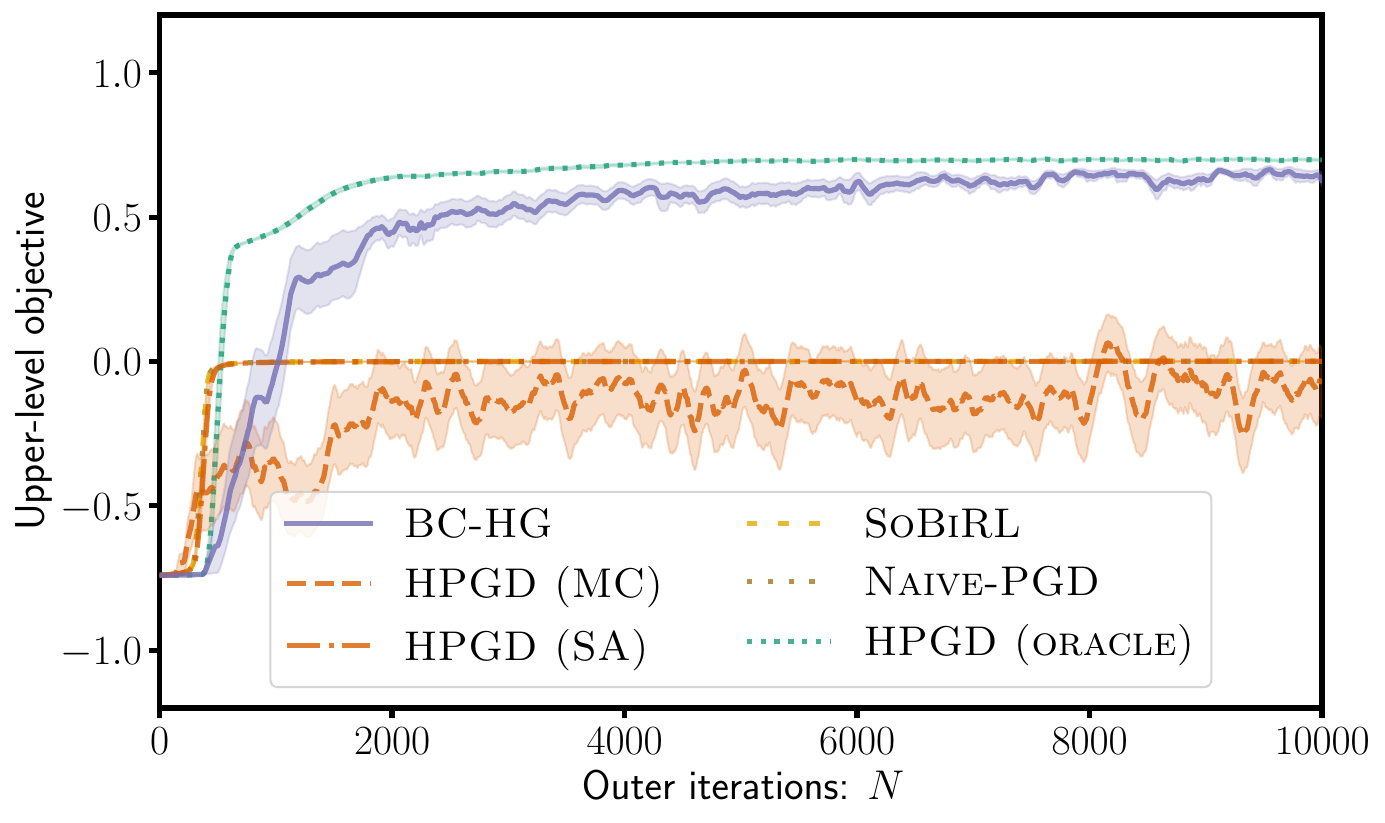}
        \caption{$\beta=1\times10^{-3}, \text{BatchSize}=200$}
    \end{subfigure}%
    \hfill
    \begin{subfigure}[b]{0.33\textwidth}
        \includegraphics[width=\textwidth]{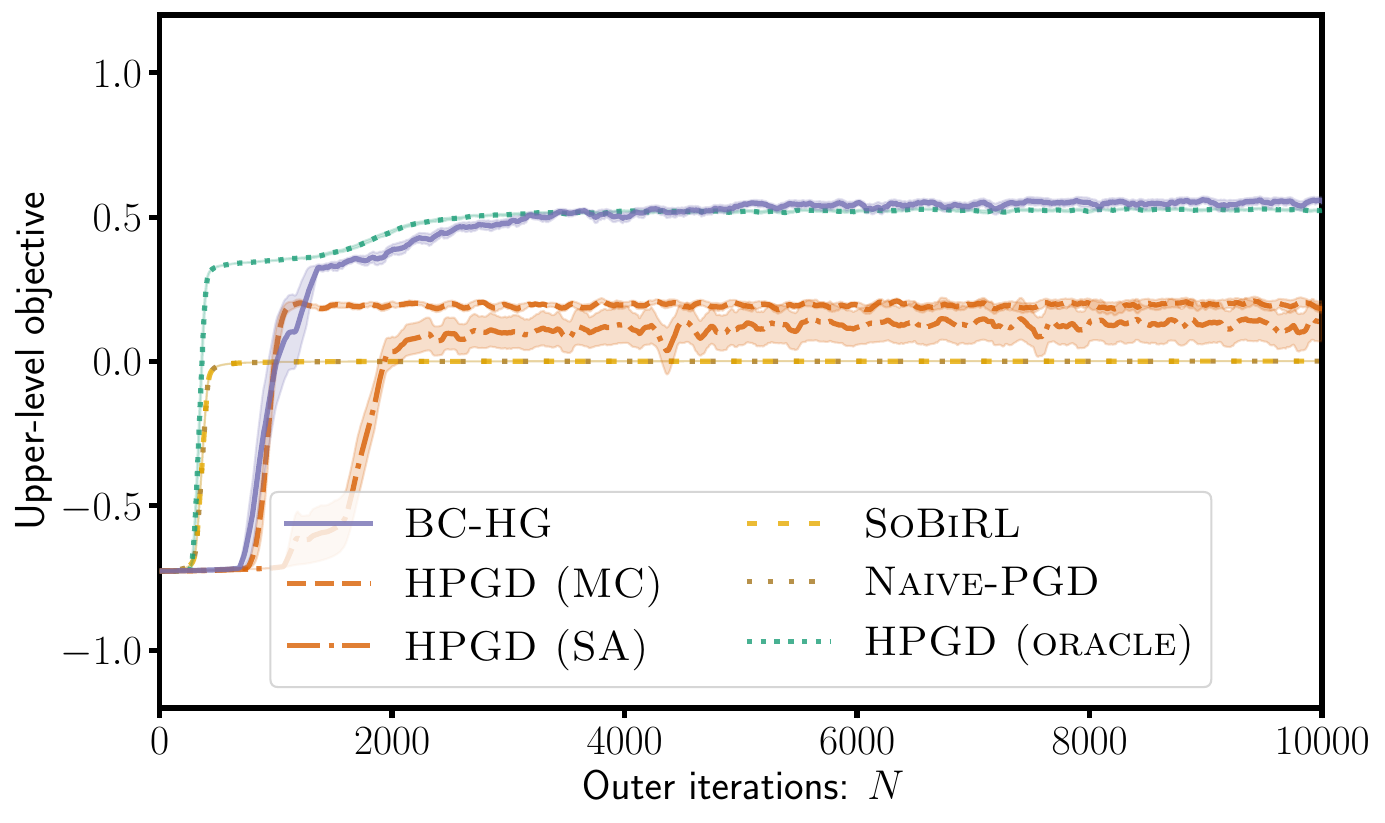}
        \caption{$\beta=3\times10^{-3}, \text{BatchSize}=200$}
    \end{subfigure}%
    \hfill
    \begin{subfigure}[b]{0.33\textwidth}
        \includegraphics[width=\textwidth]{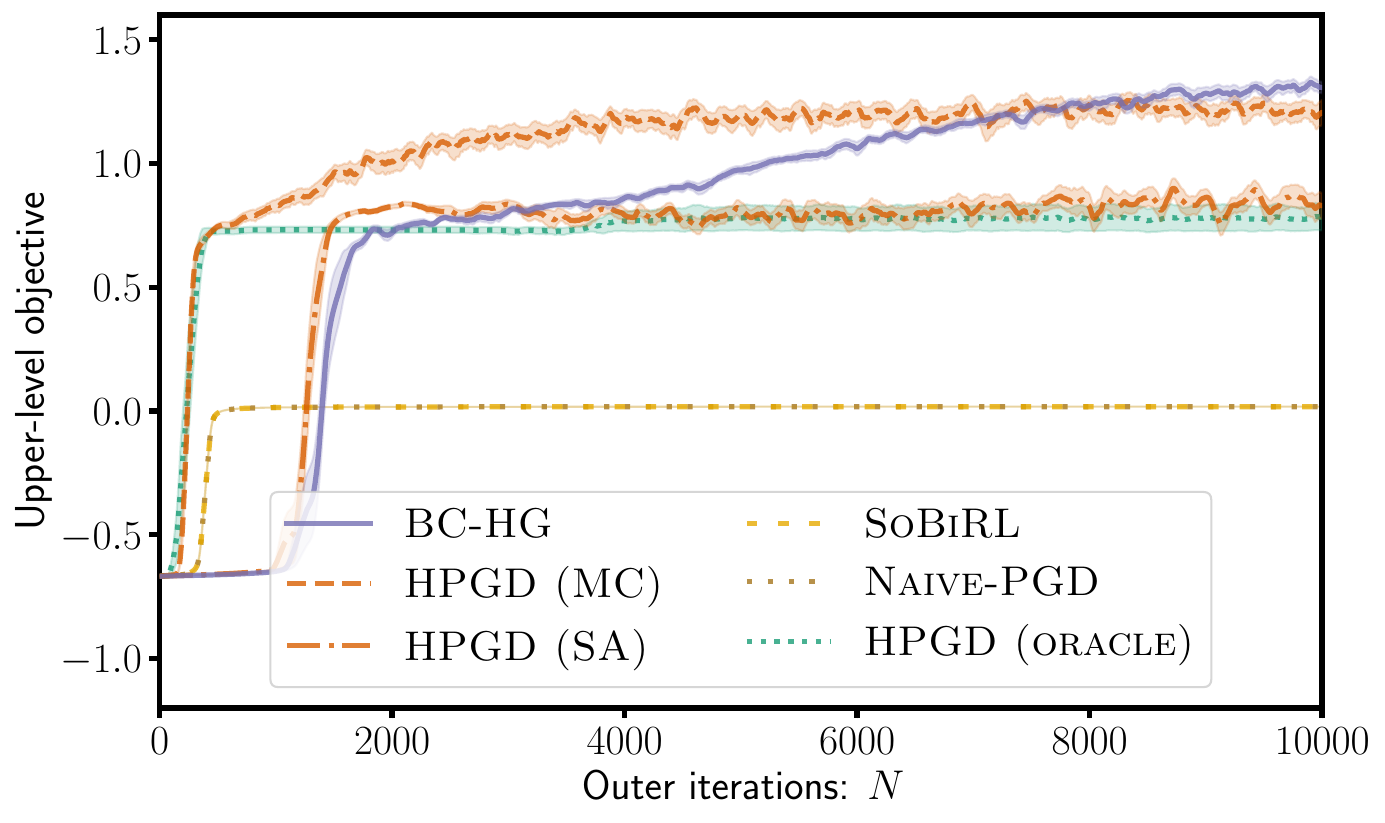}
        \caption{$\beta=5\times10^{-3}, \text{BatchSize}=200$}
    \end{subfigure}%
    \\
    \begin{subfigure}[b]{0.33\textwidth}
        \includegraphics[width=\textwidth]{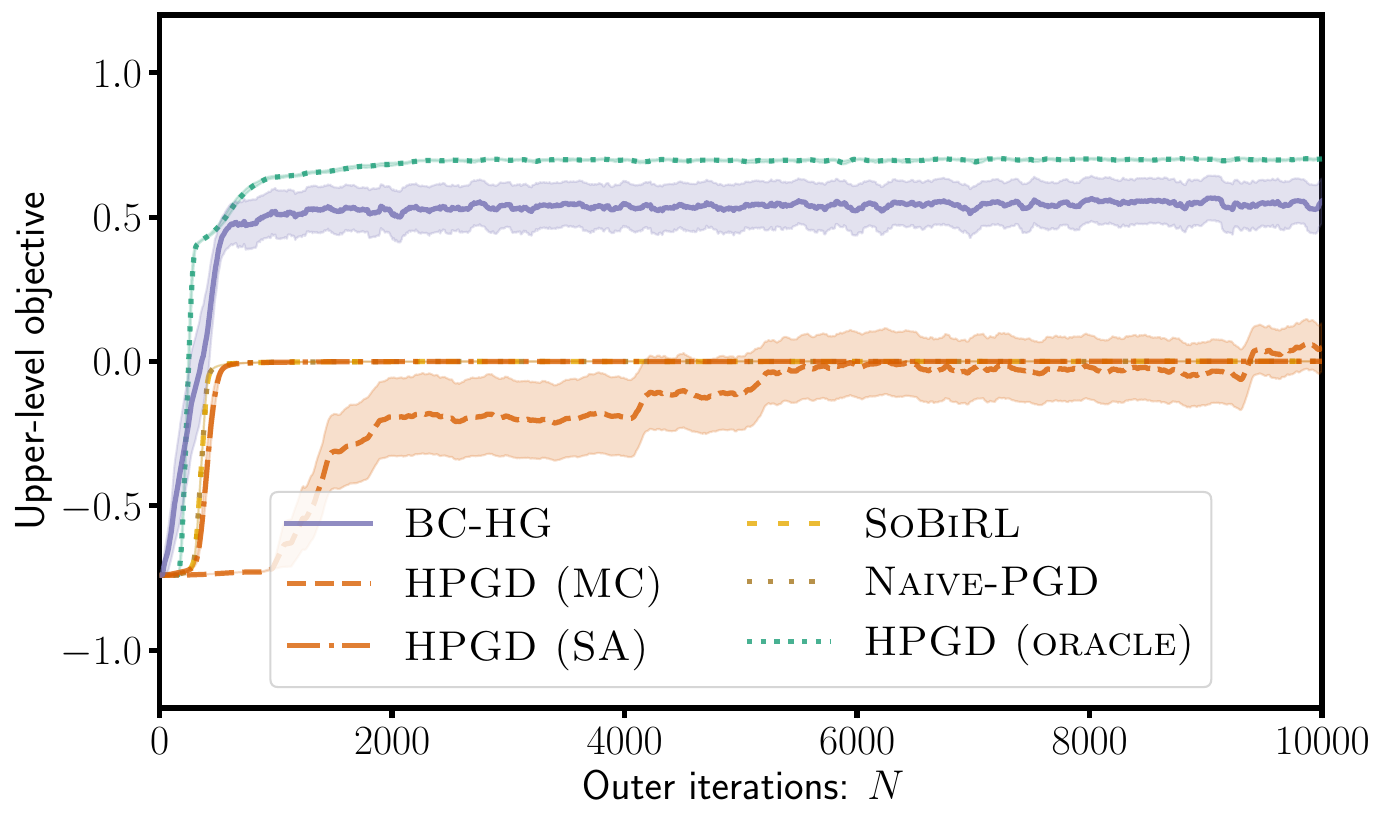}
        \caption{$\beta=1\times10^{-3}, \text{BatchSize}=400$}
    \end{subfigure}%
    \hfill
    \begin{subfigure}[b]{0.33\textwidth}
        \includegraphics[width=\textwidth]{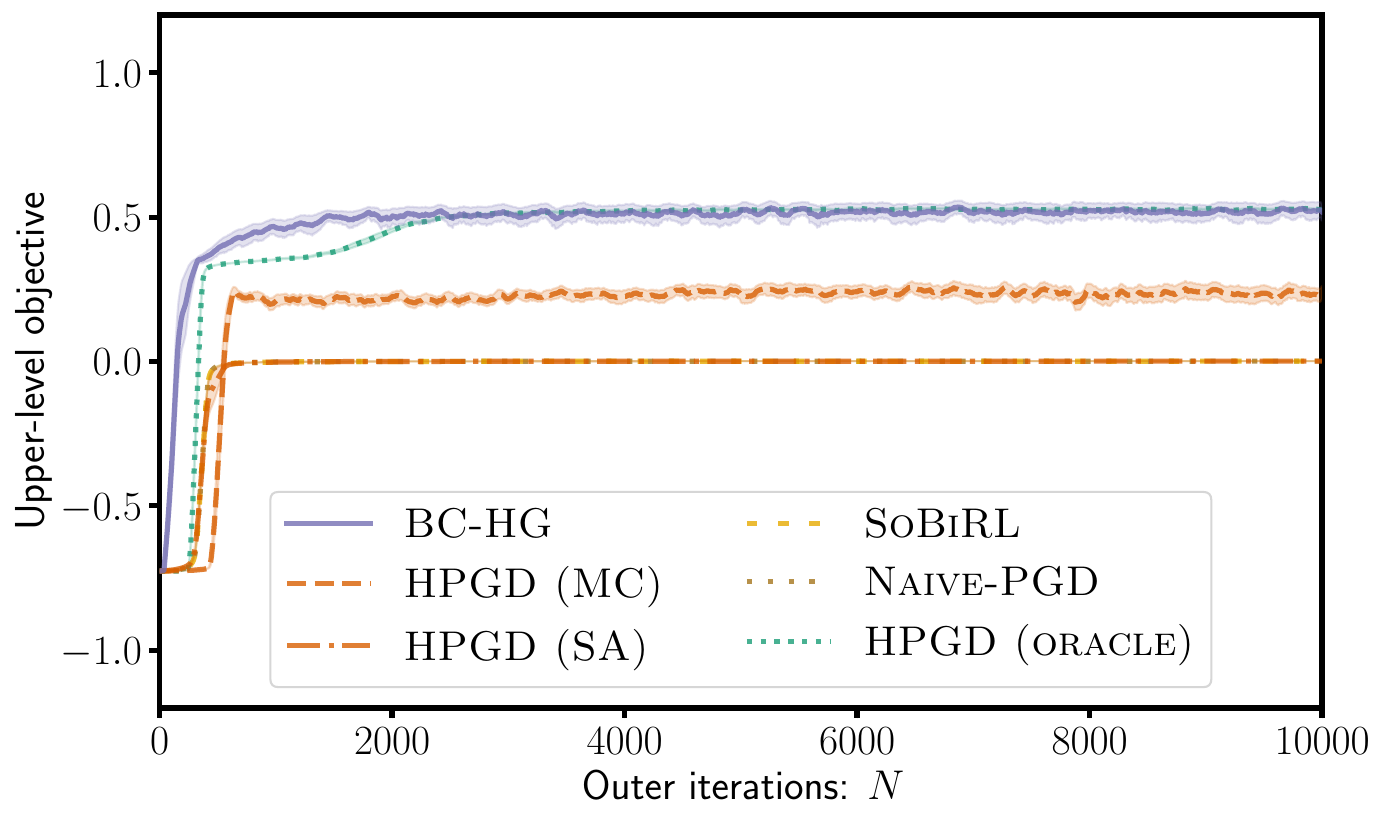}
        \caption{$\beta=3\times10^{-3}, \text{BatchSize}=400$}
    \end{subfigure}%
    \hfill
    \begin{subfigure}[b]{0.33\textwidth}
        \includegraphics[width=\textwidth]{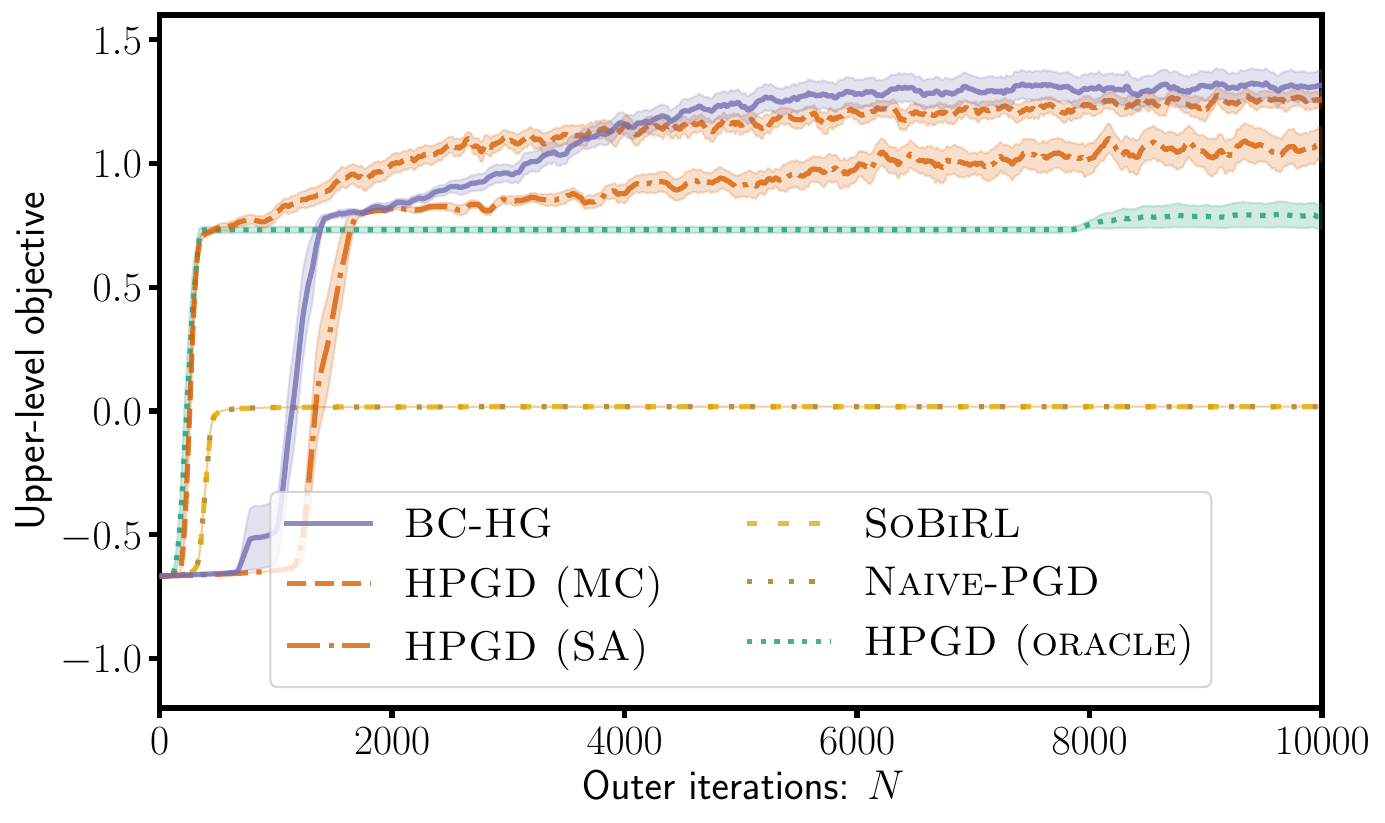}
        \caption{$\beta=5\times10^{-3}, \text{BatchSize}=400$}
    \end{subfigure}%
    \\
    \begin{subfigure}[b]{0.33\textwidth}
        \includegraphics[width=\textwidth]{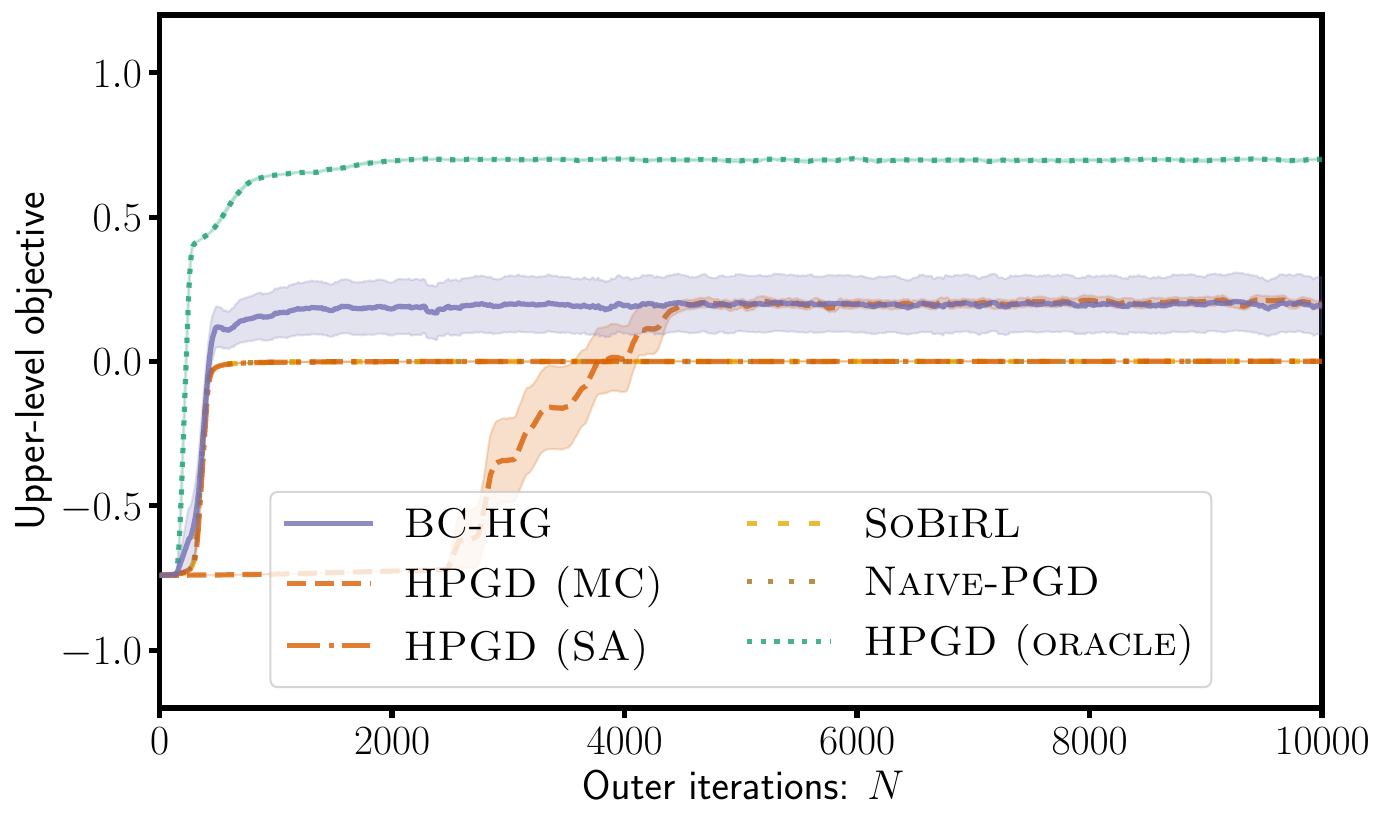}
        \caption{$\beta=1\times10^{-3}, \text{BatchSize}=1000$}
    \end{subfigure}%
    \hfill
    \begin{subfigure}[b]{0.33\textwidth}
        \includegraphics[width=\textwidth]{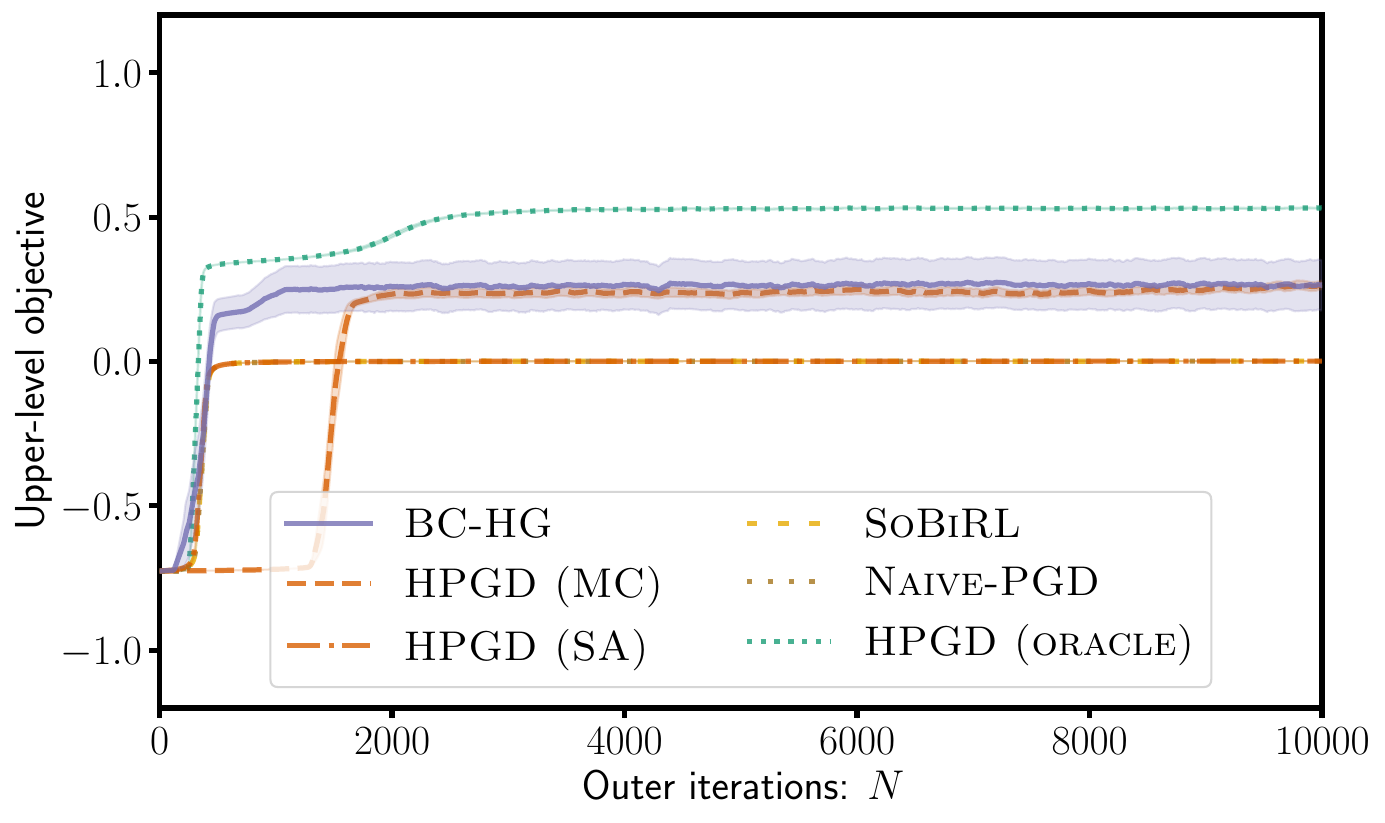}
        \caption{$\beta=3\times10^{-3}, \text{BatchSize}=1000$}
    \end{subfigure}%
    \hfill
    \begin{subfigure}[b]{0.33\textwidth}
        \includegraphics[width=\textwidth]{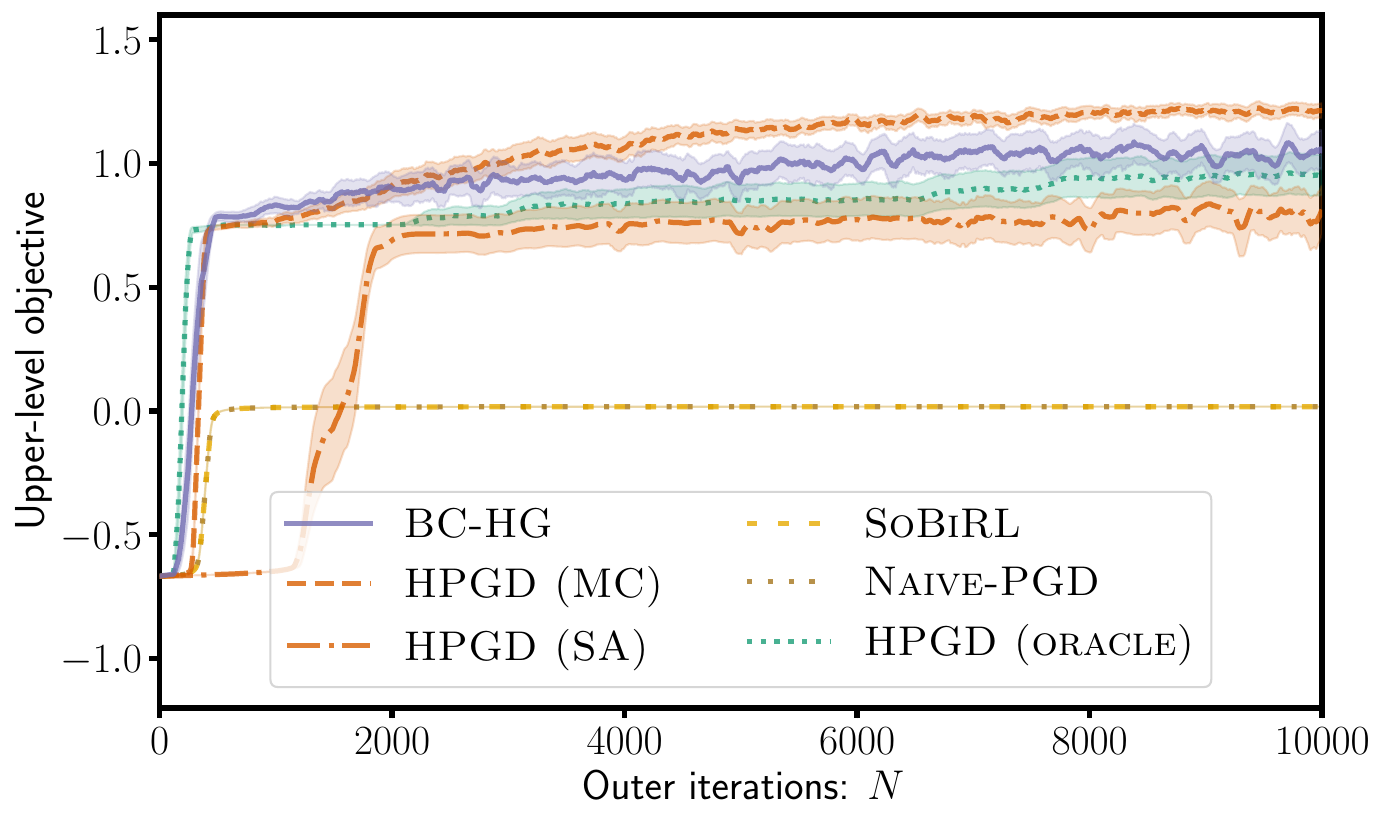}
        \caption{$\beta=5\times10^{-3}, \text{BatchSize}=1000$}
        \label{fig:four_rooms_UL_rewards_grid_search_reg_lambda_5.0_beta_0_005_steps_1000}
    \end{subfigure}%

    \caption{Four-Rooms Task Results across Different BatchSizes ($100, 200, 400, 1000$) and Entropy Regularization Parameters $\beta$ ($1\times10^{-3}, 3\times10^{-3}, 5\times10^{-3}$). \textsc{HPGD (oracle)} uses additional trajectories of size $10^4$ per outer iteration sampled from uniformly distributed initial states.}%
    \label{fig:four_rooms_combined}%
\end{figure}

\paragraph{Results}
The evaluation metric, referred to as the ``Upper-level Objective,'' is the expected value of the initial state under the leader’s reward, estimated via value iteration. Since the initial state is fixed to $S$, this metric corresponds to the value of state $S$.  
Note that this value may exceed 1.0 due to the stochastic nature of the follower’s best response.

For each method, we plot learning curves depicting the mean and standard deviation across 10 random seeds.
The results for batch sizes in $\{100, 200, 400, 1000\}$ and entropy regularization coefficients $\beta \in \{1\times10^{-3}, 3\times10^{-3}, 5\times10^{-3}\}$ are presented in Figure~\ref{fig:four_rooms_combined}.

\subsection{Toy Markov Game Task}\label{apdx:toymarkovgame}

\paragraph{Settings}
For each approach, the number of actor and critic updates per outer iteration was selected via a grid search over the set \{1, 2, 5, 10, 20\}.  
The learning rates for the actor and critic were fixed at $1 \times 10^{-4}$ and $1 \times 10^{-3}$, respectively.
In each outer iteration, the follower is first optimized under the current leader policy.  
Subsequently, three trajectories are sampled using the current leader policy and the optimized follower policy.  
Each trajectory consists of 150 steps and is stored in a trajectory buffer.  
Transition mini-batches and trajectory segments sampled from this buffer are utilized for gradient estimation.
For on-policy methods, including \textsc{BC-HG}, \textsc{Naive-PGD (on-policy)}, and \textsc{Bi-AC (on-policy)}, the buffer size was set to 450 steps to ensure on-policy sampling.  
For the off-policy methods, the buffer size was set to $10^6$ steps.  
The transition mini-batch size was set to 64; accordingly, 64 trajectory segments were used for hypergradient estimation.
The leader’s actor and critic, along with their corresponding target networks, were implemented as neural networks with two hidden layers of 64 units each.

Table~\ref{tab:params_ToyMG} summarizes the hyperparameters shared across all methods.

\paragraph{Results}
Figure~\ref{fig:param_sensitivity_ToyMG} illustrates the parameter sensitivity of each method.  
The plots display the performance for varying numbers of actor and critic updates per outer iteration.
All baseline methods exhibited significant sensitivity to this parameter, with performance often deviating from the optimal return despite starting from relatively favorable initial leader policies.  
In contrast, the proposed method consistently maintained high performance across nearly all settings.

\begin{table}[t]
    \centering
    \begin{tabular}{rll}\toprule
        follower's entropy regularization $\beta$ & $5\times10^{-2}$\\
        mini-batch size & 64\\
        actor learning rate & $1\times10^{-4}$\\
        critic learning rate & $1\times10^{-3}$\\
        leader’s discount rate $\gamma_L$ & 0.99\\
        follower’s discount rate $\gamma_F$ & 0.99\\
        target network smoothing factor & $1\times10^{-2}$\\ \bottomrule
    \end{tabular}
    \caption{Hyperparameters in Toy Markov Game Task}
    \label{tab:params_ToyMG}
\end{table}

\begin{figure}[t]
    \centering
    
    \begin{subfigure}[t]{0.33\textwidth}%
        \centering
        \includegraphics[width=\textwidth]{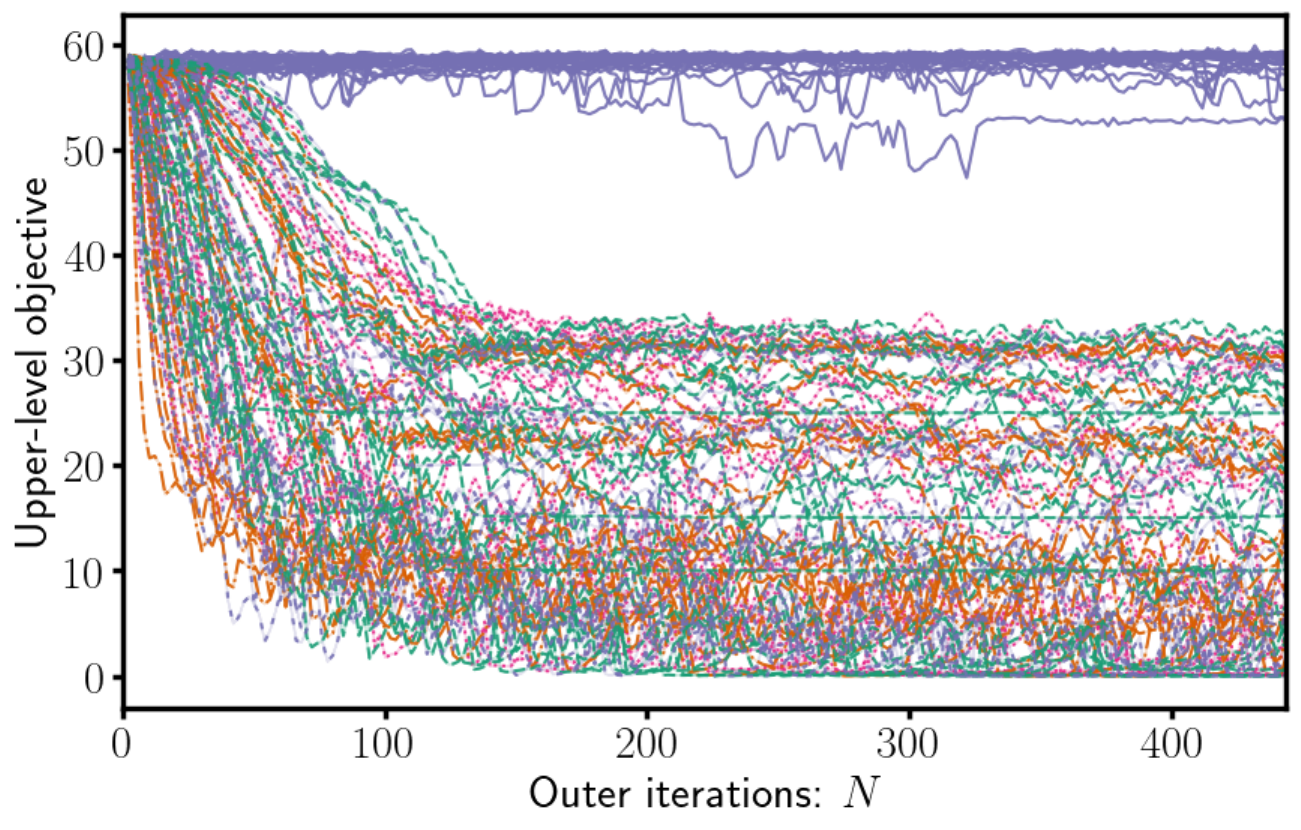}%
        \caption{All Methods}%
    \end{subfigure}%
    \hfill
    \begin{subfigure}[t]{0.33\textwidth}%
        \centering
        \includegraphics[width=\textwidth]{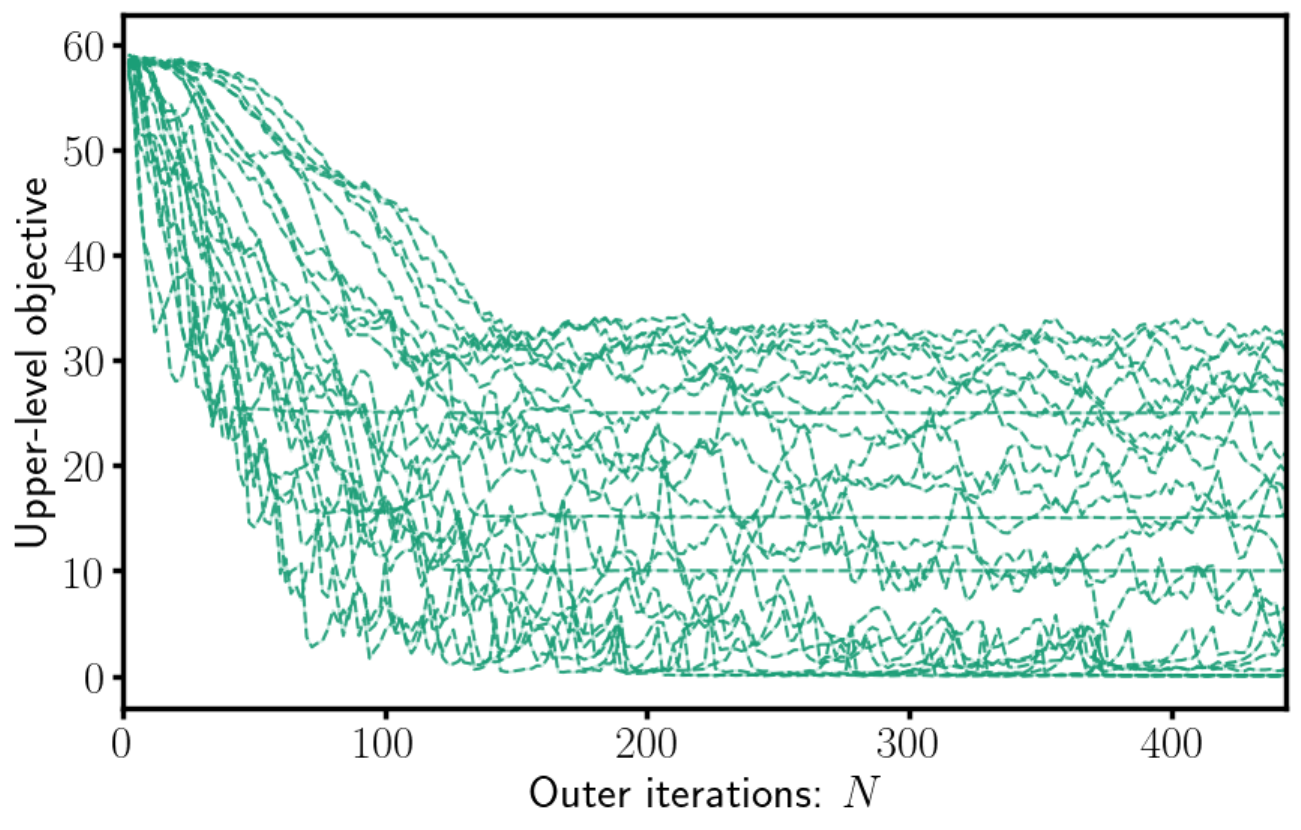}%
        \caption{Naive-PGD (On-Policy)}%
    \end{subfigure}%
    \hfill
    \begin{subfigure}[t]{0.33\textwidth}%
        \centering
        \includegraphics[width=\textwidth]{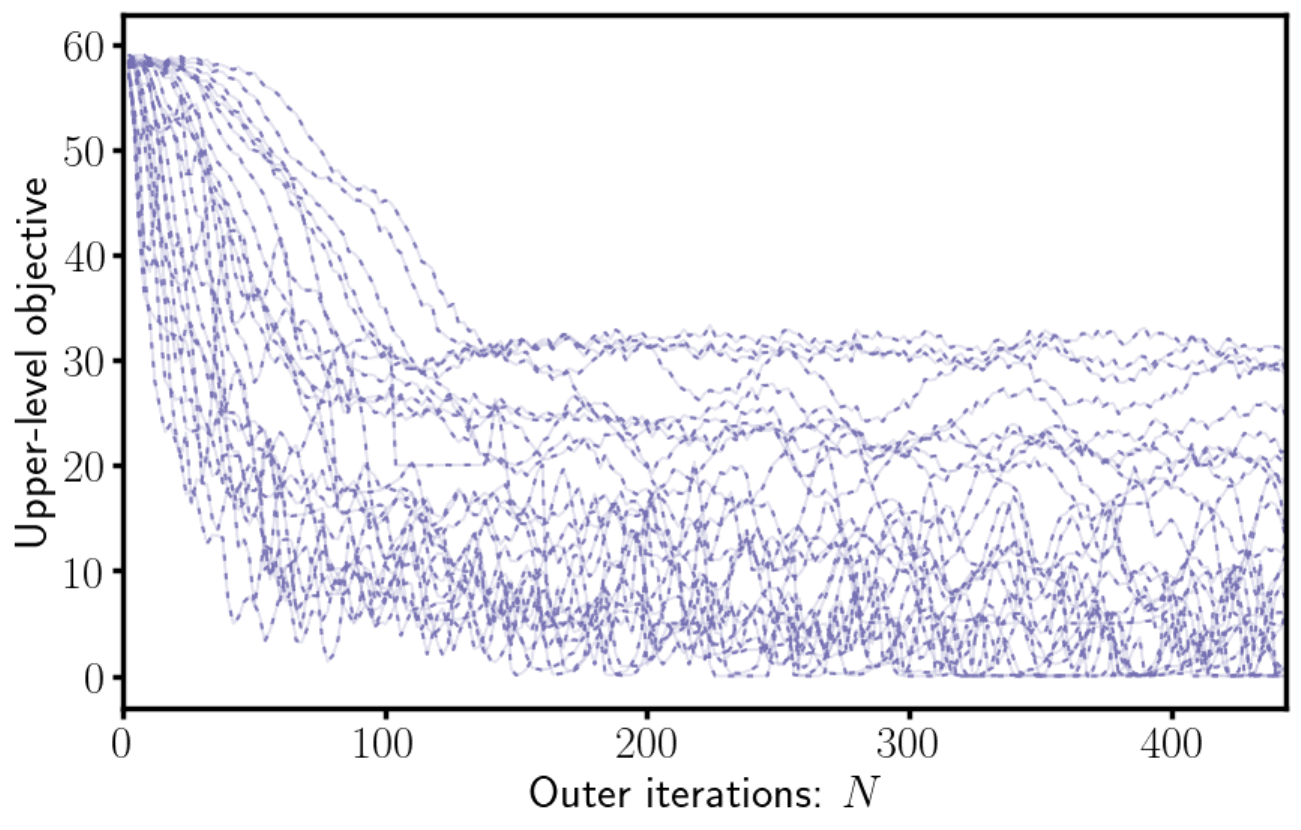}%
        \caption{Naive-PGD (Off-Policy)}
    \end{subfigure}%
    \\
    \begin{subfigure}[t]{0.33\textwidth}%
        \centering
        \includegraphics[width=\textwidth]{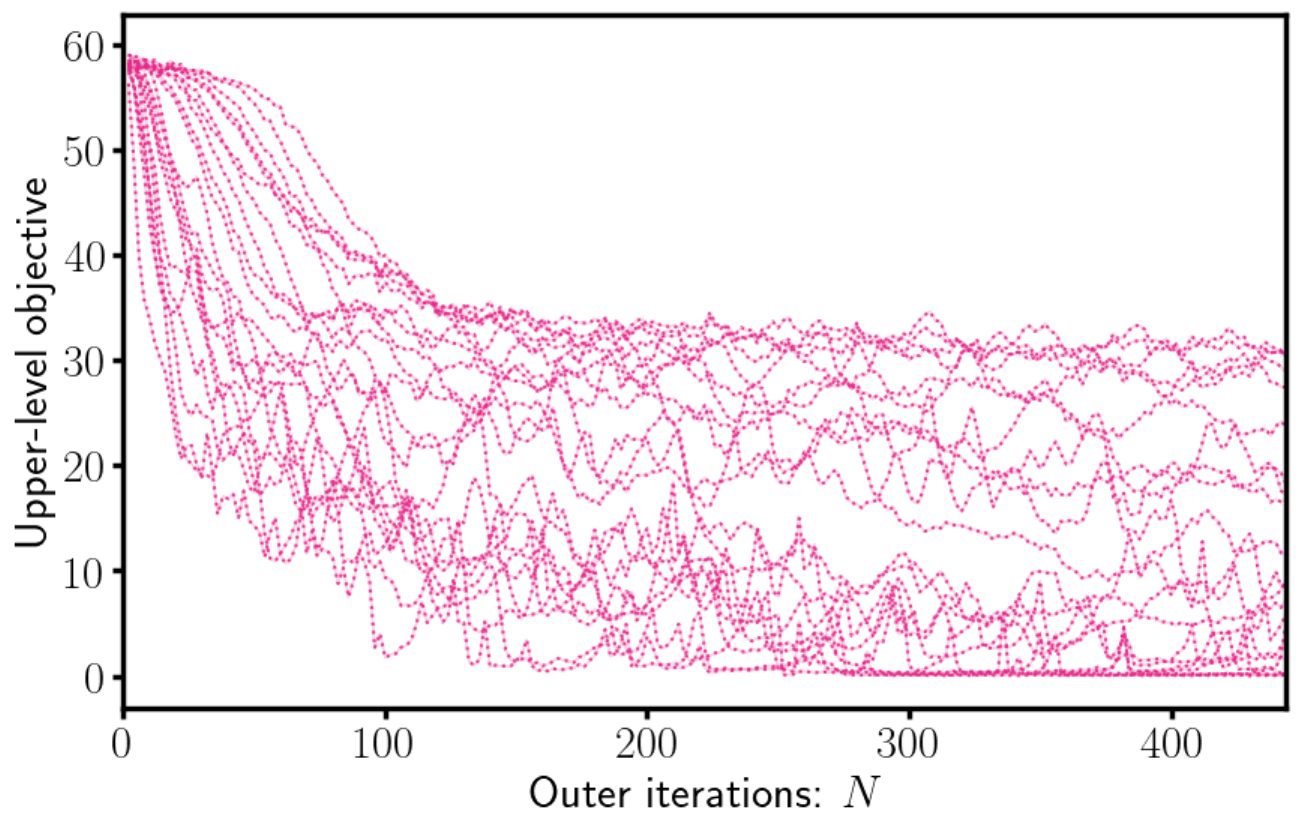}%
        \caption{Bi-AC (On-Policy)}%
    \end{subfigure}%
    \hfill
    \begin{subfigure}[t]{0.33\textwidth}%
        \centering
        \includegraphics[width=\textwidth]{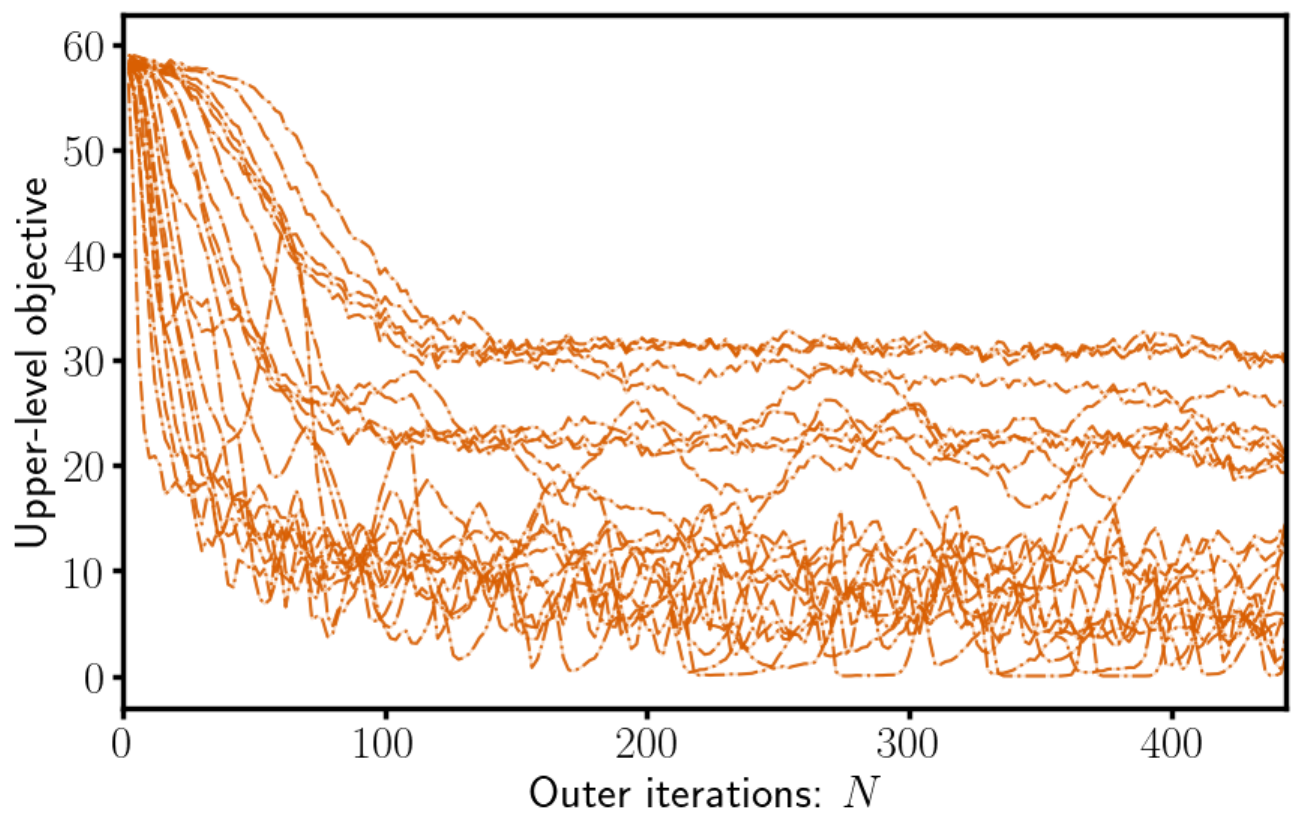}%
        \caption{Bi-AC (Off-Policy)}
    \end{subfigure}%
    \hfill
    \begin{subfigure}[t]{0.33\textwidth}%
        \centering
        \includegraphics[width=\textwidth]{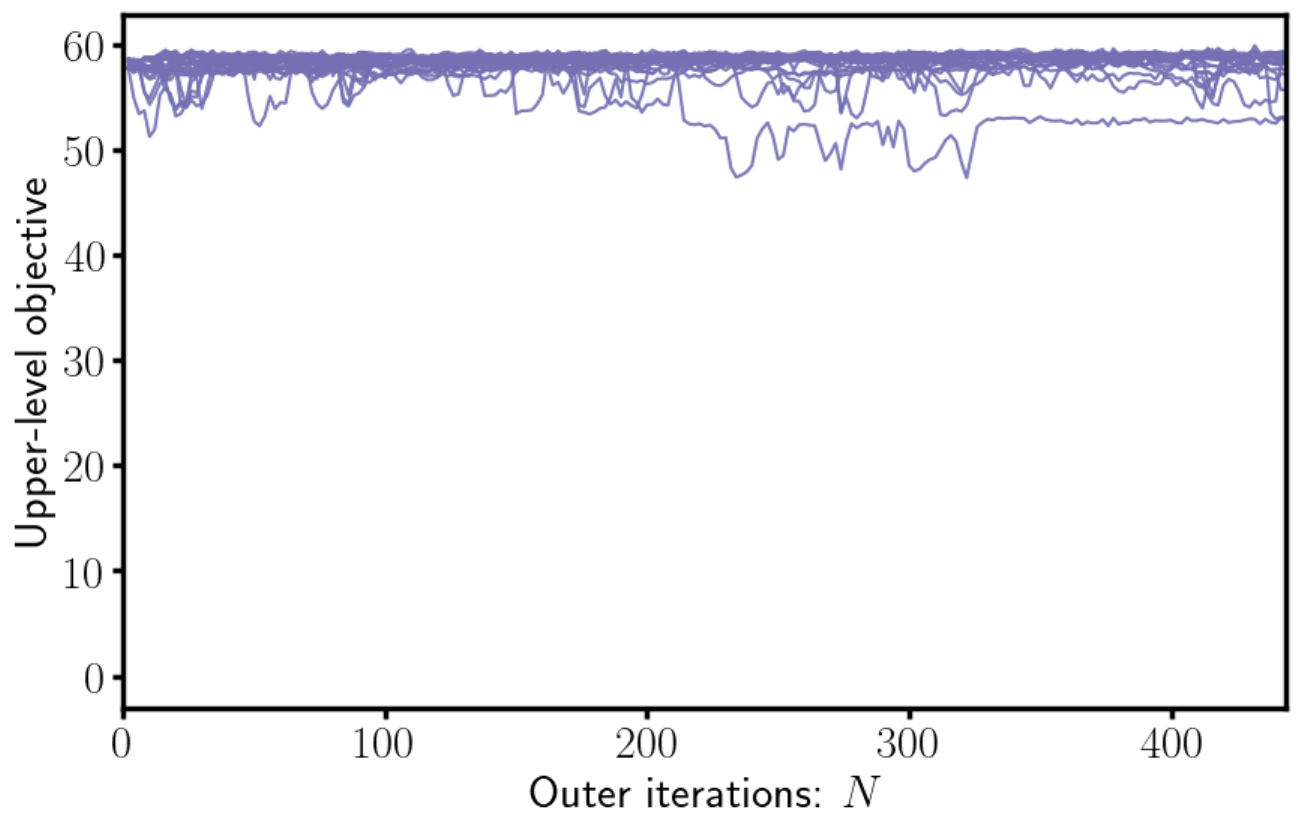}%
        \caption{BC-HG}%
    \end{subfigure}%
    
    \caption{Parameter Sensitivity in the Toy Markov Game Task}
    \label{fig:param_sensitivity_ToyMG}
\end{figure}

\subsection{Entropy-Regularized LQR with Observable and Unobservable Disturbance}\label{apdx:lqr}

We describe the bi-level extension of an infinite-horizon discounted Linear Quadratic Regulator (LQR) task with entropy regularization.
LQRs are widely applied in real-world scenarios due to their tractability.
Let $s \in \R^n$ denote the state and $b \in \R^m$ denote the follower's action. 
The state transition dynamics are governed by the linear equation:
\begin{equation}
    s_{t+1} = A s_{t} + B b_{t} + w_t,  \quad w_t\sim  \mathcal{N}(0,UU^\top), \label{eq:lqr_with_disturbance}
\end{equation}
where $A \in \R^{n\times n}$ is the state transition matrix, $B \in \R^{n \times m}$ is the control input matrix, and $w_t$ represents a disturbance with covariance $W=UU^\top$.
The follower's reward function is defined as the negative of a quadratic cost function:
\begin{equation}
    r_F(s, b) = -\ell(s,b) := - (s^{\top} \bar{Q} s + b^{\top} \bar{R} b),
\end{equation}
where $\bar{Q} \in \R^{n\times n}$ and $\bar{R} \in \R^{m\times m}$ are assumed to be positive semi-definite and positive definite matrices, respectively. 
LQRs are characterized by \emph{linear} state transitions and \emph{quadratic} reward (or cost) functions.

In the configurable MDP setting, the leader's parameter $\theta\in\R^d$ defines the transition dynamics and reward function as follows:
\begin{gather}
    s_{t+1} = A_{\theta} s_{t} + B_{\theta} b_{t} + w_t,  \quad b_{t}\sim g(\cdot\mid s_{t}),\enspace w_t\sim  \mathcal{N}(0,UU^\top), \label{eq:bilevel_lqr_cmdp_transition}\\
    r_F^{\theta}(s, b) = -\ell^{\theta}(s,b) := - (s^{\top} \bar{Q}_{\theta} s + b^{\top} \bar{R}_{\theta} b).
\end{gather}
In contrast, in the 2-player Markov Game setting, the leader's action $a\in\R^k$ indirectly influences the transition:
\begin{gather}
    s_{t+1} = A s_{t} + B b_{t} + C a_{t},\quad b_{t}\sim g(\cdot\mid s_{t},a_{t}),\enspace a_{t}\sim f_{\theta}(\cdot\mid s_{t}), \label{eq:bilevel_lqr_mg_transition}\\
    r_F(s, a, b) = -\ell(s,b) :=  - (s^{\top} \bar{Q} s + b^{\top} \bar{R} b),
\end{gather}
where $f_{\theta}$ denotes the leader's policy and $C\in\R^{n\times k}$.
However, by modeling the leader's policy $f_\theta(\cdot\mid s)$ as a Gaussian distribution $\mathcal{N}(K_\theta s, U_{\theta}U_{\theta}^\top)$, the transition in \eqref{eq:bilevel_lqr_mg_transition} can be reformulated into the structure of \eqref{eq:lqr_with_disturbance} with a disturbance term:
\begin{subequations}
\begin{align}
    s_{t+1} &= A s_{t} + B b_{t} + C a_{t}, &b_{t}\sim g(\cdot\mid s_{t},a_{t}),\enspace a_{t}\sim \mathcal{N}(K_{\theta}s_{t},U_{\theta}U_{\theta}^\top),\\
    &= (A + CK_{\theta}) s_{t} + B b_{t} + C w_{t}, &b_{t}\sim g(\cdot\mid s_{t},w_{t}),\enspace w_{t}\sim\mathcal{N}(0,U_{\theta}U_{\theta}^\top),\\
    &= \tilde{A} s_{t} + B b_{t} + \tilde{w}_{t}, &b_{t}\sim g(\cdot\mid s_{t},\tilde{w}_{t}),\enspace \tilde{w}_{t}\sim\mathcal{N}(0,\tilde{U}_{\theta}\tilde{U}_{\theta}^\top),\label{eq:bilevel_lqr_mg_transition_observable}
\end{align}
\end{subequations}
where $a_t$ is decomposed as $a_t=K_{\theta}s_{t}+w_{t}$, and we define $\tilde{A}:=A+CK_{\theta}$, $\tilde{w}_t:=Cw_{t}$, and $\tilde{U}_{\theta}:=CU_{\theta}$.

A key distinction between configurable MDPs and 2-player MGs in the context of bi-level LQR lies in the observability of the disturbance. In configurable MDPs (Eq.~\eqref{eq:bilevel_lqr_cmdp_transition}), the disturbance $w_{t}$ is unobservable to the follower prior to action selection ($g(\cdot\mid s_{t})$). Conversely, in MGs (Eq.~\eqref{eq:bilevel_lqr_mg_transition_observable}), the effective disturbance $\tilde{w}_{t}$ is observable ($g(\cdot\mid s_{t},\tilde{w}_{t})$).
The follower's best response under entropy regularization can be analytically derived for both settings.
For configurable MDPs, the best response $g^{\theta\dag}(b \mid s)$ follows a Gaussian distribution $\mathcal{N}(- K_s^{\theta} s, (\beta / 2) (S^{\theta})^{-1})$. Here, $K_s^{\theta}:= \gamma_F (S^{\theta})^{-1} B_{\theta}^\top P^{\theta} A_{\theta}$ represents the control gain, $S^{\theta}:= \bar{R}_{\theta} + \gamma_F B_{\theta}^\top P^{\theta} B_{\theta}$, and $P^{\theta} = \text{\textsc{Riccati}}(A_{\theta}, B_{\theta}, \bar{Q}_{\theta}, \bar{R}_{\theta}, \gamma_F)$ is the unique solution to the Riccati equation for an infinite-horizon discounted LQR. 
For conciseness, the Riccati equation, denoted as $\text{\textsc{Riccati}}(A, B, \bar{Q}, \bar{R}, \gamma)$, is defined by:
\begin{equation}
P = \bar{Q} + \gamma A^\top P A - \gamma^2 A^\top P B (\bar{R} + \gamma B^\top P B)^{-1} B^\top P A .\label{eq:riccati_equation}
\end{equation}
In the case of 2-player MGs, the optimal policy $g^{\theta\dag}(b \mid s, a)$ is given by $\mathcal{N}(- K_s s - K_a a, (\beta / 2) S^{-1})$. Here, the control gains are $K_s := \gamma_F S^{-1} B^\top P A$ and $K_a := \gamma_F S^{-1} B^\top P C$, with $S := \bar{R} + \gamma_F B^\top P B$. The matrix $P$ is the unique solution to the Riccati equation $\text{\textsc{Riccati}}(A+C K_\theta, B, \bar{Q}, \bar{R}, \gamma_F)$, where the state transition matrix $A$ is effectively replaced by $A+C K_\theta$.

In the following, we derive the optimal policy and value function for entropy-regularized LQR with unobservable disturbance in Proposition~\ref{thm:opt_f_policy_unobservable_disturbance} and for the case with observable disturbance in Proposition~\ref{thm:opt_f_policy_observable_disturbance}.

\begin{proposition}\label{thm:opt_f_policy_unobservable_disturbance}
    Consider an LQR task with an unobservable disturbance, whose state transition is defined as $s_{t+1} = A s_t + B b_t + w_t$, where $w_t \sim \mathcal{N}(0, UU^\top)$ are independently distributed, and cost function (negative of a reward function) is defined as $\ell(s, b):= s^\top \bar{Q} s + b^\top \bar{R} b$.
    Consider the minimization of the expectation of the infinite-horizon discounted cumulative reward under entropy regularization
    \begin{equation}
        \min_{g} \E\left[\sum_{t=0}^\infty \gamma^t \left(\ell(s_t, b_t) - \beta H(g(\cdot \mid s_t)) \right)\right] ,
    \end{equation}
    where $\gamma \in [0, 1)$ is the discount factor.
    Suppose that $\bar{R} \succ 0$, $\bar{Q} \succcurlyeq 0$, $(A, B)$ stabilizable and $(A, \bar{Q}^{1/2})$ detectable (the usual assumptions for the standard LQR to guarantee the existence of the stabilizing solution of the discounted Riccati equation).
    Then, the optimal policy is given by 
    \begin{equation}
        g(b \mid s) = \mathcal{N}(b \mid - K_s s, (\beta/2) S^{-1}), \label{eq:lqr_opt_follower_cmdp}
    \end{equation}
    where $K_s := \gamma S^{-1} B^\top P A$ is the control gains, $S := \bar{R} + \gamma B^\top P B$, and $P = \text{\textsc{Riccati}}(A, B, \bar{Q}, \bar{R}, \gamma)$. 
    Moreover, the optimal value function is 
    \begin{equation}
        V(s) = s^\top \bar{Q} s + (A s)^\top (\gamma P - \gamma^2 P B S^{-1} B^\top P) (A s) 
        + \frac{1}{1 - \gamma} \left( - \frac{\beta m}{2} \log(\pi \beta) +  \frac{\beta}{2} \log \det(S) + \gamma \Tr(U^\top P U) \right).
    \end{equation}
\end{proposition}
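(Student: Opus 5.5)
Since the problem is posed as cost minimization, I will take $V$ to be the optimal soft cost-to-go, i.e.\ the negative of the soft value in the sense of the Preliminaries. The plan is a quadratic-plus-constant ansatz verified through the soft Bellman optimality equation. Suppose $V(s) = s^\top P s + c$ with $P \succcurlyeq 0$ and a constant $c$. Define the soft cost-to-go action function
\begin{equation*}
\mathcal{Q}(s,b) = \ell(s,b) + \gamma\E_{w}\left[V(As+Bb+w)\right] = s^\top\bar{Q}s + b^\top\bar{R}b + \gamma (As+Bb)^\top P(As+Bb) + \gamma\Tr(U^\top P U) + \gamma c .
\end{equation*}
The cross term with $w$ vanishes because $w$ has zero mean and is independent of $(s,b)$. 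It is also the only place the disturbance enters, which is why unobservability plays no role.

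The soft Bellman minimization over $g(\cdot\mid s)$ of $\E_b[\mathcal{Q}(s,b)] - \beta H(g(\cdot\mid s))$ is a Gibbs variational problem. Its minimizer is $g(b\mid s)\propto\exp(-\beta^{-1}\mathcal{Q}(s,b))$, and the minimum value is $-\beta\log\int\exp(-\beta^{-1}\mathcal{Q}(s,b))\,\rmd b$. This mirrors the Boltzmann form in the Preliminaries, with a sign flip since we minimize cost.

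Completing the square in $b$ gives
\begin{equation*}
b^\top S b + 2\gamma b^\top B^\top P A s = (b+K_s s)^\top S (b+K_s s) - \gamma^2 s^\top A^\top P B S^{-1}B^\top P A s ,
\end{equation*}
with $S=\bar{R}+\gamma B^\top P B\succ 0$ and $K_s = \gamma S^{-1}B^\top P A$. The Gibbs distribution is therefore Gaussian with mean $-K_s s$ and covariance $(\beta/2)S^{-1}$, which is \eqref{eq:lqr_opt_follower_cmdp}. The Gaussian integral evaluates to $\int\exp(-\beta^{-1}(b+K_s s)^\top S(b+K_s s))\,\rmd b = (\pi\beta)^{m/2}\det(S)^{-1/2}$, so the soft minimum equals
\begin{equation*}
s^\top\left(\bar{Q}+\gamma A^\top P A - \gamma^2 A^\top P B S^{-1}B^\top P A\right)s - \frac{\beta m}{2}\log(\pi\beta) + \frac{\beta}{2}\log\det S + \gamma\Tr(U^\top PU) + \gamma c .
\end{equation*}

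Matching this with $s^\top P s + c$ gives two conditions. The quadratic part forces $P$ to satisfy \eqref{eq:riccati_equation}, i.e.\ $P=\textsc{Riccati}(A,B,\bar{Q},\bar{R},\gamma)$, and it shows that $V$ equals the claimed expression. The constant part gives $(1-\gamma)c = -\frac{\beta m}{2}\log(\pi\beta)+\frac{\beta}{2}\log\det S+\gamma\Tr(U^\top PU)$, which is exactly the stated constant.

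The main obstacle is justifying that this fixed point is the optimal value function, not merely a solution of the Bellman equation. On an unbounded state space with unbounded quadratic cost, the soft Bellman operator is not a sup-norm contraction, so uniqueness cannot be quoted directly. I would use the stabilizability and detectability assumptions, which give existence of the stabilizing solution $P\succcurlyeq 0$ of the discounted Riccati equation; this is equivalent to the standard LQR with $\sqrt{\gamma}A$ and $\sqrt{\gamma}B$. The verification argument then runs in two steps:
\begin{itemize}
\item For any admissible policy $g$ with finite cost, iterate the Bellman inequality $V(s)\le \E_{b\sim g}[\mathcal{Q}(s,b)]-\beta H(g(\cdot\mid s))$ over $T$ steps to bound $V(s_0)$ by the $T$-step cost plus $\gamma^T\E[V(s_T)]$.
\item Show $\gamma^T\E[s_T^\top P s_T]\to 0$. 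For the candidate policy this follows because the closed loop is $\sqrt{\gamma}$-stable under the stabilizing solution; for competing policies, finiteness of their cost together with $P\succcurlyeq 0$ controls the tail, so at worst we restrict to policies whose cost is finite.
\end{itemize}
Equality holds for the Gibbs policy, which gives optimality and completes the proof.
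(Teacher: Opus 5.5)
Your derivation is the same as the paper's. Both use the quadratic-plus-constant ansatz, take the Boltzmann minimizer of the soft Bellman step, complete the square to get $\mathcal{N}(-K_s s,(\beta/2)S^{-1})$, evaluate the log-partition term $-\tfrac{\beta m}{2}\log(\pi\beta)+\tfrac{\beta}{2}\log\det S$, and match coefficients to obtain the Riccati equation and the constant. All of these computations check out. The paper then closes with ``the rest follows the uniqueness of the fixed point of the Bellman optimality equation.'' You are right that this uniqueness is not available off the shelf for unbounded quadratic costs, and a verification argument is the right replacement. However, one step of your verification is justified incorrectly.

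The gap is the claim that, for a competing policy, ``finiteness of their cost together with $P\succcurlyeq 0$ controls the tail'' $\gamma^T\E[s_T^\top P s_T]$. This is false in general. Take the scalar case with $\bar Q=0$, $B\neq 0$, and $\gamma a^2>1$. The discounted Riccati equation then has the stabilizing solution $P=(\gamma a^2-1)\bar R/(\gamma B^2)>0$. The policy $b\sim\mathcal{N}(0,(\beta/2)\bar R^{-1})$ has finite cost, yet $\gamma^T\E[s_T^2]\to\infty$; in that example the stabilizing solution is not even optimal.

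What excludes this is the detectability hypothesis, applied to $(\sqrt{\gamma}A,\bar Q^{1/2})$, which inherits it since $\gamma<1$. You invoked detectability only for existence of $P$, but it is needed here. The fix runs as follows.
\begin{itemize}
\item Finite cost gives $\sum_t\gamma^t\E[s_t^\top\bar Q s_t]<\infty$ and $\sum_t\gamma^t\E[b_t^\top\bar R b_t]<\infty$. For the second bound, use that $\tfrac12\E[b^\top\bar R b\mid s]-\beta H(g(\cdot\mid s))$ is bounded below by the Gaussian maximum-entropy bound. The same bound justifies passing from $T$-step to infinite-horizon costs.
\item With $x_t=\gamma^{t/2}s_t$ and an output injection $L$ making $M=\sqrt{\gamma}A+L\bar Q^{1/2}$ Schur, write $x_{t+1}=Mx_t-L\bar Q^{1/2}x_t+\gamma^{(t+1)/2}(Bb_t+w_t)$.
\item This is a stable system driven by a mean-square-summable input, so $\E\|x_t\|^2\to 0$.
\end{itemize}
Alternatively, you can avoid the tail altogether. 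For any policy $g$, its $T$-step cost is at least the finite-horizon optimum $s^\top P_T s+c_T$, where $P_T$ is the Riccati iteration started at $P_0=0$. Under stabilizability and detectability, $P_T$ increases to the stabilizing $P$, which gives $V\le J^g$ directly.
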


\begin{proof}
    Let $V(s)$ denote the optimal value function.
    The Bellman optimality equation is 
    \begin{equation}
    V(s) = \min_{g} \E_{b}\left[ \ell(s, b) + \beta \log g(b \mid s) + \gamma \E_{w}\left[ V(As+Bb+w)\right]\right],
    \end{equation}
    where $w \sim \mathcal{N}(0, UU^\top)$ represents the disturbance.
    We assume that the optimal value function is a quadratic function
    \begin{equation}
    V(s) = s^\top P_{ss} s + v ,
    \end{equation}
    where $P_{ss} \succcurlyeq 0$ is symmetric. 
    
    Define the action value function as
    \begin{equation}
    Q(s, b) := \ell(s, b) + \gamma \E_{w}[V(As + Bb + w)].
    \end{equation}
    Under the quadratic ansatz and $\E[w] = 0$ and $\Cov[w] = UU^\top$, we obtain
    \begin{equation}
    \E_{w}[V(As + Bb + w)] = (A s + Bb)^\top P_{ss} (A s + Bb) + \Tr(P_{ss} UU^\top) + v .
    \end{equation}
    Thus, by grouping quadratic and linear terms in $b$, we can write
    \begin{equation}
    Q(s, b) = b^\top S b + 2 b^\top \phi(s) + \xi(s),
    \end{equation}
    where
    \begin{align}
    S &:= \bar{R} + \gamma B^\top P_{ss} B,\\
    \phi(s) &:= \gamma B^\top P_{ss} A s,\\
    \xi(s) &:= s^\top \bar{Q} s + \gamma ((A s)^\top P_{ss} (A s) + \Tr(P_{ss} UU^\top) + v).
    \end{align}
    Because $\bar{R} \succ 0$ and $P_{ss} \succcurlyeq 0$, $S \succ 0$ is guaranteed.
    
    The optimal policy is then derived. With the derived Q-function $Q(s, b)$, the objective is rewritten as
    \begin{equation}
    \E_{s_0}\left[  \E_{b_0} \left[ Q(s_0, b_0) - \beta H(g(\cdot \mid s_0)) \mid s_0\right] \right].
    \end{equation}
    Therefore, its optimal policy is known to be given by the Boltzmann distribution, $g(b \mid s) \propto \exp( - Q(s, b) / \beta)$. Because $S$ is positive definite, the Q-function $Q$ is convex quadratic with respect to $b$. Therefore, by rearranging the terms of $Q$ as
    \begin{equation}
    Q(s, b) = (b + S^{-1} \phi(s))^\top S (b + S^{-1} \phi(s)) - \phi(s)^\top S^{-1} \phi(s) + \xi(s),
    \end{equation}
    we obtain the Gaussian distribution
    \begin{equation}
    g(b \mid s) = \mathcal{N}(b \mid - S^{-1} \phi(s), (\beta / 2) S^{-1}).
    \end{equation}
    
    We show that the quadratic ansatz satisfies the Bellman optimality equation for a specific $P_ss$, $P_{sw}$, $P_{ww}$, and $v$. 
    Because of the variational identity, for the optimal value function, we have
    \begin{equation}
    V(s) = - \beta \log \left( \int \exp \left( - \frac{1}{\beta} Q(s, b) \right) \rmd b\right) .
    \end{equation}
    Evaluating the right-hand side, we obtain 
    \begin{equation}
    V(s) = \xi(s) - \phi(s)^\top S^{-1} \phi(s) - \frac{\beta m}{2} \log(\pi \beta) + \frac{\beta}{2} \log \det S . 
    \end{equation}
    By comparing both sides of the above equality, we obtain
    \begin{align}
    P_{ss} &= \bar{Q} + \gamma A^\top P_{ss} A - \gamma^2 A^\top P_{ss} B S^{-1} B^\top P_{ss} A,\\
    v &= \frac{1}{1 - \gamma} \left( - \frac{\beta m}{2} \log(\pi \beta) +  \frac{\beta}{2} \log \det(S) + \gamma \Tr(P_{ss} UU^\top) \right).
    \end{align}
    Note that the equation for $P$ is identical to the Riccati equation. Under the stated assumptions ($\bar{R} \succ 0$, $\bar{Q} \succcurlyeq 0$, $(A, B)$ stabilizable and $(A, \bar{Q}^{1/2})$ detectable), the Riccati equation has a unique symmetric positive semidefinite stabilizing solution $P$. Therefore, $P_{ss} = \text{\textsc{Riccati}}(A, B, \bar{Q}, \bar{R}, \gamma)$. All the other terms are then identified. The rest follows the uniqueness of the fixed point of the Bellman optimality equation. 
\end{proof}

\begin{proposition}\label{thm:opt_f_policy_observable_disturbance}
    Consider an LQR task with an observable disturbance, whose state transition is defined as $s_{t+1} = A s_t + B b_t + w_t$, where $w_t \sim \mathcal{N}(0, UU^\top)$ are independently distributed, and cost function (negative of a reward function) is defined as $\ell(s, b):= s^\top \bar{Q} s + b^\top \bar{R} b$.
    Consider the minimization of the expectation of the infinite-horizon discounted cumulative reward under entropy regularization
    \begin{equation}
        \min_{g} \E\left[\sum_{t=0}^\infty \gamma^t \left(\ell(s_t, b_t) - \beta H(g(\cdot \mid s_t, w_t)) \right)\right] ,
    \end{equation}
    where the disturbance is observable before the action and $\gamma \in [0, 1)$ is the discount factor.
    Suppose that $\bar{R} \succ 0$, $\bar{Q} \succcurlyeq 0$, $(A, B)$ stabilizable and $(A, \bar{Q}^{1/2})$ detectable (the usual assumptions for the standard LQR to guarantee the existence of the stabilizing solution of the discounted Riccati equation).
    Then, the optimal policy is given by 
    \begin{equation}
        g(b \mid s, w) = \mathcal{N}(b \mid - K_s s - K_w w, (\beta/2) S^{-1}), \label{eq:lqr_opt_follower_mg}
    \end{equation}
    where $K_s := \gamma S^{-1} B^\top P A$ and $K_w := \gamma S^{-1} B^\top P$ are the control gains, $S := \bar{R} + \gamma B^\top P B$, and $P = \text{\textsc{Riccati}}(A, B, \bar{Q}, \bar{R}, \gamma)$. 
    Moreover, the optimal value function is 
    \begin{equation}
        V(s, w) = s^\top \bar{Q} s + (A s + w)^\top (\gamma P - \gamma^2 P B S^{-1} B^\top P) (A s + w) 
        + \frac{1}{1 - \gamma} \left( - \frac{\beta m}{2} \log(\pi \beta) +  \frac{\beta}{2} \log \det(S) + \gamma \Tr(U^\top P_{ww}U) \right).
    \end{equation}
\end{proposition}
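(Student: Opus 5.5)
Following the proof of Proposition~\ref{thm:opt_f_policy_unobservable_disturbance}, I will guess a quadratic value function, compute the soft Bellman backup in closed form, and match coefficients. The new feature is that the follower sees $w$ before acting. So the value function is indexed by the pair $(s,w)$, and the next-state expectation is taken over $w'$ only. The follower chooses $b$ with $s'=As+Bb+w$ fully known, and only the next disturbance $w'\sim\mathcal{N}(0,UU^\top)$ is random.

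\textbf{Step 1: Bellman equation and ansatz.} The Bellman optimality equation is
\begin{equation*}
V(s,w)=\min_g \E_{b}\left[\ell(s,b)+\beta\log g(b\mid s,w)+\gamma\E_{w'}[V(As+Bb+w,w')]\right].
\end{equation*}
I take the ansatz $V(s,w)=s^\top \bar{Q}s+(As+w)^\top M(As+w)+v$, with $M$ symmetric positive semidefinite. Equivalently, $V(s,w)=s^\top P_{ss}s+2s^\top P_{sw}w+w^\top P_{ww}w+v$ with $P_{ss}=\bar{Q}+A^\top MA$, $P_{sw}=A^\top M$ and $P_{ww}=M$. The target identification is $M=\gamma P-\gamma^2 PBS^{-1}B^\top P$. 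Writing $x:=As+Bb+w$, I then compute
\begin{equation*}
\E_{w'}[V(x,w')]=x^\top(\bar{Q}+A^\top MA)x+\Tr(MUU^\top)+v .
\end{equation*}
Here the cross term vanishes because $\E[w']=0$. Set $P:=\bar{Q}+A^\top MA$.

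\textbf{Step 2: soft-min over $b$.} With $y:=As+w$, the action value is
\begin{equation*}
Q(s,w,b)=s^\top\bar{Q}s+b^\top\bar{R}b+\gamma(y+Bb)^\top P(y+Bb)+\gamma\Tr(MUU^\top)+\gamma v .
\end{equation*}
This equals $b^\top Sb+2b^\top\phi+\xi$, where $S=\bar{R}+\gamma B^\top PB\succ 0$ and $\phi=\gamma B^\top P y$. Completing the square gives the Gaussian Boltzmann policy $\mathcal{N}(-S^{-1}\phi,(\beta/2)S^{-1})$. Its mean is $-\gamma S^{-1}B^\top P(As+w)=-K_s s-K_w w$, as claimed. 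The log-partition computation is the same as in Proposition~\ref{thm:opt_f_policy_unobservable_disturbance}:
\begin{equation*}
V(s,w)=\xi-\phi^\top S^{-1}\phi-\tfrac{\beta m}{2}\log(\pi\beta)+\tfrac{\beta}{2}\log\det S .
\end{equation*}
This yields $V=s^\top\bar{Q}s+y^\top(\gamma P-\gamma^2PBS^{-1}B^\top P)y+\text{const}$, which has exactly the ansatz form. Matching coefficients gives
\begin{equation*}
M=\gamma P-\gamma^2PBS^{-1}B^\top P,\qquad (1-\gamma)v=-\tfrac{\beta m}{2}\log(\pi\beta)+\tfrac{\beta}{2}\log\det S+\gamma\Tr(U^\top MU).
\end{equation*}
Here $P_{ww}=M$, which explains the trace term in the statement.

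\textbf{Step 3 (main obstacle): closing the fixed point.} Substituting $M$ into $P=\bar{Q}+A^\top MA$ gives $P=\bar{Q}+\gamma A^\top PA-\gamma^2A^\top PBS^{-1}B^\top PA$, which is exactly $\textsc{Riccati}(A,B,\bar{Q},\bar{R},\gamma)$. The main care point is checking that $M\succcurlyeq 0$, so that the ansatz is consistent and $S\succ0$. To see this, note $M=\gamma(P-\gamma PB S^{-1}B^\top P)$ and rewrite it as $\gamma\min_b[(y+Bb)^\top P(y+Bb)\gamma+b^\top\bar{R}b]/\gamma$ in quadratic-form terms. Then $y^\top My$ is a minimum of nonnegative quantities, hence nonnegative. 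Under the stabilizability and detectability assumptions, the stabilizing positive semidefinite solution $P$ exists and is unique, which fixes $M$, $v$, $K_s$ and $K_w$. Finally, uniqueness of the fixed point of the soft Bellman optimality operator, a $\gamma$-contraction on the relevant weighted function class as in the previous proposition, shows that this quadratic solution is the optimal value function. The Boltzmann policy derived above is therefore optimal.
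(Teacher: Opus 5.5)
Your proposal is correct and follows essentially the same route as the paper: a quadratic ansatz in $(s,w)$, completing the square to get the Gaussian Boltzmann policy, the log-partition formula for $V$, coefficient matching that yields the discounted Riccati equation for $P=P_{ss}$ together with $P_{ww}=\gamma P-\gamma^2PBS^{-1}B^\top P$, and an appeal to uniqueness of the Bellman fixed point. The differences are cosmetic: you build $P_{ss}=\bar{Q}+A^\top MA$, $P_{sw}=A^\top M$, $P_{ww}=M$ into the ansatz instead of deriving them from a general block-quadratic form, and you add an explicit check that $M\succcurlyeq 0$, which the paper leaves implicit.
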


In our situation, we replace $A$ and $w$ with $A + C K_\theta$ and $C (a - K_\theta s)$, respectively. 
Then, the mean of the best response action is 
\begin{subequations}
\begin{align}
    - K_s s - K_w w 
    &= - \gamma S^{-1} B^\top P (A + C K_\theta) s - \gamma S^{-1} B^\top P C (a - K_\theta s)\\
    &= - \gamma S^{-1} B^\top P A s - \gamma S^{-1} B^\top P C a\\
    &= - K_s s - K_a a .
\end{align}
\end{subequations}

\begin{proof}
    Let $V(s, w)$ denote the optimal value function.
    The Bellman optimality equation is 
    \begin{equation}
    V(s, w) = \min_{g} \E_{b}\left[ \ell(s, b) + \beta \log g(b \mid s, w) + \gamma \E_{w'}\left[ V(As+Bb+w, w')\right]\right],
    \end{equation}
    where $w' \sim \mathcal{N}(0, UU^\top)$ represents the next disturbance.
    We assume that the optimal value function is a quadratic function
    \begin{equation}
    V(s, w) = \begin{bmatrix} s\\ w \end{bmatrix}^\top \begin{bmatrix} P_{ss} & P_{sw}\\ P_{sw}^\top & P_{ww} \end{bmatrix} \begin{bmatrix} s\\ w \end{bmatrix} + v ,
    \end{equation}
    where $P_{ss} \succcurlyeq 0$ and $P_{ww} \succcurlyeq 0$ are symmetric. 
    
    Define the Q-function as
    \begin{equation}
    Q(s, w, b) := \ell(s, b) + \gamma \E_{w'}[V(As + Bb + w, w')].
    \end{equation}
    Under the quadratic ansatz and $\E[w'] = 0$ and $\Cov[w'] = UU^\top$, we obtain
    \begin{equation}
    \E_{w'}[V(As + Bb + w, w')] = (A s + Bb + w)^\top P_{ss} (A s + Bb + w) + \Tr(P_{ww} UU^\top) + v .
    \end{equation}
    Thus, by grouping quadratic and linear terms in $b$, we can write
    \begin{equation}
    Q(s, w, b) = b^\top S b + 2 b^\top \phi(s, w) + \xi(s, w),
    \end{equation}
    where
    \begin{align}
    S &:= \bar{R} + \gamma B^\top P_{ss} B,\\
    \phi(s, w) &:= \gamma B^\top P_{ss} (A s + w),\\
    \xi(s, w) &:= s^\top \bar{Q} s + \gamma ((A s + w)^\top P_{ss} (A s + w) + \Tr(P_{ww} UU^\top) + v).
    \end{align}
    Because $\bar{R} \succ 0$ and $P_{ss} \succcurlyeq 0$, $S \succ 0$ is guaranteed.
    
    The optimal policy is then derived. With the derived Q-function $Q(s, w, b)$, the objective is rewritten as
    \begin{equation}
    \E_{s_0,w_0}\left[  \E_{b_0} \left[ Q(s_0, w_0, b_0) - \beta H(g(\cdot \mid s_0, w_0)) \mid s_0, w_0\right] \right].
    \end{equation}
    Therefore, its optimal policy is known to be given by the Boltzmann distribution, $g(b \mid s, w) \propto \exp( - Q(s, w, b) / \beta)$. Because $S$ is positive definite, the Q-function $Q$ is convex quadratic in $b$. Therefore, by rearranging the terms of $Q$ as
    \begin{equation}
    Q(s, w, b) = (b + S^{-1} \phi(s, w))^\top S (b + S^{-1} \phi(s, w)) - \phi(s, w)^\top S^{-1} \phi(s, w) + \xi(s, w),
    \end{equation}
    we obtain the Gaussian distribution
    \begin{equation}
    g(b \mid s, w) = \mathcal{N}(b \mid - S^{-1} \phi(s, w), (\beta / 2) S^{-1}).
    \end{equation}
    
    We show that the quadratic ansatz satisfies the Bellman optimality equation for a specific $P_ss$, $P_{sw}$, $P_{ww}$, and $v$. 
    Because of the variational identity, for the optimal value function, we have
    \begin{equation}
    V(s, w) = - \beta \log \left( \int \exp \left( - \frac{1}{\beta} Q(s, w, b) \right) \rmd b\right) .
    \end{equation}
    Evaluating the right-hand side, we obtain 
    \begin{equation}
    V(s, w) = \xi(s, w) - \phi(s, w)^\top S^{-1} \phi(s, w) - \frac{\beta m}{2} \log(\pi \beta) + \frac{\beta}{2} \log \det S . 
    \end{equation}
    By comparing both sides of the above equality, we obtain
    \begin{align}
    P_{ss} &= \bar{Q} + \gamma A^\top P_{ss} A - \gamma^2 A^\top P_{ss} B S^{-1} B^\top P_{ss} A\\
    P_{sw} &= \gamma A^\top P_{ss} - \gamma^2 A^\top P_{ss} B S^{-1} B^\top P_{ss}\\
    P_{ww} &= \gamma P_{ss} - \gamma^2 P_{ss} B S^{-1} B^\top P_{ss}\\
    v &= \frac{1}{1 - \gamma} \left( - \frac{\beta m}{2} \log(\pi \beta) +  \frac{\beta}{2} \log \det(S) + \gamma \Tr(P_{ww} UU^\top) \right).
    \end{align}
    Note that the equation for $P$ is identical to the Riccati equation. Under the stated assumptions ($\bar{R} \succ 0$, $\bar{Q} \succcurlyeq 0$, $(A, B)$ stabilizable and $(A, \bar{Q}^{1/2})$ detectable), the Riccati equation has a unique symmetric positive semidefinite stabilizing solution $P$. Therefore, $P_{ss} = \text{\textsc{Riccati}}(A, B, \bar{Q}, \bar{R}, \gamma)$. All the other terms are then identified. The rest follows the uniqueness of the fixed point of the Bellman optimality equation. 
\end{proof}

\subsection{$n$-Zone Building Thermal Control Task
}\label{apdx:building_thermal_control}

\paragraph{Task Description}
Let $s_t\in\mathbb{R}^n$ denote the temperature deviations (in~$^\circ$C)
from nominal setpoints in the $n$ zones, and
$b_t\in\mathbb{R}^m$ the actuation levels of $m$ HVAC units.
The discrete-time thermal dynamics over a sampling period (e.g., 5--10~minutes)
are modeled as
\begin{equation}
    s_{t+1} = A_{\theta}s_t + Bb_t + w_t,
    \qquad w_t\sim\mathcal{N}(0,UU^\top),
\end{equation}
where $A_{\theta}$ depends on the leader's parameters
$\theta=(\alpha,\beta)$, with $\alpha\in[0,1]^n$ and $\beta\in[0,1]^n$. Here, $\alpha$, referred to as the ``insulation level'', represents the improvement in insulation for each zone, while
$\beta$, referred to as the ``airflow level'', represents the adjustment of airflow coupling between adjacent zones.
In our experiments, the numbers of zones and HVAC units are set to $n=4$ and $m=2$, respectively.
Assuming $b_t=0$ and $w_t=0$, the evolution of temperature deviations from $s_t=(s_t^1,s_t^2,s_t^3,s_t^4)$ to $s_{t+1}$ is defined as follows, using positive coefficients $k_i$ and $h_{ij}$ $(i,j\in\{1,2,3,4\})$:
\begin{align}
    &s_{t+1}^1 = s_t^1 + k_1(1-\alpha_1)s_t^1 + h_{12}\beta_1(s_t^2 - s_t^1) + h_{41}\beta_4(s_t^4 - s_t^1),\\
    &s_{t+1}^2 = s_t^2 + k_2(1-\alpha_2)s_t^2 + h_{23}\beta_2(s_t^3 - s_t^2) + h_{21}\beta_1(s_t^1 - s_t^2),\\
    &s_{t+1}^3 = s_t^3 + k_3(1-\alpha_3)s_t^3 + h_{34}\beta_3(s_t^4 - s_t^3) + h_{32}\beta_2(s_t^2 - s_t^3),\\
    &s_{t+1}^4 = s_t^4 + k_4(1-\alpha_4)s_t^4 + h_{41}\beta_4(s_t^1 - s_t^4) + h_{43}\beta_3(s_t^3 - s_t^4),
\end{align}
where $\alpha$ modulates the temperature change due to heat exchange with the external environment, and $\beta$ controls the heat exchange with adjacent zones.
Specifically, a higher insulation level $\alpha$ mitigates the amplification of temperature deviations caused by external temperatures, whereas a higher airflow level $\beta$ facilitates thermal exchange between neighboring zones.
Accordingly, $A_{\theta}$ is defined as:


\begin{small}
\begin{multline}
    A_{\theta}  := 
    \begin{bmatrix}
        1 + \Delta(\alpha_1) - h_{12}\beta_1 - h_{41}\beta_4 & h_{12}\beta_1 & 0 & h_{41}\beta_4\\
        h_{21}\beta_1 & 1 + \Delta(\alpha_2) - h_{23}\beta_2 - h_{21}\beta_1 & h_{23}\beta_2 & 0\\
        0     & h_{32}\beta_2 & 1 + \Delta(\alpha_3) - h_{34}\beta_3 - h_{32}\beta_2 & h_{34}\beta_3\\
        h_{41}\beta_4 & 0     & h_{43}\beta_3 & 1 + \Delta(\alpha_4) - h_{41}\beta_4 - h_{43}\beta_3 
    \end{bmatrix},
\end{multline}
\end{small}
where $\Delta(\alpha_i):=k_i(1-\alpha_i)$ for $i\in\{1,2,3,4\}$.

In our experiments, the coefficients are set as follows:
\begin{gather}
    k_1=0.04,\quad h_{12}=0.05,\quad h_{41}=0.05,\\
    k_2=0.03,\quad h_{23}=0.04,\quad h_{21}=0.03,\\
    k_3=0.06,\quad h_{34}=0.06,\quad h_{32}=0.04,\\
    k_4=0.05,\quad h_{41}=0.05,\quad h_{43}=0.03.
\end{gather}
The other parameters governing the transition and the follower's reward function are set as:
\begin{equation}
B =
\begin{bmatrix}
0.10 & 0\\
0.6  & 0\\
0    & 0.55\\
0    & 0.30
\end{bmatrix},
\quad U =
\begin{bmatrix}
0.02 & 0 & 0 & 0\\
0 & 0.02 & 0 & 0\\
0 & 0 & 0.02 & 0\\
0 & 0 & 0 & 0.02
\end{bmatrix},
\quad \bar{Q} =
\begin{bmatrix}
8 & 0 & 0 & 0\\
0 & 1 & 0 & 0\\
0 & 0 & 5 & 0\\
0 & 0 & 0 & 6
\end{bmatrix},
\quad \bar{R} =
\begin{bmatrix}
0.01 & 0\\
0  & 0.01
\end{bmatrix}.
\end{equation}

The leader's reward function is defined as
\begin{gather}
    r_L(s_t,b_t):= \mathrm{Stability}(s_t) -0.5\times\mathrm{HVACEnergyCost}(b_t) -0.1\times\mathrm{ConfigurationCost}(\theta),
\end{gather}
where
\begin{align}
    \mathrm{Stability}(s_t):=-\dfrac{1}{n}\sum_{i=1}^n (s_t^i - \dfrac{1}{n}\sum_{j=1}^n s_t^j)^2, \quad
    \mathrm{HVACEnergyCost}(b_t):=||b_t||_2^2, \quad
    \mathrm{ConfigurationCost}(\theta):=||\alpha||_2^2+||\beta||_2^2.
\end{align}
The leader's objective is to maximize the discounted cumulative reward $r_L$, which corresponds to minimizing the variance of the temperature deviations across zones and the cost of the follower's actions, while also managing the configuration cost.
In our experiments, with an episode length of 100 steps, the leader's cumulative reward typically ranges from approximately -600 to -150 (see Figure~\ref{fig:BTC_UL_return_histogram}).

\begin{figure}[t]
    \centering
    \includegraphics[width=0.45\linewidth]{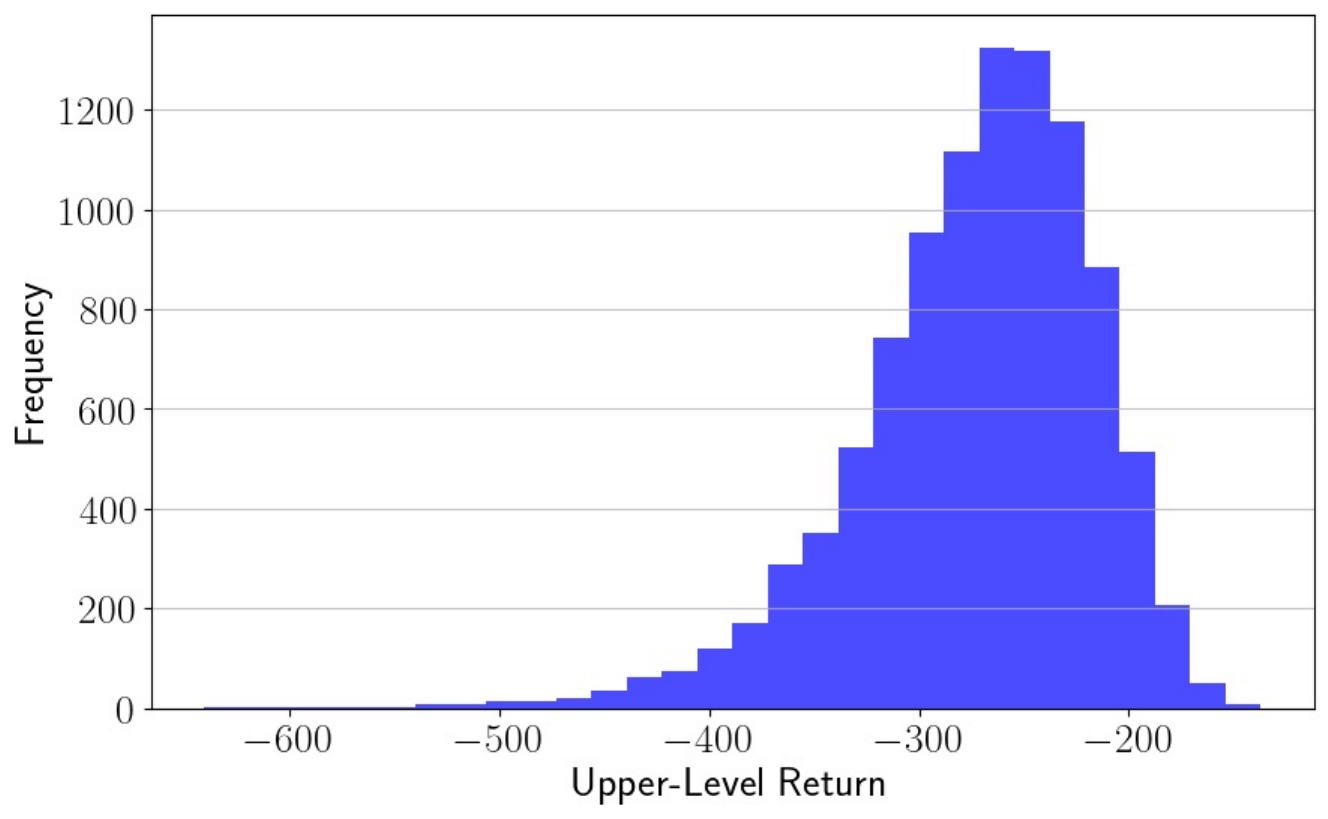}%
    \caption{Histogram of the Leader's Objective Value in the Building Thermal Control Task. This plots the $10^4$ uniformly random leader parameters according to each leader's discounted cumulative reward under the best response.}%
    \label{fig:BTC_UL_return_histogram}%
\end{figure}

\paragraph{Experimental Settings}
The leader’s parameters are initialized by applying the sigmoid function to standard normal noise: $\theta^{0}\gets\sigma(\phi_L)$, where $\phi_L\sim \mathcal{N}(\bm{0}, I)$.
The follower’s best response is obtained via \eqref{eq:lqr_opt_follower_cmdp}, using the solution to the Riccati equation $P=\text{\textsc{Riccati}}(A_{\theta}, B, \bar{Q}, \bar{R}, \gamma_F)$, which is computed using the following Riccati iteration:
\begin{equation}
    P_{k+1} = \bar{Q} + \gamma_F A_{\theta}^\top P_k A_{\theta} - \gamma_F^2 A_{\theta}^\top P_k B (\bar{R} + \gamma_F B^\top P_k B)^{-1} B^\top P_k A_{\theta}.
\end{equation}

Following the computation of the best response, 100 trajectories are sampled, each consisting of 100 steps.
The initial state is sampled randomly as $s_0\sim \mathcal{N}(\bm{0}, I\times 5^2)$.  
All function approximators, including the leader’s critic and the follower's value gradient estimator, are implemented as neural networks with two hidden layers, each containing 64 units.
These networks are initialized at each outer iteration and updated $5\times 10^3$ times by minimizing TD errors using the sampled trajectories, following the implementation of \textsc{HPGD} by \citet{Thoma2024-em}.
For computing the Benefit in \textsc{BC-HG}, the leader's value function is estimated by averaging the leader's Q-values over 256 sampled follower actions.  
The estimated gradients of the leader's objective are clipped to a maximum norm of 1.0.

For each approach, the learning rate of the leader's parameter is selected via grid search from $\{10^{-1},10^{-2},10^{-3}\}$.
For each hyperparameter configuration, the leader is trained using 10 random seeds.  
Each learned policy is evaluated by computing the leader’s return averaged over 50 rollouts. 
Table~\ref{tab:params_lqr_cmdp} summarizes the hyperparameters that are shared across all methods.

\begin{table}[t]
    \centering
    \begin{tabular}{rll}\toprule
        follower's entropy regularization $\beta$ & $1\times10^{-1}$\\
        batch size per outer iteration & $10^4$\\
        learning rate for function approximator & $1\times10^{-4}$\\
        leader’s discount rate $\gamma_L$ & 0.9\\
        follower’s discount rate $\gamma_F$ & 0.9\\
        maximum Riccati iterations & $5\times 10^4$\\ \bottomrule
    \end{tabular}
    \caption{Hyperparameters in Building Thermal Control}
    \label{tab:params_lqr_cmdp}
\end{table}

\paragraph{Results}
Figure~\ref{fig:results_all_params_BTC} presents the results of the $n$-zone Building Thermal Control task across different actor learning rates $\{10^{-1}, 10^{-2}, 10^{-3}\}$. The figure displays the leader's objective (discounted cumulative leader reward) along with its four constituent components: Stability, HVAC energy cost, insulation level cost, and airflow level cost. Each component is reported as a discounted cumulative value. The plots illustrate the mean and standard deviation over 8 training runs, where the maximum and minimum values from 10 independent runs were excluded at each outer iteration to mitigate the impact of outliers.

The proposed method consistently outperformed all baselines at learning rates of $10^{-1}$ and $10^{-2}$, converging to a leader's cumulative reward of approximately -200. A learning rate of $10^{-3}$ appeared insufficient, resulting in negligible improvements across all methods. Notably, with a learning rate of $10^{-1}$, \textsc{HPGD (MC/TD)} primarily reduced the configuration cost but failed to achieve high stability or low energy costs. A similar trend was observed for \textsc{HPGD (MC)} across other learning rates.

\begin{figure}[H]
    \centering

    \begin{subfigure}[t]{\textwidth}%
        \centering
        \includegraphics[width=\textwidth,trim=0 0 0 40,clip]{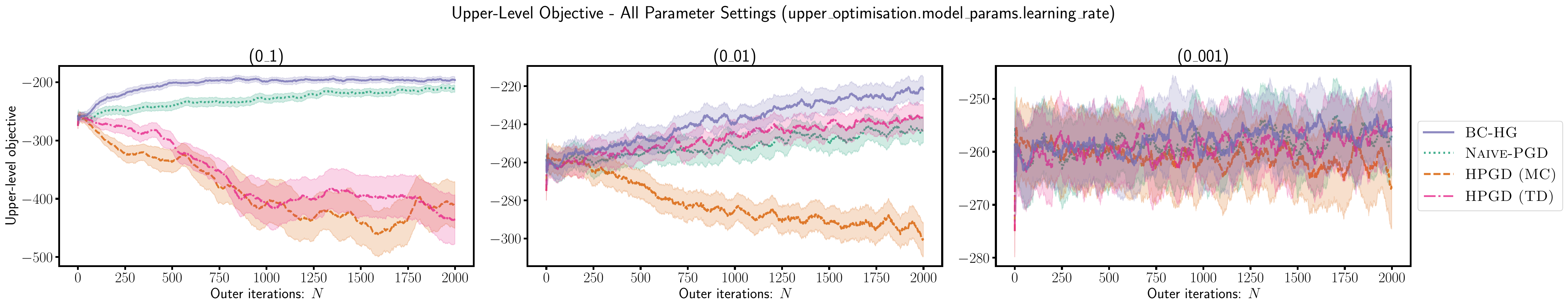}%
        \caption{Leader's Objective (Discounted Cumulative Leader Reward)}%
        \label{fig:results_all_params_BTC_return_UL}%
    \end{subfigure}%
    \\
    \begin{subfigure}[t]{\textwidth}%
        \centering
        \includegraphics[width=\textwidth,trim=0 0 0 40,clip]{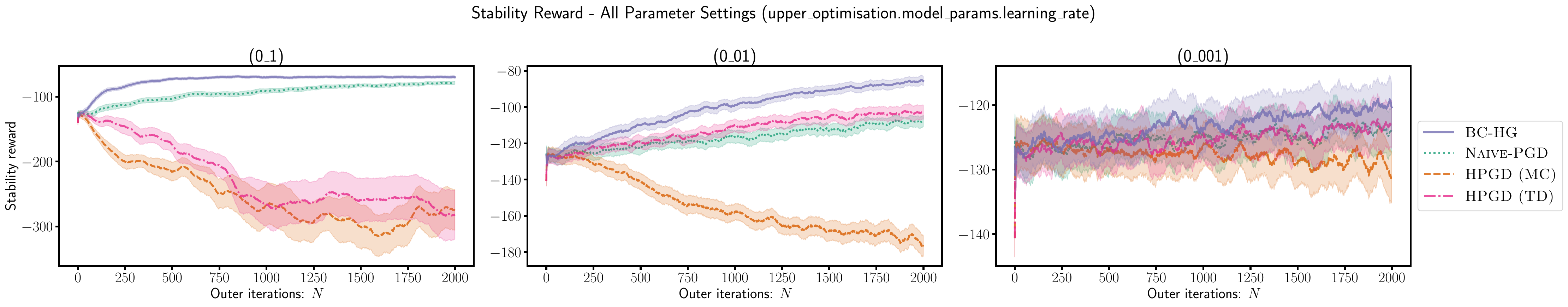}%
        \caption{Stability}%
        \label{fig:results_all_params_BTC_stability}%
    \end{subfigure}%
    \\
    \begin{subfigure}[t]{\textwidth}%
        \centering
        \includegraphics[width=\textwidth,trim=0 0 0 40,clip]{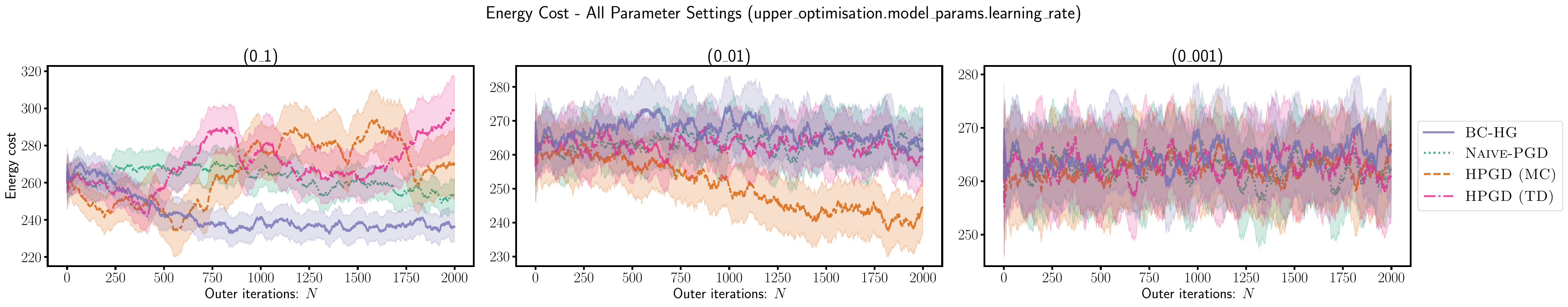}%
        \caption{HVAC Energy Cost}%
        \label{fig:results_all_params_BTC_energy_cost}%
    \end{subfigure}%
    \\
    \begin{subfigure}[t]{\textwidth}%
        \centering
        \includegraphics[width=\textwidth,trim=0 0 0 40,clip]{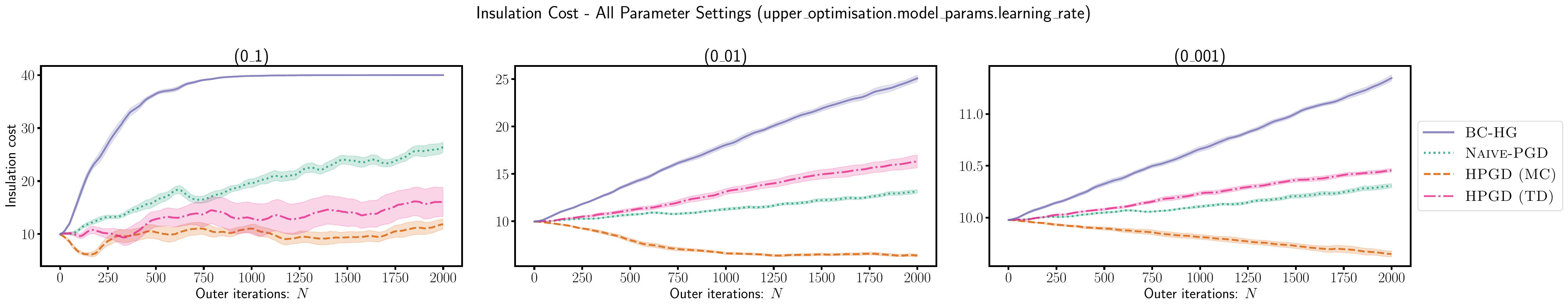}%
        \caption{Insulation Level $\alpha$ Cost}%
        \label{fig:results_all_params_BTC_IL_cost}%
    \end{subfigure}%
    \\
    \begin{subfigure}[t]{\textwidth}%
        \centering
        \includegraphics[width=\textwidth,trim=0 0 0 40,clip]{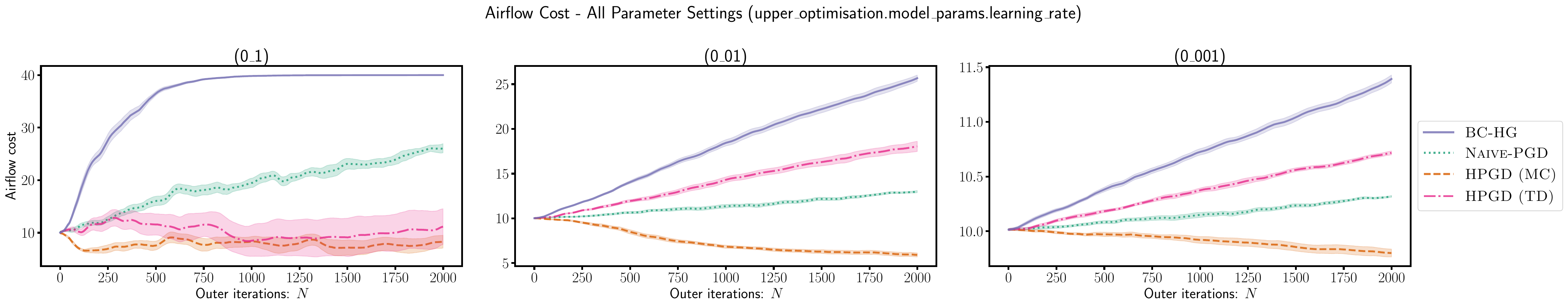}%
        \caption{Airflow Level $\beta$ Cost}%
        \label{fig:results_all_params_BTC_AA_cost}%
    \end{subfigure}%

    \caption{Results of $n$-Zone Building Thermal Control Task on Each Hyperparameter (Learning Rate in $\{10^{-1}, 10^{-2}, 10^{-3}\}$). The top figures show the leader's discounted cumulative reward, and the others show the discounted cumulative values of the four components of the leader's reward. The return and stability are aimed to be maximized; the costs are aimed to be minimized.}%
    \label{fig:results_all_params_BTC}
\end{figure}

\subsection{Bi-Level LQR Task in 2-Player Markov Games}\label{apdx:bilevel_lqr_task_mg}

\begin{figure}[t]
    \centering
    \includegraphics[width=0.45\hsize]{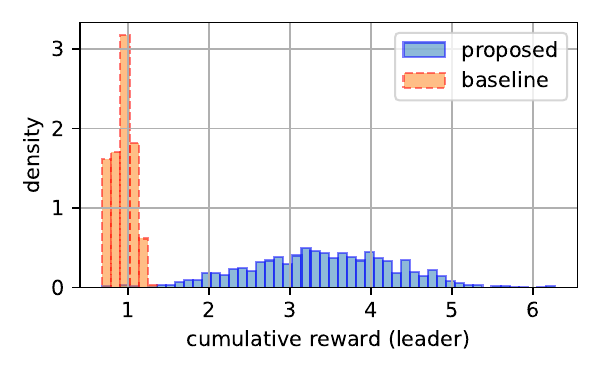}%
    \caption{Histogram of the Cumulative Rewards of the Leader's Policies with a Fixed Initial State on the Bi-Level LQR Task (\textsf{proposed}: \textsc{BC-HG}, \textsf{baseline}: \textsc{Naive-PGD (on-policy)}). The best policies in terms of the average performance for the proposed and baseline approaches are taken.}
    \label{fig:lqr_hist}
\end{figure}

The parameters for the state transition and the follower’s reward function are set as follows:
\begin{equation}
    A=\begin{bmatrix}
        0 & 1\\
        0 & 0
    \end{bmatrix},
    \enspace
    B=\begin{bmatrix}
        0 \\
        0.1 
    \end{bmatrix},
    \enspace
    C=\begin{bmatrix}
        0.1 \\
        0 
    \end{bmatrix},
    \enspace
    \bar{Q}=\begin{bmatrix}
        1 & 0\\
        0 & 1
    \end{bmatrix},
    \enspace
    \bar{R}=\begin{bmatrix}
        1
    \end{bmatrix}.
\end{equation}
The initial and target states are set to $s_0 = (1, 0)$ and $s^\star = (-0.8, 0)$, respectively.  
The leader's objective is to induce oscillatory behavior in the follower, alternating between the initial state $(1, 0)$ and the target state $(-0.8, 0)$, before the follower eventually stabilizes at the origin.  
The weighting matrix for the leader’s reward distribution is set to $\Sigma = \left( \begin{bmatrix}
    5 & 0\\
    0 & 100
\end{bmatrix} \right)^2$.

The leader’s policy is parameterized as a Gaussian distribution $f_{\theta}(a \mid s) = \mathcal{N}(K_{\theta} s, UU^\top)$,  
where matrix $U$ is fixed at $[10^{-3}]$ throughout the experiments. 
The mean is determined by $K_{\theta} = (\theta^1, \theta^2)$, with parameters initialized randomly from a standard normal distribution, $(\theta^1, \theta^2) \sim \mathcal{N}(\bm{0}, I)$.
The follower’s best response is derived using \eqref{eq:lqr_opt_follower_mg}, based on the solution $P=\text{\textsc{Riccati}}(A+C K_{\theta}, B, \bar{Q}, \bar{R}, \gamma_F)$ of the Riccati equation. This solution is computed via the following Riccati iteration:
\begin{equation}
    P_{k+1} = \bar{Q} + \gamma_F A'^\top P_k A' - \gamma_F^2 A'^\top P_k B (\bar{R} + \gamma_F B^\top P_k B)^{-1} B^\top P_k A',
\end{equation}
where $A':=A+C K_{\theta}$.

Following the computation of the best response, four trajectories are sampled and stored in a trajectory buffer.  
Each episode has a duration of 100 steps. 
The buffer size is set to 400 steps for the on-policy methods (\textsc{BC-HG} and \textsc{Naive-PGD (on-policy)}) and $10^6$ steps for the off-policy method (\textsc{Naive-PGD (off-policy)}).
For the initial $4 \times 10^3$ steps, the leader executes actions selected uniformly at random to facilitate exploration.

For each approach, the number of actor updates per outer iteration was selected via grid search from the set $\{1, 2, 5, 10\}$, while the number of critic updates was fixed at 5.
The estimated gradients were clipped to a maximum norm of 1.0.  
The leader’s critic was implemented using a neural network with two hidden layers, each consisting of 64 units.

For each hyperparameter configuration, the leader was trained using 20 random seeds.  
The performance of each learned policy was evaluated by averaging the leader’s return over 10 rollouts.  
Figure~\ref{fig:lqr} presents the mean and standard deviation of the leader’s return across these 20 training runs.

Table~\ref{tab:params_lqr_mg} summarizes the hyperparameters common to all methods.

\begin{table}[t]
    \centering
    \begin{tabular}{rll}\toprule
        follower's entropy regularization $\beta$ & $1\times10^{-1}$\\
        mini-batch size & 64\\
        actor/critic learning rate & $1\times10^{-1}$\\
        leader’s discount rate $\gamma_L$ & 0.99\\
        follower’s discount rate $\gamma_F$ & 0.95\\
        maximum Riccati iterations & $10^3$\\
        target network smoothing factor & $1\times10^{-2}$\\ \bottomrule
    \end{tabular}
    \caption{Hyperparameters in Bi-Level LQR in Markov Games}
    \label{tab:params_lqr_mg}
\end{table}

\section{Hypergradient for Markov Games with Deterministic Leader Policies}

In this section, as a supplementary result not included in the accepted ICAPS 2026 paper~\citep{kudo2026-icaps}, we present the hypergradient for 2-player MGs with deterministic leader policies.

In high-dimensional continuous control tasks in single-agent RL, deterministic policies are often employed for efficient optimization.
The following theorem provides the hypergradient for the 2-player MG setting when the leader policy is restricted to deterministic policies.

\begin{theorem}[Hypergradient for MGs with deterministic leader policies]
\label{thm:hg-mg-deterministic}
Consider the 2-player MG setting, and suppose that the leader policy is deterministic, i.e., $f_{\theta}:\mathcal{S}\to\mathcal{A}$.
Then, the hypergradient of $J_L$ with respect to $\theta$ is given by
\begin{equation}
    \nabla_{\theta}J_L({\theta}, g^{\theta\dagger})
    =\dfrac{1}{1-\gamma_L}\E\bigg[\left.\nabla_a Q_L^{{\theta}\dagger}(s,a,b)\right|_{a=f_{\theta}(s)}\nabla_{\theta}f_{\theta}(s)
    +\dfrac{1}{\beta} B_L^{{\theta}\dagger}(s,f_{\theta}(s),b) \nabla_{\theta}Q_F^{\theta\dagger}(s,f_{\theta}(s),b)\bigg]\label{eq:hg_MG_det}
\end{equation}
where the expectation $\E$ is taken over
$(s,b)\sim g^{{\theta}\dagger}(b\mid s,f_{\theta}(s))
d_{\gamma_L}^{\theta\dagger}(s)$, and
\begin{equation}
    \nabla_{\theta}Q_F^{\theta\dagger}(s,f_{\theta}(s),b) \label{eq:nablaQF_MG_det}
    =\E_{\tau}\Bigg[\sum_{t=0}^\infty \gamma_F^t
    \left.\nabla_a Q_F^{\theta\dag}(s_t,a,b_t)\right|_{a=f_{\theta}(s_t)}\nabla_{\theta}f_{\theta}(s_t)\Bigg|\begin{array}{l}s_0=s\\b_0=b\end{array}\Bigg].
\end{equation}
\end{theorem}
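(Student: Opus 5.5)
Our plan mirrors the proof of Theorem~\ref{thm:hg-mg-stochastic}, replacing the likelihood-ratio treatment of the leader's action with a pathwise (deterministic policy gradient) treatment. We write the leader value at state $s$ as
\[
V_L^{\theta\dagger}(s)=\int_{\mathcal{B}} g^{\theta\dagger}(b\mid s,f_\theta(s))\,Q_L^{\theta\dagger}(s,f_\theta(s),b)\,\rmd b,
\]
with $Q_L^{\theta\dagger}(s,a,b)=r_L(s,a,b)+\gamma_L\E_{s'\sim p(\cdot\mid s,a,b)}[V_L^{\theta\dagger}(s')]$. We then differentiate with respect to $\theta$. The parameter enters in three places: through the action argument $a=f_\theta(s)$ of $Q_L$ (with $g$ held fixed), through the follower distribution $g^{\theta\dagger}(\cdot\mid s,a)$ (both via $a=f_\theta(s)$ and via its explicit dependence on $\theta$ through $Q_F^{\theta\dagger}$), and through $V_L^{\theta\dagger}(s')$ at the next state.

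\textbf{Step 1 (leader recursion).} The cleanest bookkeeping treats $a$ as an external argument. For a fixed action value $a$, let $g^{\theta\dagger}(b\mid s,a)=\exp(\beta^{-1}(Q_F^{\theta\dagger}(s,a,b)-V_F^{\theta\dagger}(s,a)))$. Then the total derivative of $g^{\theta\dagger}(b\mid s,f_\theta(s))$ with respect to $\theta$ equals $g\,\beta^{-1}\bigl(\nabla_\theta Q_F^{\theta\dagger}(s,f_\theta(s),b)-\nabla_\theta V_F^{\theta\dagger}(s,f_\theta(s))\bigr)$. Here $\nabla_\theta$ is the total derivative of the composite map $\theta\mapsto Q_F^{\theta\dagger}(s,f_\theta(s),b)$, which includes the chain-rule term through $a$.

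The part of $\nabla_\theta V_L$ coming from differentiating $Q_L$ in its action slot is $\E_b[\nabla_a Q_L^{\theta\dagger}(s,a,b)|_{a=f_\theta(s)}]\nabla_\theta f_\theta(s)$. Collecting terms gives a recursion of the form
\[
\nabla_\theta V_L(s)=\E_b\bigl[\nabla_a Q_L\,\nabla_\theta f_\theta+\beta^{-1}Q_L(\nabla_\theta Q_F-\nabla_\theta V_F)\bigr]+\gamma_L\E_{b,s'}[\nabla_\theta V_L(s')].
\]
A subtlety must be handled here: $\nabla_a Q_L$ should be the partial derivative holding the follower's future policy fixed, with the future $\theta$-dependence carried by the $\nabla_\theta V_L(s')$ term. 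Unrolling this recursion as in the deterministic policy gradient theorem, and using the relation between $d_{\gamma_L}^{\theta\dagger}$ and discounted trajectory sums stated in the preliminaries, gives the prefactor $1/(1-\gamma_L)$ and the expectation over $d_{\gamma_L}^{\theta\dagger}$.

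\textbf{Step 2 (covariance trick).} Differentiating the soft Bellman identity $V_F=\beta\log\int\exp(Q_F/\beta)$ shows $\nabla_\theta V_F^{\theta\dagger}(s,f_\theta(s))=\E_b[\nabla_\theta Q_F^{\theta\dagger}(s,f_\theta(s),b)]$, exactly as in the lemmas. The Boltzmann covariance trick then turns $\E_b[Q_L(\nabla Q_F-\E_b\nabla Q_F)]$ into $\E_b[B_L^{\theta\dagger}\nabla_\theta Q_F^{\theta\dagger}]$, which yields the second term of \eqref{eq:hg_MG_det}.

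\textbf{Step 3 (follower gradient).} We differentiate
\[
Q_F^{\theta\dagger}(s,a,b)=r_F(s,a,b)+\gamma_F\E_{s'}\bigl[V_F^{\theta\dagger}(s',f_\theta(s'))\bigr]
\]
at fixed $(s,a,b)$. Since $r_F$ and $p$ do not depend on $\theta$, only the next-state term contributes. Its derivative is $\nabla_\theta[V_F(s',f_\theta(s'))]$, which splits as the $\theta$-derivative at fixed action plus $\nabla_a V_F(s',a)\nabla_\theta f_\theta(s')$. The fixed-action part equals $\E_{b'}[\nabla_\theta Q_F(s',a',b')|_{\text{fixed }a'}]$ by Step 2's identity. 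Similarly, $\nabla_a V_F=\E_{b'}[\nabla_a Q_F]$, because the entropy-regularized log-sum-exp has the Boltzmann policy as its gradient.

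This gives a Bellman expectation equation for the fixed-action gradient $\nabla_\theta Q_F$ with "reward" $\gamma_F\E_{b'}[\nabla_a Q_F(s',a',b')]\nabla_\theta f_\theta(s')$. By uniqueness of its solution we unroll it. Adding the chain-rule term at time $0$ (since the $Q_F$ appearing in the theorem is evaluated at $a=f_\theta(s)$ and differentiated totally) produces the $t=0$ summand. The result is \eqref{eq:nablaQF_MG_det}, in which the conditional expectation over $b_t$ absorbs the $\E_{b'}$.

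\textbf{Main obstacle.} The hardest step will be the bookkeeping of total versus partial derivatives in the action argument. We must make sure that $\nabla_\theta Q_F$ in the Benefit term is the total derivative including the $t=0$ chain-rule piece, and that the $\nabla_a Q_L$ term neither double counts nor omits the dependence of $g^{\theta\dagger}(\cdot\mid s,a)$ on $a$. Our first approach is to define all quantities as functions of an explicit free $a$, prove the recursions there, and substitute $a=f_\theta(s)$ only at the end. We would then check the $t=0$ term separately, verifying that the term with $\nabla_a g$ is exactly what combines with $\E_b[B_L\cdot(\text{chain term})]$.
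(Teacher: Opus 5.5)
Your proposal is correct and follows essentially the same route as the paper. Both differentiate the leader's Bellman expectation equation to get a recursion with a $\nabla_a Q_L^{\theta\dagger}\nabla_\theta f_\theta$ term and a total-derivative $\nabla_\theta \log g^{\theta\dagger}(b\mid s,f_\theta(s))$ term, unroll it, rewrite the score via the soft Bellman identity, apply the Boltzmann covariance trick, and then differentiate and unroll the follower's soft Bellman equation. The only difference is cosmetic and in Step 3: the paper differentiates the composite $Q_F^{\theta\dagger}(s,f_\theta(s),b)$ directly, so the reward $\nabla_a Q_F^{\theta\dagger}\nabla_\theta f_\theta$ sits at the current step, whereas you solve for the fixed-action gradient and add the $t=0$ chain-rule term afterwards; both yield \eqref{eq:nablaQF_MG_det}.
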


Similarly to Theorem~\ref{thm:hg-mg-stochastic}, the first term can be interpreted as the direct improvement induced by changes in the leader policy, whereas the second term represents the indirect improvement induced by changes in the follower's optimal response.

\begin{proof}

In the following, we denote
$\E_{(s,b) \sim g^{{\theta}\dagger}(b\mid s,f_{\theta}(s)) d_{\gamma_\ast}^{\theta\dagger}(s)}$
by $\E_d^{\theta\gamma_\ast}$.

First, we derive the following expression for
$\nabla_{\theta} J_L(\theta,g^{\theta\dagger})$:
\begin{align}
    \nabla_{\theta} J_L(\theta,g^{\theta\dagger})
    =\dfrac{1}{1-\gamma_L}\E_d^{\theta\gamma_L}\Bigg[\nabla_{\theta}f_{\theta}(s)^{\T}\nabla_{a}Q_L^{\theta\dagger}(s,a,b)\big|_{a=f_{\theta}(s)}
    +
    \nabla_{\theta}\log g^{\theta\dagger}(b|s,f_{\theta}(s))Q_L^{\theta\dagger}(s,f_{\theta}(s),b)\Bigg].
    \label{eq:SPG_qr_bbf_det_first}
\end{align}
From the Bellman expectation equation for the leader value function,
\begin{equation}
    V_L^{\theta\dagger}(s)
    =\int_{\mathcal{B}}g^{\theta\dagger}(b|s,f_{\theta}(s))\left(r_L(s,f_{\theta}(s),b)+\gamma_L\int_{\mathcal{S}}p(s'|s,f_{\theta}(s),b)V_L^{\theta\dagger}(s') \rmd s' \right) \rmd b,
\end{equation}
taking the derivative of both sides with respect to $\theta$ yields
\begin{subequations}
\begin{align}
    &\nabla_{\theta} V_L^{\theta\dagger}(s)\nonumber\\
    &\begin{multlined}
        =\int_{\mathcal{B}} g^{\theta\dagger}(b\mid s,f_{\theta}(s))\bigg(\nabla_{\theta}\log g^{\theta\dagger}(b|s,f_{\theta}(s))\left(r_L(s,f_{\theta}(s),b)+\gamma_L\int_{\mathcal{S}} p(s'\mid s,f_{\theta}(s),b)V_L^{\theta\dagger}(s')\rmd s' \right)
        \\+\nabla_{\theta}f_{\theta}(s)^{\T}\left.\nabla_{a}r_L(s,a,b)\right|_{a=f_{\theta}(s)}
        +\gamma_L\int_{\mathcal{S}}\left(\nabla_{\theta}f_{\theta}(s)^{\T}\left.\nabla_{a}p(s'|s,a,b)\right|_{a=f_{\theta}(s)}V_L^{\theta\dagger}(s')
        +p(s'|s,a,b)\nabla_{\theta}V_L^{\theta\dagger}(s')\right)\rmd s' \bigg) \rmd b
    \end{multlined}\\
    &\begin{multlined}
        =\int_{\mathcal{B}} g^{\theta\dagger}(b\mid s,f_{\theta}(s))\bigg(\nabla_{\theta}\log g^{\theta\dagger}(b|s,f_{\theta}(s))\left(r_L(s,f_{\theta}(s),b)+\gamma_L\int_{\mathcal{S}} p(s'\mid s,f_{\theta}(s),b)V_L^{\theta\dagger}(s')\rmd s' \right)
        \\+\nabla_{\theta}f_{\theta}(s)^{\T}\nabla_{a}\left.\left(r_L(s,a,b)+\gamma_L\int_{\mathcal{S}}p(s'|s,a,b)V_L^{\theta\dagger}(s')\rmd s'\right)\right|_{a=f_{\theta}(s)}
        +\gamma_L\int_{\mathcal{S}}p(s'|s,f_{\theta}(s),b)\nabla_{\theta}V_L^{\theta\dagger}(s')\rmd s'\bigg) \rmd b
    \end{multlined}\\
    &\begin{multlined}
        =\int_{\mathcal{B}} g^{\theta\dagger}(b\mid s,f_{\theta}(s))\bigg(\nabla_{\theta}\log g^{\theta\dagger}(b|s,f_{\theta}(s))Q_L^{\theta\dagger}(s,f_{\theta}(s),b)
        \\+\nabla_{\theta}f_{\theta}(s)^{\T}\nabla_{a}\left.Q_L^{\theta\dagger}(s,a,b)\right|_{a=f_{\theta}(s)}
        +\gamma_L\int_{\mathcal{S}}p(s'|s,f_{\theta}(s),b)\nabla_{\theta}V_L^{\theta\dagger}(s')\rmd s'\bigg) \rmd b.
    \end{multlined}
\end{align}
\end{subequations}
This is a Bellman expectation equation with respect to $\nabla_{\theta} V_L^{\theta\dagger}$.
Therefore, by the uniqueness of the solution to the Bellman expectation equation, we obtain
\begin{equation}
    \nabla_{\theta} V_L^{\theta\dagger}(s)
    =\E_{\tau}\Bigg[\sum_{t=0}^{\infty}\gamma_L^t\Big(\nabla_{\theta}\log g^{\theta\dagger}(b_t|s_t,f_{\theta}(s_t))Q_L^{\theta\dagger}(s_t,f_{\theta}(s_t),b_t)
    +\nabla_{\theta}f_{\theta}(s_t)^{\T}\nabla_{a}Q_L^{\theta\dagger}(s_t,a,b_t)\big|_{a=f_{\theta}(s_t)}\Big)\Bigg|s_0=s\Bigg].
\end{equation}
By the definition of $J_L(\theta,g^{\theta\dagger})$, we have
$\nabla_{\theta}J_L(\theta,g^{\theta\dagger})=\E_{s\sim \rho_0}\left[\nabla_{\theta}V_L^{\theta\dagger}(s)\right]$.
Hence,
\begin{subequations}
\begin{align}
    \nabla_{\theta} J_L(\theta,g^{\theta\dagger})
    &=\E_{\tau}\Bigg[\sum_{t=0}^{\infty}\gamma_L^t\Big(\nabla_{\theta}\log g^{\theta\dagger}(b_t|s_t,f_{\theta}(s_t))Q_L^{\theta\dagger}(s_t,f_{\theta}(s_t),b_t)
    +\nabla_{\theta}f_{\theta}(s_t)^{\T}\nabla_{a}Q_L^{\theta\dagger}(s_t,a,b_t)\big|_{a=f_{\theta}(s_t)}\Big)\Bigg]\\
    &=\dfrac{1}{1-\gamma_L}\E_d^{\theta\gamma_L}\Bigg[\nabla_{\theta}f_{\theta}(s)^{\T}\nabla_{a}Q_L^{\theta\dagger}(s,a,b)\big|_{a=f_{\theta}(s)}
    +\nabla_{\theta}\log g^{\theta\dagger}(b|s,f_{\theta}(s))Q_L^{\theta\dagger}(s,f_{\theta}(s),b)\Bigg].
    \label{eq:SPG_qr_bbf_det_first}
\end{align}
\end{subequations}

Next, from
$g^{\theta\dagger}(b|s,a)=\exp\left(\dfrac{1}{\beta}\left(Q_F^{\theta\dagger}(s,a,b)-V_F^{\theta\dagger}(s,a)\right)\right)$
and the soft Bellman optimality equation for the follower value function,
\begin{equation}
    V_F^{\theta\dagger}(s,a)=\beta\log\int_{\mathcal{B}}\exp\left(\dfrac{1}{\beta}\left(Q_F^{\theta\dagger}(s,a,b)\right)\right)\rmd b,
\end{equation}
we obtain
\begin{equation}
    \nabla_{\theta}\log g^{\theta\dagger}(b|s,f_{\theta}(s))
    =\dfrac{1}{\beta}\left(\nabla_{\theta}Q_F^{\theta\dagger}(s,f_{\theta}(s),b) -\E_{b\sim g^{\theta\dagger}(\cdot|s,f_{\theta}(s))}\left[\nabla_{\theta}Q_F^{\theta\dagger}(s,f_{\theta}(s),b)\right]\right)\label{eq:nabla_1_log_g_qr_bbf_with_nabla_q}
\end{equation}
Substituting Eq.~\eqref{eq:nabla_1_log_g_qr_bbf_with_nabla_q}
into Eq.~\eqref{eq:SPG_qr_bbf_det_first} and applying the Boltzmann covariance trick yield
\begin{subequations}
\begin{align}
    \nabla_{\theta}J_L(\theta,g^{\theta\dagger})
    &\begin{multlined}[t]
        =\dfrac{1}{1-\gamma_L}\E_d^{\theta\gamma_L}\bigg[\nabla_{\theta}f_{\theta}(s)^{\T}\nabla_{a}Q_L^{\theta\dagger}(s,a,b)\big|_{a=f_{\theta}(s)}
        \\+\dfrac{1}{\beta}\bigg(\nabla_{\theta}Q_F^{\theta\dagger}(s,f_{\theta}(s),b) 
        -\E_{b\sim g^{\theta\dagger}(\cdot|s,f_{\theta}(s))}\left[\nabla_{\theta}Q_F^{\theta\dagger}(s,f_{\theta}(s),b)\right]\bigg)Q_L^{\theta\dagger}(s,f_{\theta}(s),b)\bigg]\\
    \end{multlined}\\
    &\begin{multlined}
        =\dfrac{1}{1-\gamma_L}\E_d^{\theta\gamma_L}\bigg[\nabla_{\theta}f_{\theta}(s)^{\T}\nabla_{a}Q_L^{\theta\dagger}(s,a,b)\big|_{a=f_{\theta}(s)}
        \\+\dfrac{1}{\beta}\E_{b\sim g^{\theta\dagger}(\cdot|s,f_{\theta}(s))}\bigg[\bigg(\nabla_{\theta}Q_F^{\theta\dagger}(s,f_{\theta}(s),b) 
        -\E_{b\sim g^{\theta\dagger}(\cdot|s,f_{\theta}(s))}\left[\nabla_{\theta}Q_F^{\theta\dagger}(s,f_{\theta}(s),b)\right]\bigg)Q_L^{\theta\dagger}(s,f_{\theta}(s),b)\bigg]\bigg]
    \end{multlined}\\
    &\begin{multlined}
        =\dfrac{1}{1-\gamma_L}\E_d^{\theta\gamma_L}\bigg[\nabla_{\theta}f_{\theta}(s)^{\T}\nabla_{a}Q_L^{\theta\dagger}(s,a,b)\big|_{a=f_{\theta}(s)}
        \\+\dfrac{1}{\beta}\E_{b\sim g^{\theta\dagger}(\cdot|s,f_{\theta}(s))}\bigg[\nabla_{\theta}Q_F^{\theta\dagger}(s,f_{\theta}(s),b) 
        \bigg(Q_L^{\theta\dagger}(s,f_{\theta}(s),b) 
        - \E_{b\sim g^{\theta\dagger}(\cdot|s,f_{\theta}(s))}\left[Q_L^{\theta\dagger}(s,f_{\theta}(s),b)\right]\bigg)\bigg]\bigg]
    \end{multlined}\\
    &\begin{multlined}
        =\dfrac{1}{1-\gamma_L}\E_d^{\theta\gamma_L}\bigg[\nabla_{\theta}f_{\theta}(s)^{\T}\nabla_{a}Q_L^{\theta\dagger}(s,a,b)\big|_{a=f_{\theta}(s)}
        \\+\dfrac{1}{\beta}\nabla_{\theta}Q_F^{\theta\dagger}(s,f_{\theta}(s),b) 
        \bigg(Q_L^{\theta\dagger}(s,f_{\theta}(s),b) 
        - \E_{b\sim g^{\theta\dagger}(\cdot|s,f_{\theta}(s))}\left[Q_L^{\theta\dagger}(s,f_{\theta}(s),b)\right]\bigg)\bigg].
    \end{multlined}\label{eq:nabla1_J1_nabla1_Q2_det}
\end{align}
\end{subequations}

Now, differentiating both sides of the soft Bellman optimality equation for the follower Q-function,
\begin{equation}
    Q_F^{\theta\dagger}(s,f_{\theta}(s),b)=r_F(s,f_{\theta}(s),b)+\gamma_F\int_{\mathcal{S}}p(s'|s,f_{\theta}(s),b)V_F^{\theta\dagger}(s',f_{\theta}(s'))\rmd s',
\end{equation}
with respect to $\theta$, we obtain
\begin{subequations}
\begin{align}
    \nabla_{\theta}Q_F^{\theta\dagger}(s,f_{\theta}(s),b)
    &\begin{multlined}[t]
        =\left(\nabla_{\theta}f_{\theta}(s)^{\T}\nabla_{a}r_F(s,a,b)
        +\gamma_F\int_{\mathcal{S}}\nabla_{\theta}f_{\theta}(s)^{\T}\nabla_{a}p(s'|s,a,b)V_F^{\theta\dagger}(s',f_{\theta}(s'))\rmd s'\right)\bigg|_{a=f_{\theta}(s)}
        \\+\gamma_F\int_{\mathcal{S}}p(s'|s,f_{\theta}(s),b)\nabla_{\theta}V_F^{\theta\dagger}(s',f_{\theta}(s'))\rmd s'
    \end{multlined}\\
    &\begin{multlined}
        =\nabla_{\theta}f_{\theta}(s)^{\T}\nabla_{a}Q_F^{\theta\dagger}(s,a,b)\big|_{a=f_{\theta}(s)}
        \\+\gamma_F\int_{\mathcal{S}} p(s'|s,f_{\theta}(s),b) \int_{\mathcal{B}} g^{\theta\dagger}(b'|s',f_{\theta}(s'))\nabla_{\theta}Q_F^{\theta\dagger}(s',f_{\theta}(s'),b')\rmd b' \rmd s'.
    \end{multlined}\label{eq:nabla_1_Q_F_qr_bbf_dpg_bellman_eq}
\end{align}
\end{subequations}
Equation~\eqref{eq:nabla_1_Q_F_qr_bbf_dpg_bellman_eq}
is a Bellman expectation equation with respect to
$\nabla_{\theta}Q_F^{\theta\dagger}$.
Therefore, by the uniqueness of the solution to the Bellman expectation equation, we obtain
\begin{equation}
    \nabla_{\theta}Q_F^{\theta\dagger}(s,f_{\theta}(s),b)
    =\E_{\tau}\left[\sum_{t=0}^\infty \gamma_F^t\nabla_{\theta}f_{\theta}(s_t)^{\T}\nabla_{a}Q_F^{\theta\dagger}(s_t,a,b_t)\big|_{a=f_{\theta}(s_t)}\middle|\begin{array}{l}s_0=s\\b_0=b\end{array}\right].
    \label{eq:nabla_1_Q_F_qr_bbf_dpg}
\end{equation}
Finally, substituting
Eq.~\eqref{eq:nabla_1_Q_F_qr_bbf_dpg}
into Eq.~\eqref{eq:nabla1_J1_nabla1_Q2_det}, we obtain
\begin{multline*}
    \nabla_{\theta}J_L(\theta,g^{\theta\dagger})
    =\dfrac{1}{1-\gamma_L}\E_d^{\theta\gamma_L}\Bigg[\nabla_{\theta}f_{\theta}(s)^{\T}\nabla_{a}Q_L^{\theta\dagger}(s,a,b)|_{a=f_{\theta}(s)}\\
    +\dfrac{1}{\beta}\left(Q_L^{\theta\dagger}(s,f_{\theta}(s),b)-\E_{b\sim g^{\theta\dagger}(\cdot|s,f_{\theta}(s))}\left[Q_L^{\theta\dagger}(s,f_{\theta}(s),b)\right]\right)\\
    \cdot\E_{\tau}\left[\sum_{t=0}^\infty \gamma_F^t\nabla_{\theta}f_{\theta}(s_t)^{\T}\nabla_{a}Q_F^{\theta\dagger}(s_t,a,b_t)\big|_{a=f_{\theta}(s_t)}\middle|\begin{array}{l}s_0=s\\b_0=b\end{array}\right]\Bigg],
\end{multline*}
which completes the proof.

\end{proof}

\end{document}